\newtheorem{theorem}{Theorem}
\newtheorem{corollary}{Corollary}
\newtheorem{lemma}{Lemma}
\newtheorem{remark}{Remark}
\newtheorem{definition}{Definition}
\newtheorem{assumption}{Assumption}
\newtheorem{proposition}{Proposition}
\newtheorem{example}{Example}
\newtheorem{problem}{Problem}
\renewcommand{\t}{^{\mbox{\tiny\sf T}}}
\newcommand{\bremark}{\begin{remark}
\begin{rm}}
\newcommand{\eremark}{ \end{rm}\hfill \rule{1mm}{2mm}
\end{remark} }
\newcommand{\btheorem}{\begin{theorem} \begin{it}}
\newcommand{\etheorem}{\end{it} \hfill \rule{1mm}{2mm}
\end{theorem} }
\newcommand{\blemma}{\begin{lemma} \begin{it} }
\newcommand{\elemma}{ \end{it} \hfill\rule{1mm}{2mm}
\end{lemma} }
\newcommand{\bcorollary}{\begin{corollary} \begin{it} }
\newcommand{\ecorollary}{ \end{it} \hfill\rule{1mm}{2mm}
\end{corollary} }
\newcommand{\bdefinition}{\begin{definition} }
\newcommand{\edefinition}{ \hfill\rule{1mm}{2mm}
\end{definition} }
\newcommand{\bproposition}{\begin{proposition} }
\newcommand{\eproposition}{\hfill \rule{1mm}{2mm}
\end{proposition} }
\newcommand{\bexample}{\begin{example} \begin{rm}}
\newcommand{\eexample}{ \end{rm} \hfill\rule{1mm}{2mm}
\end{example} }
\newcommand{\bassumption}{\begin{assumption} }
\newcommand{\eassumption}{\hfill \rule{1mm}{2mm}
\end{assumption} }
\newcommand{\balgorithm}{\medskip\begin{algorithm} \rm}
\newcommand{\ealgorithm}{ \hfill \rule{1mm}{2mm}\medskip
\end{algorithm} }
\newcommand{\basm}{\begin{assumption} \begin{rm} }
\newcommand{\easm}{ \end{rm} \hfill\rule{1mm}{2mm}
\end{assumption} }
\begin{document}

\title{Concurrent-Allocation Task Execution for Multi-Robot Path-Crossing-Minimal Navigation in Obstacle Environments}
\author{
\IEEEauthorblockN{
Bin-Bin Hu, Weijia Yao, Yanxin Zhou, Henglai Wei
and
 Chen Lv\IEEEauthorrefmark{1}
}
\thanks{
Bin-Bin Hu, Yanxin Zhou,  and Chen Lv are with the School of
Mechanical and Aerospace Engineering, Nanyang Technological University,
Singapore, 637460 (e-mails: \{binbin.hu, yanxin.zhou, lyuchen\}@ntu.edu.sg). 
}
\thanks{Weijia Yao is with the School of Robotics, Hunan University, Hunan 410082, P.R.~China. (e-mail: wjyao@hnu.edu.cn). 
}
\thanks{Henglai Wei is School of Transportation Science and Engineering, Beihang University, Beijing, China, 100191, P.R.~China. (e-mail: weihenglai@gmail.com). }
}

\maketitle

\begin{abstract}
Reducing undesirable path crossings among trajectories of different robots is vital in multi-robot navigation missions, which not only reduces detours and conflict scenarios, but also enhances navigation efficiency and boosts productivity.
Despite recent progress in {\it multi-robot path-crossing-minimal (MPCM) navigation}, the majority of approaches depend on the minimal squared-distance reassignment of suitable desired points to robots directly. However, if {\it obstacles} occupy the passing space, calculating the actual robot-point distances becomes complex or intractable, which may render the {\it MPCM navigation in obstacle environments} inefficient or even infeasible.

In this paper, the concurrent-allocation task execution (CATE) algorithm is presented to address this problem (i.e., {\it MPCM navigation in obstacle environments}). First, the path-crossing-related elements in terms of (i) robot allocation, (ii) desired-point convergence, and (iii) collision and obstacle avoidance are encoded into integer and control barrier function (CBF) constraints. Then, the proposed constraints are used in an online constrained optimization framework, which implicitly yet effectively minimizes the possible {\it path crossings} and trajectory length in obstacle environments by minimizing the desired point allocation cost and slack variables in CBF constraints simultaneously. 
In this way, the {\it MPCM navigation in obstacle environments} can be achieved with {\it flexible spatial orderings.}
Note that the feasibility of solutions and the asymptotic convergence property of the proposed CATE algorithm in {\it obstacle environments} are both guaranteed, and the calculation burden is also reduced by concurrently calculating the optimal allocation and the control input directly without the path planning process. Finally, extensive simulations and experiments are conducted to validate that the CATE algorithm  (i) outperforms the existing state-of-the-art baselines in terms of feasibility and efficiency in {\it obstacle environments},  (ii) is effective in environments with dynamic obstacles and is adaptable for performing various navigation tasks in 2D and 3D, (iii) demonstrates its efficacy and practicality by 2D experiments with a multi-AMR onboard navigation system, and (iv) provides a possible solution to evade deadlocks and pass through a narrow gap.

\end{abstract}

\begin{IEEEkeywords}
Coordination of mobile robots, path-crossing-minimal navigation, multi-robot systems, obstacle environments
\end{IEEEkeywords}

\IEEEpeerreviewmaketitle

\begin{figure}[!htb]
\centering
\includegraphics[width=6cm]{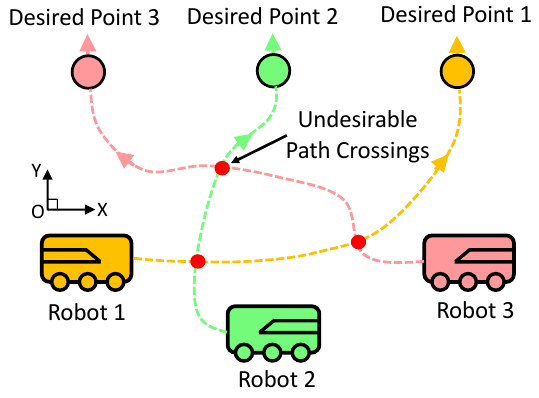}
\caption{Top-down view of the three-robot navigation with fixed-ordering strategies to illustrate the undesirable path crossings.}
\label{Illustration_MPCM}
\end{figure}

\section{Introduction}

In recent years, significant advances have been made in the field of multi-robot navigation research on various application platforms, including unmanned aerial vehicles (UAVs) \cite{yao2021distributed}, unmanned surface vehicles (USVs)~\cite{hu2020multiple}, and autonomous ground vehicles (AGV)~\cite{hang2021cooperative}, leading to multi-robot navigation of higher efficiency, wider ranges, and better resilience. In such a multi-robot navigation problem, robots typically adhere to two subtasks: multi-robot formation maneuvering  (i.e., multiple robots coordinate their motion to form a specific shape and maneuver collectively as a whole), and inter-robot collision avoidance \cite{tanner2005towards,fan2020distributed}. The former involves navigating each robot from its starting position to its desired point to establish the formation pattern and maneuver, whereas the latter focuses on preventing collisions with other robots during the navigation process.

Among these extensive studies, many of them use a fixed-ordering strategy which predetermines a fixed desired point for each robot  \cite{hu2021distributed2, yao2022guiding}. Although such kind of {\it fixed-ordering} strategies are easy to implement, they ignore the undesirable path crossings because the desired points are allocated to fixed robots in advance and robots can only focus on navigating towards the corresponding points. As shown in Fig.~\ref{Illustration_MPCM}, if the desired points are predefined at opposite positions, the robots may get close to each other when they move to their desired points. In this case, since the desired points cannot be changed, the only way for robots to avoid collisions and converge to the desired point is to make a detour or wait, which inevitably causes path crossings, longer trajectories, and more energy consumption~\cite{alonso2017multi,zhou2022swarm}.
To reduce such undesirable path crossings in Fig.~\ref{Illustration_MPCM}, an interesting problem of {\it multi-robot path-crossing-minimal (MPCM) navigation} is raised, which not only involves the traditional navigation between the robots and fixed desired points but also includes the task allocation to reassign desired points to suitable robots, thereby minimizing path crossings and reducing trajectory lengths and energy consumption. For instance, in warehouses, factories, or search operations where multiple robots need to move simultaneously without obstructing each other’s paths, the superiority of {\it MPCM navigation} still shine, despite that the multi-robot navigation problems with collision avoidance
have been extensively explored in multi-agent path finding (MAPF) and multi-agent task assignment (MATA) \cite{li2021lifelong,chen2021integrated}. Precisely, the MAPF primarily focuses on planning collision-free paths from the start locations to the goal locations while minimizing travel time~\cite{li2021lifelong}, which still adheres to fixed-ordering strategies. While the upper-level MATA concentrates more on efficient task allocation and deadlock resolution~\cite{chen2021integrated}, it pays little attention to the lower-level path-finding problems. In contrast to MAPF and MATA, the {\it MPCM navigation} features the common interest of improving efficiency, but approaches it differently by reducing path crossings to simultaneously consider task allocation and path-finding. In this way, the {\it MPCM navigation} not only achieves energy savings by optimizing robot movements but also fosters reliability and smooth operations by reducing conflicts and deadlock.


\subsection{Related works}
\label{sec_related_work}
Existing approaches to achieving {\it MPCM navigation} can be generally categorized into two classes: 1) goal
assignment and motion planning; 2) goal-assignment-free methods~\cite{sun2023mean}.

1) For goal assignment and motion planning, the key idea is to assign suitable desired points to robots because the path crossings are arduous to calculate directly, which thus can only be implicitly reduced in the {\it MPCM navigation} process. In this pursuit, a real-time controller consisting of goal assignment, path planning, and local reciprocal collision avoidance was proposed in \cite{alonso2012image} for shape forming. A coordinated algorithm was designed in \cite{lindsey2012construction} for teams of quadrotor helicopters to construct special cubic structures. However,  these two works~\cite{alonso2012image,lindsey2012construction} feature the decoupled strategy of first assignment and then trajectory planning, which may result in high computational complexity. Later, a decentralized algorithm was designed in \cite{turpin2014capt} for the concurrent assignment and the planning of trajectories (CAPT) for multiple robots. For the elimination of possible conflicts, a conflict-graph-based method was proposed in \cite{sabattini2017optimized}. A clique-based distributed assignment approach was proposed in \cite{sakurama2020multi} to achieve distributed coordination assignment. A recent work~\cite{quan2023robust} proposed a swarm reorganization method to adaptively adjust the formation and task assignments in dense environments. 

In addition to the established works in unstructured space~\cite{alonso2012image,lindsey2012construction,turpin2014capt,sabattini2017optimized,sakurama2020multi,quan2023robust}, significant endeavors have also been devoted to the {\it MPCM navigation} in some spaces with special structures, such as the grid maps. For instance, a fully distributed algorithm was proposed in \cite{wang2020shape} to actively refine the goal assignment to achieve collision-free formation shape. For the integrated task assignment, path planning, and coordination problem, an integrated optimization based on a conflict graph algorithm was designed in \cite{liu2021integrated}. Unfortunately, the assignment in previous {\it MPCM navigation}~\cite{alonso2012image,lindsey2012construction,turpin2014capt,sabattini2017optimized,sakurama2020multi, wang2020shape,liu2021integrated,quan2023robust} heavily relies on the distances between the desired points and robots in open environments, which cannot be adapted to the complicated scenario where obstacles occupy the passing space. Since the calculation of the actual robot-point distances becomes complex or even intractable in such a situation, the solution to the {\it MPCM navigation in obstacle environments} is inefficient or even infeasible.

2) For goal-assignment-free methods in achieving {\it MPCM navigation}, the multi-robot shape is generally achieved with flexible spatial ordering sequences. For instance, a collective algorithm was developed in \cite{rubenstein2014programmable} to govern a swarm with a thousand robots to cooperatively form navigation shapes. For general navigation shapes, a reaction-diffusion network to mimic natural behaviors was proposed in \cite{slavkov2018morphogenesis,reyes2014flocking}. For some special applications of target convoying, considerable efforts have been devoted to constant-velocity and varying-velocity target convoying, respectively~\cite{hu2021bearing,hu2023cooperative}.  For other goal-assignment-free navigation missions in dynamic environments, 
a deep reinforcement learning (DRL) framework was proposed in \cite{han2020cooperative}. A distributed guiding-vector-field algorithm was developed in \cite{huspontaneous2023} to form a {\it flexible-ordering} platoon while maneuvering along a predefined path. Later, it was extended to a {\it flexible-ordering} surface navigation \cite{hu2024coordinated}. A coordinated controller based on mean-shift exploration was designed in \cite{sun2023mean} to empower robot swarms to assemble highly complex shapes with strong adaptability.
Despite that these goal-assignment-free works \cite{rubenstein2014programmable, slavkov2018morphogenesis,reyes2014flocking,hu2021bearing,hu2023cooperative,han2020cooperative,huspontaneous2023,sun2023mean,hu2024coordinated} have the flexibility to some extent, they have not leveraged the {\it flexible-ordering} mechanism to explicitly minimize path crossings due to the arduous calculation of path crossings. Moreover, as the {\it flexible-ordering} coordination often features arbitrary ordering sequences, the same-shape {\it MPCM navigation} can be achieved with various different ordering sequences as well, but it remains unclear how to find the most appropriate one. Once the obstacles are added to the environment, the {\it MPCM navigation} problem will become more intricate, and some goal-assignment-free methods may fail to minimize path crossings.

Therefore, considering the aforementioned shortcomings of potential infeasibility and low efficiency in goal-assignment approaches~\cite{alonso2012image,lindsey2012construction,turpin2014capt,sabattini2017optimized,sakurama2020multi, wang2020shape,liu2021integrated,quan2023robust} and unclear path-crossing-minimal mechanisms in goal-assignment-free approaches~\cite{rubenstein2014programmable, slavkov2018morphogenesis,reyes2014flocking,hu2021bearing,hu2023cooperative,han2020cooperative,huspontaneous2023,sun2023mean,hu2024coordinated}, the challenging problem of {\it MPCM navigation in obstacle environments} still requires further investigation.

\subsection{Contribution} 

Inspired by the minimum-energy task execution with feasibility guarantees in \cite{notomista2019constraint} and task allocation in \cite{notomista2021resilient}, 
we propose the CATE algorithm to achieve efficient {\it MPCM navigation in obstacle environments}. First, the path-crossing-related elements in terms of (i) robot allocation, (ii) desired-point convergence, and (iii) collision and obstacle avoidance are encoded into integer and control barrier function (CBF) constraints. Then, the proposed constraints are used in an online optimization framework, which implicitly yet effectively minimizes the possible {\it path crossings} and trajectory length in obstacle environments by minimizing the desired point allocation and slack variables in CBF constraints simultaneously.  The main contributions are summarized in four-fold.
\begin{enumerate}
  
 \item We formulate the {\it MPCM navigation in obstacle environments} to be an online constrained optimization problem, and design the CATE algorithm which governs a team of robots to achieve efficient {\it MPCM navigation} with an arbitrary spatial ordering.

\item Unlike previous {\it MPCM navigation} works \cite{alonso2012image,lindsey2012construction,turpin2014capt,sabattini2017optimized,sakurama2020multi, wang2020shape,liu2021integrated,quan2023robust,rubenstein2014programmable, slavkov2018morphogenesis,reyes2014flocking,hu2021bearing,hu2023cooperative,han2020cooperative,huspontaneous2023,sun2023mean,hu2024coordinated}, which only focus on open environments and suffer from low navigation efficiency due to additional obstacles, we efficiently achieve the {\it MPCM navigation in obstacle environments} with fewer path crossings and shorten trajectory lengths by minimizing the desired point allocation cost and slack variables in the CBF constraints simultaneously.

\item Compared with the potential infeasibility and low computational efficiency caused by the minimization of possibly intractable robot-point distances in previous goal-assignment and motion-planning approaches \cite{alonso2012image,lindsey2012construction,turpin2014capt,sabattini2017optimized,sakurama2020multi, wang2020shape,liu2021integrated,quan2023robust}, we can guarantee the feasibility of solutions to the problem of {\it MPCM navigation in obstacle environments} by incorporating the minimum-energy task execution framework, and require less calculation burden by concurrently calculating the optimal allocation and the control input directly without path (re)planning.

 \item We conduct extensive simulations and experiments to validate the CATE algorithm that (i) outperforms the existing state-of-the-art baselines in terms of feasibility and efficiency in obstacle environments,  (ii) shows its effectiveness to deal with dynamic obstacles and adaptability for performing various navigation tasks in 2D and 3D, (iii) demonstrates its efficacy and practicality by 2D experiments with a multi-AMR onboard navigation system, and (iv) provides a possible solution to evade deadlocks and pass through a narrow gap.

\end{enumerate}

The remainder of this paper is organized below. Section~\ref{sec_pre} introduces the preliminaries. Then, Section~\ref{sec_problem} formulates the {\it MPCM navigation} problem. Section~\ref{sec_CATE} proposes and illustrates the CATE algorithm. Section~\ref{sec_main} provides the detailed features and convergence analysis of the algorithm. Section~\ref{sec_verification} presents extensive simulation examples and AMR experiments. Finally, Section~\ref{sec_conclusion} concludes the paper.

\textbf{Notations}: The notations $\mathbb{R},\mathbb{R}^+$ denote the real numbers and positive real numbers, respectively. The notation $\mathbb{R}^n$ represents the $n$-dimensional Euclidean space. The notations $\mathbb{Z}$ and $\mathbb{Z}_i^j$ represent the integer number and the integer set $\{m\in \mathbb{Z}~|~i\leq m\leq j\}$, respectively. The Kronecker product is denoted by~$\otimes$. The $n$-dimensional identity matrix is represented by~$I_n$. The $N$-dimensional column vector consisting of all 1 is denoted by $\mathbf{1}_N$.

\section{Preliminaries}
\label{sec_pre}
In this section, we introduce necessary building blocks and definitions for {\it MPCM navigation in obstacle environments}.

\subsection{Multi-Robot Systems}
The first block is a team of $N$ robots with the index set $\mathcal V:=\{1,2, \dots, N\}$, where each robot moves with the single-integrator dynamic \cite{notomista2019constraint}, 
\begin{align}
\label{robot_dynamic}
\dot{\bold{x}}_i=\bold{u}_i, \|\bold{u}_i\|\leq u_{\max}, i\in\mathcal V,
\end{align}
where $ \bold{x}_i:=[x_{i,1}, \dots, x_{i,n}]\in\mathbb{R}^n$ and $\bold{u}_i=[u_{i,1}, \dots, u_{i,n}]\in\mathbb{R}^n$ are the position and input of robot $i$, respectively, and $u_{\max}\in\mathbb{R}^{+}$ is a positive constant. 
Here, the dimension $n\in\mathbb{R}^+$ of $\bold{x}_i, \bold{u}_i$ in \eqref{robot_dynamic} represents the dimension of robot operation in the Euclidean space, which is different from the number of robots $N$. For instance, $n=2$ if robots, such as ground vehicles, and unmanned surface vessels (USVs) \cite{hang2021cooperative,hu2021distributed1} operate in 2D planes, and $n=3$ if robots, such as unmanned aerial vehicles (UAVs) \cite{yao2022guiding} operate in 3D. 
It is worth mentioning that the velocity input $\bold {u}_i$ can be treated as a high-level command when encountering robots of higher-order complicated dynamics. 
Let $\mathcal N_i(t)$ be the sensing neighbor set of robot $i$ as
\begin{align}
\label{neighbor_set}
\mathcal N_i(t)=\{j\in\mathcal V~\big|~\| \bold{x}_i(t)-\bold{x}_j(t) \|\leq R\},
\end{align}
where $t$ is defined to be the time throughout the paper, and $R\in(r,\infty)$ is a sensing radius with $r\in\mathbb{R}^+$ being a specified safe radius. Since $x_i(t)$ is time-dependent, one has that $\mathcal N_i(t)$ in \eqref{neighbor_set} is time-dependent as well. The geographical condition of $R>r$ is commonly used in previous multi-robot works  (see \cite{chen2019cooperative,huspontaneous2023,hu2024fsoc}), which not only ensures that the robot detects other neighboring robots before possible collision, but also activates the neighboring collision-avoidance CBF constraints to avoid collision \cite{wang2017safety,hu2024ordering}. Moreover, with such a condition, even if another robot $j\neq i$ is getting too close to robot $i$ (i.e., collide), it follows from $\mathcal N_i(t)$ that $\|\bold{x}_i(t)-\bold{x}_j(t)\|\leq r<R$, which implies that robot $j$ is still a neighbor of robot $i$.


\subsection{Desired Points}
\label{second_block}
The second block is $N$ desired points of the navigation formation, where each point $\bold{x}_i^d\in\mathbb{R}^n$ satisfies
\begin{align}
\label{desired_velocity}
\dot{\bold{x}}_i^d=\bold{v}_d(t),  i\in\mathcal V,
\end{align}
with $\bold{v}_d(t)\in\mathbb{R}^n$ being a predefined  constant or non-constant desired velocity. 
Here, the superscript and subscript $d$ in $\bold{x}_i^d, \bold{v}_d(t)$ is defined to represent the positions and velocities of desired points, which is used to distinguish from those of robots. $\bold{v}_d(t)$ is assumed to be continuously differentiable. Since $\bold{v}_d(t)$ in~\eqref{desired_velocity} is the same for all desired points, the desired navigation maintains a rigid formation. Note that 
$\|\bold{v}_d(t)\|< u_{\max}$ with the limit $u_{\max}$ given in \eqref{robot_dynamic}, which implies that all robots can catch up and form the desired formation. Moreover, the desired formation can maneuver with a varying or constant velocity by setting $\dot{\bold{v}}_d(t)\neq0$ or $\dot{\bold{v}}_d(t)=0$.

\begin{figure}[!htb]
\centering
\includegraphics[width=6cm]{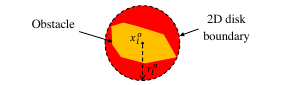}
\caption{Simple interpretation of a 2D disk covering complex irregular obstacles.}
\label{Obstacle_boundary}
\end{figure}

\begin{figure}[!htb]
\centering
\includegraphics[width=7.2cm]{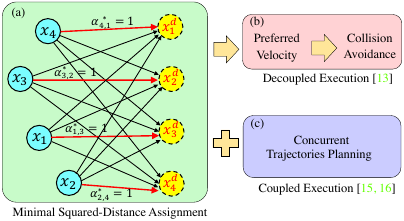}
\caption{Two existing methods for achieving navigation with non-intersecting paths. (a, b) Decoupled execution of the minimal squared-distance assignment, preferred-velocity calculation, and inter-robot collision avoidance in order \cite{alonso2012image}.  (a, c) Concurrent execution of the minimal squared-distance assignment and collision-free trajectory planning~\cite{turpin2014capt,sabattini2017optimized}. The blue and yellow circles denote the initial positions of robots and the final positions of desired points, respectively. The black and red arrows denote the non-selected and optimal robot-point pairs, respectively. $\alpha_{i,k}^{\ast}=1$ the correspondence between robot~$i$ and desired point $k$ in~\eqref{eq_con_a_poi_a_rob}.  }
\label{previous_PCM}
\end{figure}


\subsection{Allocation Matrix}
\label{subsec_allo}
With the robots in \eqref{robot_dynamic} and the desired points in \eqref{desired_velocity}, the third block is to define an allocation matrix $\bm\alpha:=[\bm\alpha_1, \cdots, \bm\alpha_N]\t\in\mathbb{R}^{N\times N}$ with the $i$-th column vector $\bm\alpha_i:=[\alpha_{i,1}, \cdots, \alpha_{i,k}, \cdots, \alpha_{i,N}]\t\in\mathbb{R}^{N}$~\cite{notomista2021resilient}. Here, each entry $\alpha_{i,k}, i\in\mathcal V, k\in\mathbb{Z}_1^N$ satisfies
\begin{equation}
\label{robot_i_entry_priority}
\alpha_{i,k}=\left\{
\begin{aligned}
1, & &\mathrm{if~desired~point}~{\it k}~\mathrm{ is~allocated~to~robot~{\it i},}\\
0, & &\mathrm{otherwise.}
\end{aligned}
\right.
\end{equation}
To achieve the required pattern, we define $\bold{1}_N=[1, \cdots, 1]\t\in\mathbb{R}^N$ and require $\bm\alpha$ to fulfill the following two conditions:
\begin{enumerate}
\item \label{con_a_rob_a_poi}  Each robot tracks only one desired point, i.e., 
\begin{align}
\label{eq_con_a_rob_a_poi}
\bm\alpha \bold{1}_N=\bold{1}_N.
\end{align}

\item \label{con_a_poi_a_rob} Each desired point is allocated to only one robot, i.e.,
\begin{align}
\label{eq_con_a_poi_a_rob}
\bold{1}_N\t \bm\alpha =\bold{1}_N\t.
\end{align}
\end{enumerate}
The conditions in \eqref{eq_con_a_rob_a_poi} and \eqref{eq_con_a_poi_a_rob} stipulate a basic constraint of the one-to-one correspondence between robots and desired points. There still exist $N!$ candidates to be explored to find the optimal $\bm\alpha^{\ast}$, which thus offers flexibility in terms of ordering sequences in {\it MPCM navigation}.

\subsection{Obstacles}
The fourth block is a set of $M$ finite-size obstacles $\mathcal O_{all}=\{\mathcal O_l\subseteq \mathbb{R}^n~|~l\in\mathcal M\}$ with the index set $\mathcal M=\{1,2, \dots, M\}$. 

\begin{figure}[!htb]
\centering
\includegraphics[width=5.5cm]{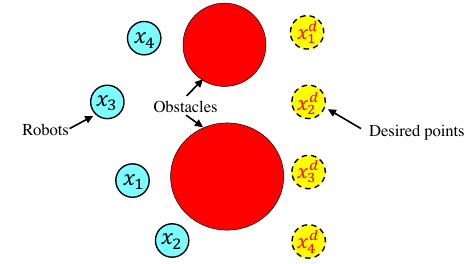}
\caption{Failure of the minimal squared-distance assignment between robots and desired points in Fig.~\ref{previous_PCM}  when encountering obstacles occupying the space.}
\label{failed_assigment}
\end{figure}

To prevent the complicated calculation of specific-shape obstacles, we choose $\bold{x}_l^o \in \mathbb{R}^n$ and $r_l^o \in\mathbb{R}^+$ such that the obstacle $\mathcal O_l$ is contained in the 2D disk/ 3D ball centered at $\bold{x}_l^o$ with radius $r_l^o$:
\begin{align}
\label{obstacle_boundary}
\mathcal O_l\subseteq \{\bm{\sigma}_l\in\mathbb{R}^n~|~ \|\bm{\sigma}_l- \bold {x}_l^o\|\leq r_l^o\}, l\in\mathcal{M},
\end{align} 
where $\bm\sigma_l:=[\sigma_{l,1}, \sigma_{l,2}, \cdots, \sigma_{l,n}]\t$ are the coordinates. Then, avoiding collision with $\mathcal O_l$ can be simplified to avoiding    
collisions with the prescribed disks/balls, as shown in Fig.~\ref{Obstacle_boundary}.

\begin{remark}
The obstacle element is necessary, yet challenging in the {\it MPCM navigation} problem. As shown in Fig.~\ref{previous_PCM}, most of the previous works~\cite{alonso2012image,turpin2014capt,sabattini2017optimized} have investigated the scenario of generating non-intersecting paths, which focused on finding the optimal allocation matrix $\bm\alpha^{\ast}$ in \eqref{robot_i_entry_priority} by minimizing the sum of squared distance between arbitrary robots $\bold{x}_i$ and desired goals $\bold{x}_k^d$ subject to two conditions~\eqref{eq_con_a_rob_a_poi}, \eqref{eq_con_a_poi_a_rob},
i.e., $\bm\alpha^{\ast}:=\mathop{\arg\min}\limits_{\bm\alpha}\sum_{i=1}^{N}\sum_{k=1}^N\alpha_{i,k}\| \bold{x}_i-\bold{x}_k^d\|^2,~\mathrm{s.t.}~\bm\alpha \bold{1}_N=\bold{1}_N, \bold{1}_N\t \bm\alpha =\bold{1}_N\t,$
and then use $\bm\alpha^{\ast}$ to (re)-plan collision-free trajectories in a decoupled \cite{alonso2012image} or coupled manner \cite{turpin2014capt,sabattini2017optimized}. However, when obstacles occlude the space between robots and desired points in Fig.~\ref{failed_assigment}, such kind of assignment-planning methods \cite{alonso2012image,turpin2014capt,sabattini2017optimized} may fail to work because the actual robot-point distances become sophisticated or intractable to calculate, which may cause deadlock and infeasible solutions.
\end{remark}

\begin{remark}
The reason why we utilize 2D disks or 3D balls in~\eqref{obstacle_boundary} is that it is simple to design smooth boundaries to enclose the irregular-shape obstacles. 
Although such disks/balls may occupy some additional space, it can be seamlessly incorporated into the minimum-energy task execution in Definition~\ref{definition_minimum_energy} to guarantee the feasibility of {\it MPCM navigation} in obstacle environments. 
Moreover, the use of disks/balls can also be replaced by ellipses/ellipsoids to save spaces, such as $\mathcal O_l\subseteq \{[\sigma_{l,1}, \sigma_{l,2}]\in\mathbb{R}^2~|~{(\sigma_{l,1}-x_{l,1}^o)^2}/{(r_{l,1}^o)^2} + {(\sigma_{l,2}-x_{l,2}^o)^2}/{(r_{l,2}^o)^2} \leq1\}, l\in\mathcal{M},$ and the 3D ellipsoid: $\mathcal O_l\subseteq \{[\sigma_{l,1}, \sigma_{l,2}, \sigma_{l,3}]\in\mathbb{R}^3~|~{(\sigma_{l,1}-x_{l,1}^o)^2}/{(r_{l,1}^o)^2} + {(\sigma_{l,2}-x_{l,2}^o)^2}/{(r_{l,2}^o)^2} +  {(\sigma_{l,2}-x_{l,3}^o)^2}/{(r_{l,3}^o)^2}   \leq1\}, l\in\mathcal{M}$ with $x_{l,1}\in\mathbb{R}, x_{l,2}\in\mathbb{R}, x_{l,3}\in\mathbb{R}$ and  $r_{l,1}\in\mathbb{R}^+, r_{l,2}\in\mathbb{R}^+, r_{l,3}\in\mathbb{R}^+$ being their centers and lengths of the semi-axes, respectively. For simplicity, we use disks and balls such that we can focus on the core idea of the obstacle avoidance tasks in {\it MPCM navigation in obstacle environments}. 
\end{remark}

\subsection{Minimum-Energy Task Execution}
The last block is the minimum-energy task execution, which will be utilized to implicitly prevent calculating the sophisticated robot-point distances.
Before proceeding, suppose the desired task $\mathcal T\subseteq\mathbb{R}^n$ is characterized by a target set~\cite{notomista2019optimal}
\begin{align}
\label{task_set}
\mathcal T=\{\bm\sigma\in\mathbb{R}^n~\big|~\phi(\bm\sigma)\leq 0\},
\end{align}
where $\phi(\bm\sigma): \mathbb{R}^n\rightarrow\mathbb{R}$ is a twice continuously differentiable function with the coordinates $ \bm\sigma:=[\sigma_{1}, \sigma_{2}, \cdots, \sigma_n]\t$.
Firstly, one can use $\phi(\bm\sigma)$ to describe the target sets conveniently for different tasks. 
Secondly, one can utilize $\phi(\bold{p}_0)$ to approximate the sophisticated projected distance $\mathrm{dist}(\bold{p}_0, \mathcal T):=\mathrm{inf}\{\|\bold{p}-\bold{p}_0\|~\big|~p\in\mathcal T\}$ between a point $\bold{p}_0\in\mathbb{R}^n$ and the target set $\mathcal T$. In addition,  one can use $\phi(\bm\sigma)$ as a control barrier function (CBF)~\cite{ames2016control}, which is an essential ingredient of the minimum-energy task execution in~Definition~\ref{definition_minimum_energy}.


\begin{definition}
\label{definition_minimum_energy}
(Minimum-energy task execution) \cite{notomista2019constraint}. For a desired task $\mathcal T$ given in \eqref{task_set}, a team of robots $\mathcal V$ governed by~\eqref{robot_dynamic} achieves
the minimum-energy task execution if the following constrained optimization problem is solved, i.e., 
\begin{subequations}
\label{constraint_minimization}
\begin{align}
& \min_{\bold{u}_i, \bm\delta_i}~\|\bold{u}_i\|^2+|\bm\delta_i|^2\\
\mathrm{s.t.}~& \frac{\partial h(\bold{x}_i)}{\partial \bold{x}_i\t}{\bold{u}_i}\geq-\gamma\big(h(\bold{x}_i)\big)-\bm\delta_i, i\in\mathbb{Z}_1^N,
\end{align}
\end{subequations}
where $h(\bold{x}_i)=-\phi(\bold{x}_i)$ satisfying the constraint in (\ref{constraint_minimization}b) is a CBF \cite{ames2016control}, $\bm\delta_i\in\mathbb{R}^+$ is a slack variable to measure the extent of task violation, and $\gamma(\cdot): (-b, a)\rightarrow\mathbb{R}$ is an extended class $\mathcal K$ function with $a, b\in\mathbb{R}^+$, which is strictly increasing and $\gamma(0)=0$~\cite{ames2016control}. Here, $\gamma\big(h(\bold{x}_i)\big)$ can regulate how fast the robots approach the boundary of the target set in~\eqref{task_set}.
\end{definition}

In Definition~\ref{definition_minimum_energy}, it is straightforward that the optimization in \eqref{constraint_minimization} always contains a feasible solution $\{\bold{u}_i^{\ast}=0, \bm\delta_i \mbox{ is sufficiently large}\}$, which guarantees the feasibility of {\it MPCM navigation in obstacle environments}. Moreover, it has been shown in \cite{notomista2019constraint,xu2015robustness} that the asymptotic convergence of the minimum-energy task execution \eqref{constraint_minimization} is achieved, i.e.,  
\begin{align}
\label{convergence_property}
\bold{x}_i(0)\notin \mathcal T \Rightarrow  \bold{x}_i(t)\in \mathcal T, t\rightarrow\infty,
\end{align}
if the optimal input $\bold{u}_i^{\ast}(\bold{x}_i)$ in~\eqref{constraint_minimization} are locally Lipschitz  continuous with respect to its arguments. 
By encoding $\phi(\bold{x}_i)$ into the CBF constraint in~\eqref{constraint_minimization} rather than minimizing $\phi(\bold{x}_i)$ in the cost function \cite{emam2021data}, Definition~\ref{definition_minimum_energy} ensures the forward invariance of the task set $\mathcal T$ \cite{ames2016control}, i.e., 
\begin{align}
\label{invariant_property}
\bold{x}_i(t_1)\in \mathcal T \Rightarrow  \bold{x}_i(t)\in \mathcal T, \forall t>t_1,
\end{align}
with a constant time $t_1\in\mathbb{R}^+$ and locally Lipschitz continuous input $\bold{u}_i^{\ast}(\bold{x}_i)$ as well. 

\section{Problem formulation}
\label{sec_problem}
Using the necessary blocks in~Section~\ref{sec_pre}, we can introduce the problem of  {\it MPCM navigation in obstacle environments}.

\begin{definition}
\label{definition_MPCM}
({\it MPCM navigation in obstacle environments}) A~team of robots $\mathcal V$ governed by~\eqref{robot_dynamic} collectively form the path-crossing-minimal navigation to the desired points governed by \eqref{desired_velocity} in obstacle environments with~\eqref{obstacle_boundary}, if the following properties are fulfilled, 
\begin{itemize}
\item  {\bf P1 (Crossing Minimum):}  \label{P_1}All robots $\mathcal V$ minimize the path crossings during the navigation process in obstacle environments, i.e.,  $\min |\mathcal C|$, where the set $\mathcal C$ containing all path crossings satisfies $\mathcal C=\cup_{i\in\mathcal V}\{ \bold{x}_i(t_1)\in\mathbb{R}^n~\big|~\exists j\neq i\in\mathcal V, t_1, t_2\in\mathbb{R}^+, \mathrm{such~that~} \bold{x}_i(t_1)=\bold{x}_j(t_2)\} $
with~$\bold{x}_i, i \in \mathcal V$~being~the~trajectory~of~the~$i$-th~robot and $t_1, t_2$ the different time instances.

\item\label{P_2} {\bf P2 (Convergence and maneuvering):} All robots converge to their desired points and maneuver with the desired velocity $\bold{v}_d$ in \eqref{desired_velocity}, i.e., $\lim_{t\rightarrow\infty} \{\bold{x}_i(t)-\bold{x}_k^d(t)\}$ $=0, \lim_{t\rightarrow\infty}\dot{\bold{x}}_i(t)=\bold{v}_d, i\in\mathbb{Z}_1^N,$
where $k$ is the index for which $\alpha_{i,k}=1$ in \eqref{robot_i_entry_priority}.

\item \label{P_3} {\bf P3 (Collision avoidance):} The collision avoidance among neighboring robots is guaranteed all along, i.e., 
$\|\bold{x}_i(t)-\bold{x}_j(t)\|\geq r, \forall t>0, i\in\mathbb{Z}_1^n, j\in\mathcal N_i(t),$
where $r\in\mathbb{R}^+$ is a specified safe radius, and $\mathcal N_i$ is the sensing neighbor set of robot $i$ given in~\eqref{neighbor_set}.

\item {\bf P4 (Obstacle avoidance):} \label{P_4}  The collision between arbitrary robot and obstacles is avoided all along, i.e., 
$\|\bold{x}_i(t)-\bold{x}_l^o(t)\|\geq r+r_l^o, \forall t>0, i\in\mathbb{Z}_1^n, l\in\mathcal M,$
where $\bold{x}_l^o$, and $r_l^o$ are given in \eqref{obstacle_boundary}.

\end{itemize}
\end{definition}

In Definition~\ref{definition_MPCM}, the path crossings $\mathcal C$ in {\bf P1} are spatial, since trajectory crossings happened at different time instances are also counted.  For \textbf{P2-P4}, they constitute the fundamental criteria for multi-robot navigation.


\begin{problem}
\label{label_pro}
Design the allocation matrix and velocity $\{\bm\alpha, \bold{u}_i\}$ for each robot $i$ according to $\bold{x}_i, \bold{x}_i^d, \bold{v}_d, i\in\mathcal V, r_l^o, \bold{x}_l^o, l\in\mathcal M$ in \eqref{robot_dynamic}, \eqref{desired_velocity}, and \eqref{obstacle_boundary} such that the {\it MPCM navigation in obstacle environments} is achieved, i.e., \textbf{P1-P4} in Definition~\ref{definition_MPCM}.
\end{problem}

To address Problem~\ref{label_pro}, several assumptions are required.

\begin{itemize}
\item {\bf A1:} \cite{yao2021singularity} For any $\kappa\in\mathbb{R}^+$, it holds that $\mathrm{inf}\{\phi(\bold{p}_0)~\big|~\mathrm{dist}($ $\bold{p}_0, \mathcal T)\geq\kappa, p_0 \in \mathbb{R}^n \}>0$, where $\mathcal T$ generally refers to the desired target set discussed later.


\item {\bf A2:}  The initial distance between arbitrary two desired points satisfies $\|\bold{x}_i^d(0)-\bold{x}_j^d(0)\|> r, i\neq j\in\mathcal V,$ with $r$ given below \eqref{neighbor_set}.

\item {\bf A3:}  The initial distance between arbitrary two robots satisfies $\|\bold{x}_i(0)-\bold{x}_j(0)\|> r, \forall i\neq j\in\mathcal V.$

\item {\bf A4:}  The initial distance between arbitrary robot $i$ and obstacle $l$ satisfies $\|\bold{x}_i(0)-\bold{x}_l^o(0)\|>r+r_l^o, \forall i\in\mathcal V, l\in\mathcal O$, with $r_l^o$ given in~\eqref{obstacle_boundary}.

\item {\bf A5:}  The sensing radius $R$ in \eqref{neighbor_set} and safe radius $r$ satisfy $R\geq r+\epsilon,$ with an arbitrary positive constant $\epsilon\in\mathbb{R}^+$.

\item {\bf A6:}  For each robot $i$, there exists a continuously differentiable estimator $\widehat{\bold{v}}_i$ for the desired velocity $\bold{v}_d$ in~\eqref{desired_velocity} such that $\lim_{t\rightarrow\infty}\{\widehat{\bold{v}}_i(t)-\bold{v}_d(t)\}=0, \|\widehat{\bold{v}}_i\|\leq u_{\max}$.

\end{itemize}

{\bf A1} rigorously guarantees that $\lim_{t\rightarrow\infty}|\phi(\bold{p}_0(t))|$ $=0\Rightarrow \lim_{t\rightarrow\infty}\mathrm{dist}(\bold{p}_0(t), \mathcal T)=0$. {\bf A2} prevents inter-robot collisions if the desired multi-robot navigation is formed. {\bf A3} and {\bf A4} are necessary conditions for {\bf P3} and {\bf P4}, respectively. 
 {\bf A5} is reasonable and intuitive in practice.
{\bf A6} is a common estimator if a small proportion of robots have access to $\bold{v}_d$~\cite{hong2006tracking,hu2021distributed1}.

\begin{figure}[!htb]
\centering
\includegraphics[width=7.4cm]{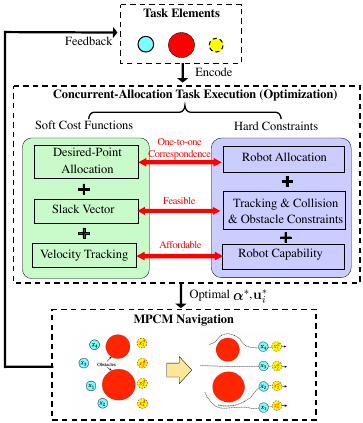}
\caption{Illustration of the core idea of the CATE optimization for the {\it MPCM navigation in obstacle environments}.}
\label{Reactive_PCM}
\end{figure}
\section{CATE Algorithm}
\label{sec_CATE}
In this section, we will introduce the CATE algorithm to address Problem~\ref{label_pro}. A high-level overview of the algorithm is given in Fig.~\ref{Reactive_PCM}, where the core idea is to encode all elements of robots, desired points, and obstacles into subtasks of a concurrent-allocation task execution optimization framework. Therein, the optimization consists of two parts, namely, the soft cost functions and the hard constraints, where the cost functions consist of desired-point allocation, slack vectors, and velocity tracking, and the hard constraints involve robot allocation, tracking \& robot collision \& obstacle avoidance, and robot capability. Firstly, the desired-point allocation and robot allocation are combined to achieve a flexible one-to-one robot-point mapping. Secondly, the slack vectors guarantee the feasibility of the desired-point tracking and robot collision \& obstacle avoidance CBF constraints, which govern robots to converge to the suitable desired points in obstacle environments. Thirdly, the velocity tracking is set to achieve the navigation maneuvering. Finally, the concurrent calculation of the optimal allocation matrix $\bm\alpha^{\ast}$ and inputs $\bold{u}_i^{\ast}$ will be executed to achieve {\it MPCM navigation} in obstacle environments.


\label{sec_main}

\subsection{Encoding CBF Constraints}

Suppose the desired-point convergence task $\mathcal T_{i,k}^{[1]}$ between robot~$i, i\in\mathcal V,$ and the desired point $\bold{x}_k^d, k\in\mathcal V,$ in~\eqref{desired_velocity} is 
\begin{align}
\label{task_point_i}
\mathcal T_{i,k}^{[1]}=&\big\{\bm\sigma_i:=[\sigma_{i,1}, \cdots, \sigma_{i,n}]\t\in\mathbb{R}^n\nonumber\\
&~\big|~\phi_{i,k}^{[1]}(\bm\sigma_i):=\| \bm\sigma_i-\bold{x}_k^{d}\|\leq 0\big\}, i\in\mathcal V,
\end{align}
where $\bm\sigma_i$ are the coordinates, $\phi_{i,k}^{[1]}$ represents a distance function between robot~$i$ and desired point $k$, and $\mathcal T_{i,k}^{[1]}$ in~\eqref{task_point_i} only contains a point $\bm\sigma_i=\bold{x}_k^{d}$. Substituting the position $\bold{x}_i$ of the $i$-th robot for $\bm\sigma_i$ into $\mathcal T_{i,k}^{[1]}$ in~\eqref{task_point_i}, the characterizing function of the $k$-th desired-point convergence task $\mathcal T_{i,k}^{[1]}$ becomes
\begin{align}
\label{err_point_i}
\phi_{i,k}^{[1]}(\bold{x}_i)=\|\bold{x}_i-\bold{x}_k^{d}\|, i\in\mathcal V.
\end{align}
It follows from~\eqref{robot_dynamic} and Definition~\ref{definition_minimum_energy} that the CBF constraint of desired-point convergence is
\begin{align}
\label{task_1_CBF_constraint}
&\frac{\partial\phi_{i,k}^{[1]}(\bold{x}_i)}{\partial  \bold{x}_i\t} (\bold{u}_i-\bold{\widehat{v}}_i)+\gamma\big(\phi_{i,k}^{[1]}( \bold{x}_i)\big)\nonumber\\
&-(1-\alpha_{i,k}) \varpi-\delta_{i,k}\leq 0, \;k\in\mathbb{Z}_1^N,
\end{align}
where $\delta_{i,k}\in\mathbb{R}^+$ denotes the slack variable, $\varpi\in\mathbb{R}^+$ is the penalty term for $\alpha_{i,k}=0$, $\widehat{\bold{v}}_i$ represents the $i$-th velocity estimator for $\bold{v}_d$ in \textbf{A6}, and $\gamma(\cdot)$ is the extended class $\mathcal K$ function in~Definition~\ref{definition_minimum_energy}.

Suppose the neighboring collision-avoidance task $\mathcal T_{i,j}^{[2]}$ between robot $i$ and the neighboring robot~$j, j\in\mathcal N_i,$  is 
\begin{align}
\label{task_collision_i}
\mathcal T_{i,j}^{[2]}=&\big\{ \bm\sigma_i:=[\sigma_{i,1}, \cdots, \sigma_{i,n}]\t\in\mathbb{R}^n\nonumber\\
				&~\big|~\phi_{i,j}^{[2]}( \bm\sigma_i)=r-\|\bm\sigma_i-\bold{x}_j\|\leq0\big\}
\end{align}
with $r$ given below~\eqref{neighbor_set}.
Replacing $\bm\sigma_i$ by the $i$-th robot's position $x_i$ in $\phi_{i,j}^{[2]}(\bm\sigma_i)$, one has that $\bold{x}_i\in \mathcal T_{i,j}^{[2]}$ if and only if $\|\bold{x}_i-\bold{x}_j\|\geq r$. Analogously, we can define the collision-avoidance CBF constraint to be 
\begin{align}
\label{task_2_CBF_constraint}
\frac{\partial\phi_{i,j}^{[2]}(\bold{x}_i)}{\partial \bold{x}_i\t} (\bold{u}_i-\widehat{\bold{v}}_i)+\gamma\big(\phi_{i,j}^{[2]}(\bold{x}_i)\big)\leq0,\;j\in\mathcal N_i(t).\
\end{align}
Since the constraint (\ref{task_2_CBF_constraint}) does not contain slack variables, one has that the collision avoidance of $\|\bold{x}_{i}(t)-\bold{x}_{j}(t)\|>r$ can be rigorously guaranteed via the forward-invariance property in~\eqref{invariant_property} later. For non-neighboring robot $j\notin \mathcal N_i$, it follows from $\mathcal N_i$ in \eqref{neighbor_set} that the collision avoidance is satisfied.

Suppose the obstacle-avoidance task $\mathcal T_{i,l}^{[3]}$ between robot $i, i\in\mathcal V,$ and obstacle $l, l\in\mathcal M$ is described by
\begin{align}
\label{task_obstacle_i}
\mathcal T_{i,l}^{[3]}=&\big\{ \bm\sigma_i:=[\sigma_{i,1}, \cdots, \sigma_{i,n}]\t\in\mathbb{R}^n\nonumber\\
				&~\big|~\phi_{i,l}^{[3]}( \bm\sigma_i):=r+r_l^o-\|\bm\sigma_i-\bold{x}_l^o\|\leq0\big\},
\end{align}
where $r$ is given below~\eqref{neighbor_set}, and $r_l^o, \bold{x}_l^o$ are defined in \eqref{obstacle_boundary}, respectively. The characterzing function $\phi_{i,l}^{[3]}(\sigma_i, \bold{x}_l^o)\le 0$ represent all points which keep a safe distance $r$ to obstacle $l$. Then, the corresponding obstacle-avoidance CBF constraint becomes 
\begin{align}
\label{task_3_CBF_constraint}
\frac{\partial\phi_{i,l}^{[3]}(\bold{x}_i, \bold{x}_l^o)}{\partial \bold{x}_i} (\bold{u}_i-\widehat{\bold{v}}_i)+\gamma\big(\phi_{i,l}^{[3]}( \bold{x}_i, \bold{x}_l^o)\big)\leq0, l\in\mathcal O,
\end{align}
which also guarantees collision avoidance.

\subsection{Constrained Optimization}
In this subsection, combining the allocation conditions and CBF constraints in Eqs.~\eqref{eq_con_a_rob_a_poi}, \eqref{eq_con_a_poi_a_rob},~\eqref{task_1_CBF_constraint}, \eqref{task_2_CBF_constraint}, \eqref{task_3_CBF_constraint} together, we formulate the {\it MPCM navigation in obstacle environments} to be a constrained optimization problem for robot~$i$, below, 
\begin{subequations}
\label{resilient_prioritization_optimization}
\begin{align}
&\min\limits_{\bold{u}_i,\bm\delta_{i}, \bm\alpha} b\|\bold{1}_N\t \bm\alpha -\bold{1}_N\t \|^2+c\|\bm\delta_{i}\|^2+\|\bold{u}_i-\widehat{\bold{v}}_i\|^2\\
\mathrm{s.t.}~&\mathrm{Eq}.~\eqref{task_1_CBF_constraint}: \mathrm{Desired~point~convergence}, \\
&\mathrm{Eq}.~\eqref{task_2_CBF_constraint}: \mathrm{Neighboring~collision~avoidance},\\
&\mathrm{Eq}.~\eqref{task_3_CBF_constraint}: \mathrm{Obstacle~avoidance},\\
&\mathrm{Eq}.~\eqref{eq_con_a_rob_a_poi}: \mathrm{Robot~allocation~(i.e.,}~ \bm\alpha_i\t\mathbf{1}_N=1),\\
&\mathrm{Eq}.~\eqref{robot_dynamic}: \mathrm{Input~limitation~(i.e.,}~\|\bold{u}_i\|\leq u_{\max})
\end{align}		
\end{subequations}
where $b,c\in\mathbb{R}^+$ are the weights for different cost terms in~(\ref{resilient_prioritization_optimization}a), $\bm\alpha=[\bm\alpha_1, \cdots, \bm\alpha_N]\t$ are the allocation matrix with the $i$-th column vector $\bm\alpha_i$ given in~Section~\ref{subsec_allo}, $\bm\delta_i=[\delta_{i,1}, \dots, \delta_{i,N}]\t\in\mathbb{R}^{N+}$ denotes the slack vector, $\bold{u}_i$ is the input of robot $i$ in~\eqref{robot_dynamic}, $\widehat{\bold{v}}_i$ is the $i$-th velocity estimator for $\bold{v}_d$ given in~\eqref{task_1_CBF_constraint}. 
The cost function (\ref{resilient_prioritization_optimization}a) contains three terms, namely, the desired-point allocation error $(\bold{1}_N\t \bm\alpha -\bold{1}_N\t)$ in~\eqref{eq_con_a_poi_a_rob}, the input error $\bold{u}_i-\widehat{\bold{v}}_i$, and the slack vector $\|\bm\delta_{i}\|$ for $N$ desired-point convergence tasks simultaneously. Note that the weights $b, c$ in (\ref{resilient_prioritization_optimization}a) are set to be $b\gg c\gg 1$ to describe the importance of the three terms, which will be utilized to guarantee that one point is tracked by only one robot later.

Note that the desired-point allocation errors $(\bold{1}^T_N\bm\alpha-\bold{1}^T_N)$ in \eqref{resilient_prioritization_optimization} are difficult to encode as constraints for each robot $i, i\in\mathcal V$ to execute. Particularly, since the CATE optimization \eqref{resilient_prioritization_optimization} and all the constraints (\ref{resilient_prioritization_optimization}b)-(\ref{resilient_prioritization_optimization}f)  are executed separately for each robot, one has that any additional constraints must be encoded to be the form that each robot can execute as well. However, it follows from $\bm\alpha=[\bm\alpha_1, \cdots, \bm\alpha_N]\t$ and  $\bm\alpha_i=[\alpha_{i,1}, \cdots, \alpha_{i, N}]\t$ that such errors $(\bold{1}^T_N\bm\alpha-\bold{1}^T_N)$  can only be separated for each desired point rather than each robot, i.e., $\sum_{j=1}^N\alpha_{i,j}=1, i\in\mathcal V$, which thus cannot be encoded as a constraint for each robot. As a remedy, it is a common approach to formulate such complex desired-point allocation errors as cost terms, which can avoid the complexity of nonlinear constraints and simplify the expression \cite{ben2001lectures}. Moreover, by treating $(\bold{1}^T_N\bm\alpha-\bold{1}^T_N)$ as a cost term, the weights $b, c$ in \eqref{resilient_prioritization_optimization} can be flexibly adjusted, allowing for better balance and prioritization of different objectives, thereby improving the efficiency of the solution. Additionally, such a design in the cost term leverages the stronger adaptability of optimization solvers, such as Gurobi \cite{gurobi}, in handling objective functions, which avoids instability and enhances robustness.

If the $k$-th desired point is allocated to the $i$-th robot, then the constraints~(\ref{resilient_prioritization_optimization}e) indicate that $\alpha_{i,k}=1$ and $\alpha_{i,q}=0, \forall q\neq k \in\mathbb{Z}_1^N$, it follows from \eqref{task_1_CBF_constraint} that $(1-\alpha_{i,k})\varpi=0$ and $(1-\alpha_{i,q})\varpi=\varpi$, for $q \ne k$, which implies that (\ref{resilient_prioritization_optimization}b) can be divided into two subgroups: the $k$-th one is the most strict, while the remaining $N-1$ ones are more relax because of the large penalty $\varpi$, i.e.,
\begin{align}
\label{resilient_prioritization_constraint_1}
&\frac{\partial\phi_{i,k}^{[1]}( \bold{x}_i)}{\partial \bold{x}_i\t} ( \bold{u}_i-\widehat{\bold{v}}_i)+\gamma\big(\phi_{i,k}^{[1]}(\bold{x}_i)\big)-\delta_{i,k}\leq 0,\nonumber\\
&\frac{\partial\phi_{i,q}^{[1]}(\bold{x}_i)}{\partial \bold{x}_i\t} (\bold{u}_i-\widehat{\bold{v}}_i)+\gamma\big(\phi_{i,q}^{[1]}( \bold{x}_i)\big)-\delta_{i,q}-\varpi\leq 0, \forall q \ne k.
\end{align}
In this way, if $\delta_{i,k}, k\in\mathbb{Z}_1^N,$ are minimized to be zeros,  it follows from the asymptotic-convergence property in~\eqref{convergence_property} that robot $i$ only converges to the allocated $k$-th desired point $\bold{x}_k^d$.

\begin{algorithm}
\caption{CATE Optimization for {\it MPCM Navigation in Obstacle Environments}}
\label{algorithm1}
\KwData{} 
The initial states of robot $i, i\in\mathcal V$: $\bold{x}_i$ in \eqref{robot_dynamic}\; 
The states of desired points: $\bold{x}_i^d, \bold{v}_d$ in \eqref{desired_velocity}\;
The center and radius of obstacles: $r_l^o, \bold{x}_l^o, l\in\mathcal M$ in~\eqref{obstacle_boundary}\;
Task characterizing functions: $\phi_{i,k}^{[1]}, k\in\mathbb{Z}_1^N, \phi_{i,j}^{[2]}, j\in\mathcal N_i, \phi_{i,l}^{[3]}, l\in\mathcal M$ in \eqref{err_point_i}, \eqref{task_collision_i} and \eqref{task_obstacle_i}\;
An arbitrary initial allocation matrix: $\bm\alpha$ in \eqref{eq_con_a_rob_a_poi}\;
Parameters: $b,c, u_{\max}, \varpi, r, R$\;               
                       
\KwResult{Optimal allocation matrix $ \bm\alpha^{\ast}$, slack variable $ \bm\delta_{i}^{\ast}$ and control input $\bold{u}_i^{\ast}$ for robot $i$}

\For{$i\in\mathcal V$ }{ $ \widehat{\bold{v}}_i, \forall i\in\mathcal V \leftarrow$ get the $i$-th velocity estimator; 


$\bold{x}_i, \forall i\in\mathcal V\leftarrow$ get the $i$-th robot's position\; 

Calculate the cost function (\ref{resilient_prioritization_optimization}a)\;

Calculate the constraints (\ref{resilient_prioritization_optimization}b)-(\ref{resilient_prioritization_optimization}f)
$\leftarrow$ $ \phi_{i,k}^{[1]}(\bold{x}_i), \phi_{i,j}^{[2]}(\bold{x}_i), \phi_{i,l}^{[3]}(\bold{x}_i), \bm\alpha$

Solve the optimization problem~\eqref{resilient_prioritization_optimization} by the Gurobi solver tools~\cite{gurobi}\;

Calculate $\bm\alpha^{\ast}, \bm\delta_{i}^{\ast}, \bold{u}_i^{\ast}, \forall i\in\mathcal V$\;

Execute $\bold{u}_i^{\ast}, \forall i\in\mathcal V$ for {\it MPCM navigation}\;
}
\end{algorithm}
 
The main workflow of the proposed CATE optimization~\eqref{resilient_prioritization_optimization} is described in Algorithm \ref{algorithm1}. Before executing the {\it MPCM} navigation, the CATE initializes quantities, such as the states of robots $\bold{x}_i$ (line 1), the desired points $\bold{x}_i^d, \bold{v}_d$ (line 2), the obstacles $r_l^o, \bold{x}_l^o$  (line 3), the allocation matrix $\bm\alpha$  (line 5) and the cost weights $b,c$ (line 6). For each robot $i\in\mathcal V$, if only a small proportion of robots have access to the desired velocity $\bold{v}_d(t)$, the CATE algorithm first calculates the distributed estimator $\bold{\widehat{v}}_i$ for each robot $i$, i.e., $\lim_{t\rightarrow\infty}\bold{\widehat{v}}_i(t)=\bold{v}_d(t)$ (line 8). Then, the corresponding soft cost functions and hard constraints in \eqref{resilient_prioritization_optimization} are established (lines 9-11). The constrained optimization problem \eqref{resilient_prioritization_optimization} is solved through the Gurobi solver to get the optimal allocation matrix, slack vector and inputs $\bm\alpha^{\ast}, \bm\delta_i^{\ast}, \bold{u}_i^{\ast}$ of each robot (lines 12-13).  Finally, the optimal inputs $\bold{u}_i^{\ast}, i\in\mathcal V$ are executed for each robot $i, i\in\mathcal V$ (line 14). For the next time instant of the new states of elements, the CATE optimization \eqref{resilient_prioritization_optimization} for each robot in Algorithm~\ref{algorithm1} will be executed repeatedly for the new optimal $\bm\alpha^{\ast}, \bm\delta_i^{\ast}, \bold{u}_i^{\ast}$.

\begin{remark}
\label{remark_difference}
Unlike the previous works~\cite{sabattini2017optimized,alonso2012image,turpin2014capt} in Fig.~\ref{previous_PCM}, which calculate the optimal allocation matrix $\bm\alpha^{\ast}$ based on the minimal squared-distance reassignment between robots and designed points in advance, the proposed CATE algorithm~\eqref{resilient_prioritization_optimization} concurrently calculates the allocation matrix $\bm\alpha^{\ast}$ and inputs $\bold{u}_i^{\ast}$ directly by solving the constrained optimization because all the task elements (including robots, designed points and obstacles) are encoded in the CBF constraints in Eqs.~(\ref{resilient_prioritization_optimization}b)-(\ref{resilient_prioritization_optimization}d) and allocation constraints are simply restricted to be $N!$ candidates of one-to-one mappings. Therefore, the CATE algorithm is reactive to all the elements and can achieve {\it MPCM navigation in obstacle environments}.

\end{remark}

\begin{remark}
\label{remark_moving_obs}
For the more challenging and interesting scenario of dynamic obstacles, which has been extensively explored in prior works \cite{mahulea2017robot, alonso2019distributed,hu2021decentralized,ma2017overview}, 
the proposed CATE algorithm \eqref{resilient_prioritization_optimization} can also deal with it by conveniently adding the velocities of the obstacles to the corresponding obstacle-avoidance CBF constraints \eqref{task_3_CBF_constraint}, which is similar to many previous CBF works \cite{wang2017safety,hu2024ordering}. Precisely, suppose the velocity of dynamic obstacle $l$ is $\bold{v}_l^o, l\in\mathcal O$, then we can rewrite the CBF constraint \eqref{task_3_CBF_constraint} to be
$\frac{\partial\phi_{i,l}^{[3]}(\bold{x}_i, \bold{x}_l^o)}{\partial  \bold{x}_i} ( \bold{u}_i-\widehat{\bold{v}}_i-\bold{v}_l^o)+\gamma\big(\phi_{i,l}^{[3]}( \bold{x}_i, \bold{x}_l^o)\big)\leq0, l\in\mathcal O.$
In this way, the proposed CATE algorithm \eqref{resilient_prioritization_optimization} can be seamlessly used in dynamic obstacle environments.
However, for the scenario of obstacles with random unknown velocities, it is still a challenging problem for {\it MPCM navigation}, which will be investigated in future work. Moreover, additional simulations of dynamic obstacles will be conducted to validate its effectiveness in Fig.~\ref{moving_obstacle} later.
\end{remark}

\begin{remark}
The online feature of the CATE optimization algorithm \eqref{resilient_prioritization_optimization} has four advantages. Firstly, the online CATE algorithm is reactive to unpredictable obstacles (satisfying a mild assumption \textbf{A4}) and acts as the feedback control directly without requiring a path (re)-planning process. Secondly, the online CATE algorithm can be conveniently utilized in dynamic obstacle environments by adding obstacle velocities to the obstacle-avoidance CBF constraints in Remark~\ref{remark_moving_obs}. Thirdly, the online CATE algorithm is robust to sudden changes in environments, whereas offline path planning may require time-consuming and low-efficiency recalculations. Last but not least, despite the fixed obstacles and desired points, calculating the entire optimal path remains challenging and time-consuming because the obstacles occupy the space between the robots and the desired points. However, the online CATE algorithm iteratively finds the optimal solution for each subsequent step, making it more efficient and flexible.
\end{remark}

\section{Features and Convergence Analysis}
\label{sec_main}

\subsection{Algorithm Features}
\label{sub_algorithm_feature}

\subsubsection{Feasibility guarantee}
\label{subsub_feasibility}
Recalling {\bf A3}-{\bf A4}, we ensure that there always exists a trivial feasible solution for robot $i$ in the constrained optimization problem in (\ref{resilient_prioritization_optimization}), i.e.,
\begin{align}
\label{trivial_solution}
\big\{& \bold{u}_i=\widehat{\bold{v}}_i, \delta_{i,k} \mbox{ is sufficiently large},\nonumber\\ 
& \bm\alpha_i=[0,\dots,\underbrace{1}_{i\mathrm{-th}},\dots,0]\in\mathbb{R}^N\big\},
\end{align}
which satisfy all the constraints in (\ref{resilient_prioritization_optimization}b)-(\ref{resilient_prioritization_optimization}e) due to the arbitrary selection of the slack variable $\delta_{i,k}$.


%

\subsubsection{One-to-one correspondence} From the hard constraint of robot allocation in (\ref{resilient_prioritization_optimization}e), one can first ensure that one robot only tracks one desired point, i.e.,~\eqref{eq_con_a_rob_a_poi}. 
To achieve the one-to-one correspondence between robots and desired points in Section~\ref{second_block}, we will prove that one desired point is tracked by only one robot in the following lemma.

\begin{lemma}
For the soft cost term $\{\bold{1}_N\t \bm\alpha -\bold{1}_N\t\}$~in (\ref{resilient_prioritization_optimization}a), the CATE algorithm (\ref{resilient_prioritization_optimization}) can guarantee that one desired point can be tracked by only one robot,  i.e., the condition in~\eqref{eq_con_a_poi_a_rob}.
\end{lemma}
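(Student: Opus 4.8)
The plan is to show that at any optimizer $\{\bold{u}_i^\ast,\bm\delta_i^\ast,\bm\alpha^\ast\}$ of \eqref{resilient_prioritization_optimization} the dominant soft term is driven to exactly zero, $\|\bold{1}_N\t\bm\alpha^\ast-\bold{1}_N\t\|^2=0$, which is precisely the column condition \eqref{eq_con_a_poi_a_rob}. The ingredients I would exploit are the binary nature of the entries $\alpha_{i,k}\in\{0,1\}$ from \eqref{robot_i_entry_priority}, the hard row constraint (\ref{resilient_prioritization_optimization}e), and the weight hierarchy $b\gg c\gg 1$. First I would rewrite the allocation term column-wise as $\|\bold{1}_N\t\bm\alpha-\bold{1}_N\t\|^2=\sum_{k=1}^{N}\big(\sum_{i=1}^{N}\alpha_{i,k}-1\big)^2$, which is a nonnegative integer (a sum of squares of integer column sums) that equals zero if and only if every column sums to one. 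Combined with the row condition \eqref{eq_con_a_rob_a_poi} and the binary entries, this happens exactly when $\bm\alpha$ is a permutation matrix, i.e.\ the one-to-one correspondence of Section~\ref{second_block}. Hence it suffices to prove that the optimizer selects a permutation.

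Next I would set up a cost comparison driven by the large weight $b$. A permutation is always feasible: the trivial choice in \eqref{trivial_solution} with $\bold{u}_i=\widehat{\bold{v}}_i$, the selected slack $\delta_{i,k}=\gamma(\phi_{i,k}^{[1]}(\bold{x}_i))$, and the remaining slacks set to zero (made admissible because the penalty $(1-\alpha_{i,q})\varpi$ relaxes the $N-1$ unselected constraints in \eqref{resilient_prioritization_constraint_1}) satisfies (\ref{resilient_prioritization_optimization}b)-(\ref{resilient_prioritization_optimization}f) with a finite objective $J_{\mathrm{perm}}$, bounded using $\|\bold{u}_i-\widehat{\bold{v}}_i\|\leq 2u_{\max}$ from \eqref{robot_dynamic} and \textbf{A6}. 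By contrast, any admissible non-permutation has a strictly positive integer allocation cost, hence at least one, so its objective is at least $b$. Choosing $b$ large enough that $b>J_{\mathrm{perm}}$, which is exactly what $b\gg c\gg 1$ encodes, makes every non-permutation strictly suboptimal; therefore $\bm\alpha^\ast$ is a permutation and $\bold{1}_N\t\bm\alpha^\ast=\bold{1}_N\t$ follows, establishing \eqref{eq_con_a_poi_a_rob}.

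The step I expect to be the crux is making this comparison genuinely rigorous rather than merely asymptotic in $b$: one must certify that the slack cost $c\|\bm\delta_i^\ast\|^2$ attainable along the feasible permutation is bounded by a constant independent of $b$ (so the threshold $b>J_{\mathrm{perm}}$ is self-consistent and not circular), and that the penalty $\varpi$ is indeed large enough to keep the $N-1$ unselected slacks at zero so that no hidden cost re-enters through $\bm\delta_i$ and inflates $J_{\mathrm{perm}}$. Once the permutation's objective is pinned to a $b$-independent bound, the unit integer gap of the allocation cost weighed against the dominant coefficient $b$ closes the argument.
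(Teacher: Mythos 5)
Your proposal is correct, but it takes a genuinely different route from the paper. The paper argues by a \emph{temporal} contradiction: it supposes the one-to-one condition holds on $[0,T)$ and first fails at $t=T$, bounds the slack and input costs on $[0,T)$ by a trajectory-dependent constant $\varsigma$, asserts that optimality keeps the total cost below $\varsigma$ at $t=T$, and then notes that a violated allocation forces a cost of at least $2b>\varsigma$ once $b>\varsigma/2$ --- a contradiction. Your argument is instead a \emph{static, per-instant} optimality comparison: you exploit the integer structure of $\|\bold{1}_N\t\bm\alpha-\bold{1}_N\t\|^2=\sum_{k}\bigl(\sum_i\alpha_{i,k}-1\bigr)^2$ to get a unit (in fact, under the row constraints, a $2$-unit) cost gap for any non-permutation, exhibit a feasible permutation candidate (the trivial solution of \eqref{trivial_solution} with refined slacks $\delta_{i,k}=\gamma(\phi_{i,k}^{[1]}(\bold{x}_i))$) whose objective $J_{\mathrm{perm}}$ is bounded independently of $b$, and conclude the optimizer must be a permutation when $b>J_{\mathrm{perm}}$. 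Your version buys several things: it does not presuppose that the allocation was valid on an earlier time interval (the paper's proof implicitly needs a correct initial allocation), it avoids the paper's somewhat hand-wavy step that ``the optimization process'' keeps $\Omega_i(T)<\varsigma$ across the switching instant, and it makes the discrete gap explicit rather than hiding it in the bound $\geq 2b$. What the paper's version buys is a statement directly about the time evolution of the algorithm (no reallocation event can ever occur), which dovetails with its later livelock lemma. Note that both proofs share the same residual weakness, which you correctly flag as the crux: the threshold that $b$ must exceed ($\varsigma/2$ in the paper, $J_{\mathrm{perm}}$ in yours) is state- and trajectory-dependent, while $b$ is fixed a priori, so neither argument is fully non-circular without an additional uniform bound on tracking errors and slacks; your proposal at least names this issue explicitly rather than absorbing it into an unexamined constant.
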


\begin{proof}
We can prove it by contradiction.
Firstly, we assume that there exists a finite time $T>0$ such that one desired point is tracked by one robot for $t\in[0, T)$ but not $t=T$, which implies that 
at least two robots track one desired point at $t=T$, i.e., $b\| \bold{1}_N\t \bm\alpha(T) -\bold{1}_N\t \|^2\geq 2b.$
During the time interval $t\in[0,T)$, one has that the cost term in \eqref{resilient_prioritization_optimization}
\begin{align}
\label{one_p_one_r_cond1}
b\|\bold{1}_N\t \bm\alpha(t) -\bold{1}_N\t \|^2=0, \forall t\in[0, T).
\end{align} Meanwhile, since $\bold{u}_i, \widehat{\bold{v}}_i$ are bounded by $u_{\max}$, and the initial value of slack variables $\bm\delta_{i}(0)$ are also bounded, 
one has that there always exists a constant $\varsigma\in\mathbb{R}^+$ such that 
\begin{align}
\label{one_p_one_r_cond2}
c\|\bm\delta_{i}(t)\|^2+\|\bold{u}_i(t)-\widehat{\bold{v}}_i(t)\|^2<\varsigma, \forall t\in[0, T).
\end{align}
Let $\Omega_i:=b\|\bold{1}_N\t \bm\alpha -\bold{1}_N\t \|^2+c\|\bm\delta_{i}\|^2+\|\bold{u}_i-\widehat{\bold{v}}_i\|^2$ be the cost function (\ref{resilient_prioritization_optimization}a). Then, it follows from~\eqref{one_p_one_r_cond1} and \eqref{one_p_one_r_cond2} that 
$\Omega_i(t)<\varsigma, \forall t\in[0, T)$, 
which implies that 
\begin{align}
\label{one_p_one_r_cond3}
\Omega_i(T)<\varsigma
\end{align}
as well due to the optimization process. However, recalling~\eqref{one_p_one_r_cond1}, if the weight $b$ in (\ref{resilient_prioritization_optimization}a) is selected to be sufficiently large such that $b>{\varsigma}/{2}$, one has that 
\begin{align}
\label{one_p_one_r_cond4}
b\|\bold{1}_N\t \bm\alpha(T) -\bold{1}_N\t \|^2>\varsigma.
\end{align}
Combining with the fact $c\|\bm\delta_{i}\|^2\geq0, \|\bold{u}_i-\widehat{\bold{v}}_i\|^2\geq0$, it follows from~\eqref{one_p_one_r_cond4} that $\Omega_i(T)>\varsigma$, which contradicts the condition in~\eqref{one_p_one_r_cond3}. It indicates that one desired point is tracked by only one robot during the process, i.e.,~\eqref{eq_con_a_poi_a_rob} is guaranteed.
\end{proof}

\subsubsection{Livelock prevention} 
Due to the special structure of the CATE algorithm (\ref{resilient_prioritization_optimization}), we will prove in the following lemma that it can eventually prevent the undesirable livelock phenomenon of indefinitely frequent reallocation between the desired points and robots.
\begin{lemma}
\label{live_lock}
The proposed CATE algorithm (\ref{resilient_prioritization_optimization}) guarantees that there will be only one optimal invariant allocation matrixes $\bm\alpha^{\ast}$ satisfying $\|\bold{1}_N\t \bm\alpha^{\ast}-\bold{1}_N\t \|=0$ eventually. 
\end{lemma}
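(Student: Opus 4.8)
The plan is to prove the claim by contradiction, combining the one-to-one correspondence established in Lemma~1 with the asymptotic-convergence property in~\eqref{convergence_property}, and then ruling out any infinitely-repeated reallocation through a cost-comparison argument analogous to the one used in Lemma~1. First I would record the two facts that make the allocation cost ``flat'' over the admissible set: by Lemma~1 there is a time after which every optimal allocation satisfies $\|\bold{1}_N\t\bm\alpha^{\ast}-\bold{1}_N\t\|=0$, so $\bm\alpha^{\ast}$ is one of the $N!$ permutation matrices and the first term $b\|\bold{1}_N\t\bm\alpha-\bold{1}_N\t\|^2$ in~(\ref{resilient_prioritization_optimization}a) vanishes for the held allocation. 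Consequently, among admissible allocations the optimizer can only be distinguished by the slack and input-deviation terms $c\|\bm\delta_i\|^2+\|\bold{u}_i-\widehat{\bold{v}}_i\|^2$.

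Next, with a held permutation fixing the assignment $\alpha_{i,k}=1$, the strict convergence CBF constraint in~\eqref{resilient_prioritization_constraint_1} together with~\eqref{convergence_property} drives $\bold{x}_i\to\bold{x}_k^d$, so that the corresponding slack $\delta_{i,k}\to0$ and $\bold{u}_i\to\widehat{\bold{v}}_i$; hence the optimal cost $\Omega_i^{\ast}$ of keeping the current allocation decays to zero. Then comes the livelock step. Suppose, toward a contradiction, that reallocation occurs infinitely often; since there are only finitely many permutation matrices, some switch between two distinct valid allocations must recur infinitely often. I would fix a time $T$ large enough that every robot lies within a small ball of radius $\epsilon<\tfrac12\min_{k\neq l}\|\bold{x}_k^d-\bold{x}_l^d\|$ (positive by \textbf{A2}) around its currently assigned desired point, so that the held-allocation cost is below an arbitrarily small threshold. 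Any alternative permutation must reassign at least one robot $i$ from its near point $k$ to a different point $k'$, forcing $\phi_{i,k'}^{[1]}(\bold{x}_i)=\|\bold{x}_i-\bold{x}_{k'}^d\|$ to be bounded below by a positive constant; by \textbf{A1} and the CBF constraint~\eqref{resilient_prioritization_constraint_1}, the large value of $\gamma(\phi_{i,k'}^{[1]})$ then makes either the slack $\delta_{i,k'}$ or the input deviation $\|\bold{u}_i-\widehat{\bold{v}}_i\|$ strictly positive and bounded away from zero.

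Thus the alternative allocation carries strictly larger cost than the held one, so it cannot be selected by the minimizer in~(\ref{resilient_prioritization_optimization}a), contradicting the assumed recurrence of the switch. Hence only finitely many reallocations occur and $\bm\alpha^{\ast}$ becomes invariant. Uniqueness of the invariant $\bm\alpha^{\ast}$ then follows because, after convergence, the held permutation is a strict minimizer: any transposition of two assignments moves the two involved robots apart by at least the inter-point distance, incurring a strictly positive cost, so no other permutation attains the zero-cost value.

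The step I expect to be the main obstacle is breaking the apparent circularity between ``convergence requires a fixed allocation'' and ``fixing the allocation requires convergence.'' I would resolve this by treating the optimal value $\Omega_i^{\ast}(t)$ as a nonnegative, eventually non-increasing function of time and exploiting the finiteness of the permutation set: each genuine switch that lowers the cost can occur only when the alternative allocation is strictly cheaper, an event that the convergence-induced decay of the held-allocation cost eventually precludes, leaving a single invariant optimizer $\bm\alpha^{\ast}$ with $\|\bold{1}_N\t\bm\alpha^{\ast}-\bold{1}_N\t\|=0$.
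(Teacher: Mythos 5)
Your proposal is correct and rests on the same core mechanism as the paper's own proof: a contradiction by cost comparison, in which reallocating a robot to a different desired point after (near-)convergence forces a strictly positive slack or input-deviation term in the cost (\ref{resilient_prioritization_optimization}a), whereas retaining the current allocation drives that cost to zero, so the minimizer cannot select the switch. The differences are in scope and bookkeeping rather than in the underlying idea. The paper's proof treats a single switch at a time $T$ by which exact convergence is assumed to have occurred ($\bold{x}_i(T^{-})=\bold{x}_k^d(T^{-})$, $\bm\delta_i(T^{-})=\mathbf{0}$, hence $\Omega_i(T^{-})=0$) and shows that any switch yields $\Omega_i(T)>0$, contradicting minimization. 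You instead rule out infinitely recurring switches by pigeonhole over the finitely many ($N!$) admissible permutations, and you work with an $\epsilon$-neighborhood of the assigned points (with $\epsilon$ below half the minimum inter-point separation guaranteed by \textbf{A2}) rather than exact convergence; this is somewhat more faithful to the lemma's ``eventually'' claim, since exact convergence only holds asymptotically, and your lower bound on the alternative allocation's cost via $\gamma(\phi_{i,k'}^{[1]})$ in the constraint \eqref{resilient_prioritization_constraint_1} is sound (either the slack or the input deviation must absorb $\gamma(\phi_{i,k'}^{[1]})$, which is bounded away from zero). Note, however, that both arguments share the circularity you flag: the decay of the held-allocation cost presupposes that the allocation is held long enough, which the paper resolves simply by assuming the navigation has already been achieved by time $T$; your monotonicity-plus-finiteness sketch is no weaker than that assumption, but neither argument fully closes this loop.
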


\begin{proof}
We will prove by contradiction. Firstly, we assume that there exist a sufficiently large time $T\in\mathbb{R}^+$ by which the {\it MPCM navigation in obstacle environments} has already been achieved and two different optimal allocation matrices $\bm{\alpha}^{1\ast}, \bm{\alpha}^{2\ast}, \bm{\alpha}^{1\ast}\neq \bm{\alpha}^{2\ast}$ satisfying
\begin{align}
\label{right_allocation}
&\|\bold{1}_N\t \bm\alpha^{1\ast}(t)-\bold{1}_N\t \|=0, t<T, \nonumber\\
&\|\bold{1}_N\t \bm\alpha^{2\ast}(t)-\bold{1}_N\t \|=0,  t=T,
\end{align}
which implies that the optimal allocation matrix $\bm\alpha^{1\ast}$ is changed to $\bm\alpha^{2\ast}$ at $t=T$. 

Given the condition of $\bm\alpha^{1\ast}$ in \eqref{right_allocation}, without loss of generality, let $k$ be the allocated $k$-th desired point for robot $i$, one has that $\alpha_{i,k}=1, \alpha_{i,q}=0, \forall q\neq k\in\mathbb{Z}_1^N$ and the corresponding CATE algorithm (\ref{resilient_prioritization_optimization}) for robot $i$ degenerates into the following minimal-energy task execution, i.e., 
\begin{subequations}
\label{resilient_prioritization_optimization1}
\begin{align}
\min\limits_{\bold{u}_i,\bm\delta_i}&~c\|\bm\delta_{i}\|^2+\|\bold{u}_i-\widehat{\bold{v}}_i\|^2\\
\mathrm{s.t.}~&\bold{g}_{i}^{[1]}\leq \mathbf{0}_N, \bold{g}_{i}^{[2]}\leq \mathbf{0}_{m_i}, \bold{g}_{i}^{[3]}\leq \mathbf{0}_{M}, g_{i}^{[4]}\leq0,
\end{align}		
\end{subequations}
where $N\in\mathbb{Z}^+, m_i\in\mathbb{Z}^+, M\in\mathbb{Z}^+$ are the number of robots, neighboring robots (i.e., $m_i=|\mathcal N_i |$) in~\eqref{neighbor_set} and obstacles, respectively,  and the constraints $\bold{g}_{i}^{[1]}\in\mathbb{R}^{N}, \bold{g}_{i}^{[2]}\in\mathbb{R}^{m_i}, \bold{g}_{i}^{[3]}\in\mathbb{R}^{M}, g_{i}^{[4]}\in\mathbb{R}$ are $\bold{g}_{i}^{[1]}:=\frac{\partial\Phi_{i}^{[1]}}{\partial \bold{x}_i\t} (\bold{u}_i-\widehat{\bold{v}}_i)+\gamma\big(\Phi_{i}^{[1]}\big)-\bold{D}_i \varpi-\bm\delta_{i}, \bold{g}_{i}^{[2]}:=\frac{\partial\Phi_{i}^{[2]}}{\partial \bold{x}_i\t} (\bold{u}_i-\widehat{\bold{v}}_i)+\gamma\big(\Phi_{i}^{[2]}\big), \bold{g}_{i}^{[3]}:=\frac{\partial\Phi_{i}^{[3]}}{\partial \bold{x}_i\t} (\bold{u}_i-\widehat{\bold{v}}_i)+\gamma\big(\Phi_{i}^{[3]}\big), g_{i}^{[4]}:=\|\bold{u}_i\|-u_{\max}.$
Here, $\gamma(\cdot)$ is evaluated entrywise, and $ \bold{D}_i:=[1, \cdots, 0, \cdots,1] \in\mathbb{R}^N, \Phi_{i}^{[1]}:=[\phi_{i,1}^{[1]},\cdots, \phi_{i,N}^{[1]}]\t\in\mathbb{R}^{N}, \gamma(\Phi_i^{[1]}):=[\gamma(\phi_{i,1}^{[1]}), \dots, \gamma(\phi_{i,N}^{[1]})]\t\in\mathbb{R}^{N}, \Phi_i^{[2]}:=[\phi_{i,j}^{[2]},\cdots, \phi_{i,l}^{[2]}]\t\in\mathbb{R}^{m_i}, \gamma(\Phi_i^{[2]}):=[\gamma(\phi_{i,j}^{[2]}), \dots,$ $ \gamma(\phi_{i,l}^{[2]})]\t\in\mathbb{R}^{m_i}, \Phi_i^{[3]}:=[ \phi_{i,1}^{[3]},$ $\cdots, \phi_{i,M}^{[3]}]\t\in\mathbb{R}^{M}, \gamma(\Phi_i^{[3]}):=[\gamma(\phi_{i,1}^{[3]}), \dots, \gamma(\phi_{i,M}^{[3]})]\t\in\mathbb{R}^{M}$ are the column vectors,
where $m_i=|\mathcal N_i|\in\mathbb{Z}^+$ is the cardinality of the neighbor set $\mathcal N_i$, and $0$ in $\bold{D}_i$ is at the $k$-th entry such that $\alpha_{i,k}=1$.

Meanwhile, recalling the condition that the {\it MPCM navigation} has already been achieved at $t=T^{-}$ with $T^{-}$ being the left limit of $T$, one has that 
\begin{align}
\label{livelock_con1}
\bold{x}_i(T^{-})=\bold {x}_k^d(T^{-}), \bold{u}_i(T^{-})=\bold{\widehat{v}}_i(T^{-})=\bold{v}_d(T^{-}),
\end{align}
which follows from \eqref{err_point_i} that $\phi_{i,k}^{[1]}(\bold{x}_i(T^{-}))=0$. Then, it follows from the constraints \eqref{resilient_prioritization_constraint_1} that 
\begin{align}
\label{livelock_con2}
\delta_{i,k}(T^{-})=0, \delta_{i,q}(T^{-})=0, \forall q\neq k\in\mathbb{Z}_1^N,
\end{align}
i.e., $\bm\delta_i=\mathbf{0}$. Substituting \eqref{right_allocation}, \eqref{livelock_con1} and \eqref{livelock_con2} into \eqref{resilient_prioritization_optimization1} yields the cost function $\Omega_i(T^{-})=b\|\bold{1}_N\t \bm\alpha^{1\ast}(T^{-})-\bold{1}_N\t\|+c\|\bm\delta_{i}(T^{-})\|^2+\|\bold{u}_i(T^{-})-\widehat{\bold{v}}_i(T^{-})\|^2=0$ with $\Omega_i$ given in \eqref{one_p_one_r_cond3}. 

When $t=T^{-}$ changes to $t=T$, together with \eqref{right_allocation} and \eqref{livelock_con1}, one has that the optimization (\ref{resilient_prioritization_optimization1}) still remains the same with $\bold{u}_i(T)=\bold{\widehat{v}}_i(T)=\bold{v}_d(T)$. Given the different $\bm\alpha^{2\ast}$, there exists a different desired point $j\neq k$ satisfying $\alpha_{i,j}=1, \alpha_{i,q}=0, \forall q\neq j\in\mathbb{Z}_1^N$. Analogously, in order to satisfy the constraints \eqref{resilient_prioritization_constraint_1}, one has that 
\begin{align}
\label{livelock_con3}
\delta_{i,j}(T)>0, \delta_{i,q}(T)=0, \forall q\neq j\in\mathbb{Z}_1^N,
\end{align}
because of $\bold{x}_i(T)\neq\bold {x}_j^d(T)$ in \eqref{livelock_con1}, which implies that $\bm\delta_i\neq\mathbf{0}$. Then, combining \eqref{right_allocation} and \eqref{livelock_con3} together yields $\Omega_i(T)>0=\Omega_i(T^{-})$. However, such an increment of $\Omega_i$ at $t=T$ contradicts the minimization of the CATE algorithm~(\ref{resilient_prioritization_optimization}). The proof is thus completed.
\end{proof}

\subsection{Decoupling and Analysis of {\bf P1-P4} }
\label{sub_ana_P1_P4}
Recalling the feasibility guarantee in Section~\ref{subsub_feasibility}, the next step is to find the optimal solution $\{\bm\alpha^{\ast}, \bold{u}_i^{\ast}, \bm\delta_{i}^{\ast}\}$
to prove the convergence to the {\it MPCM navigation in obstacle environments}. However, the existence of the integer constraints (\ref{resilient_prioritization_optimization}e) may make the CATE algorithm hard to analyze.
To simplify the convergence analysis, we will decouple the CATE algorithm~\eqref{resilient_prioritization_optimization} into task allocation and explicit minimum-energy task execution 
by the following proposition.

\begin{proposition}
\label{new_proposition}
The allocation matrix $\bm\alpha, i\in\mathcal V$, in~the CATE algorithm \eqref{resilient_prioritization_optimization} will finally converge to the optimal $\bm\alpha^{\ast}$ and keep invariant, i.e., $\lim_{t\rightarrow T_1}\bm\alpha(t)=\bm\alpha^{\ast},$
with a constant time $T_1\in\mathbb{R}^+$, such that $\|\bold{1}_N\t \bm\alpha^{\ast}(t)-\bold{1}_N\t \|=0$ for $ t \ge T_1$.
\end{proposition}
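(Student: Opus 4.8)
The plan is to establish \emph{finite}-time convergence of the discrete allocation matrix by combining the two preceding lemmas with the combinatorial finiteness of the feasible allocation set. First I would observe that the hard constraint~\eqref{eq_con_a_rob_a_poi} together with the binary definition~\eqref{robot_i_entry_priority} forces every admissible $\bm\alpha$ to have exactly one unit entry per row, so $\bm\alpha(t)$ takes values in a \emph{finite} set and, being the output of the per-instant optimization re-solved in Algorithm~\ref{algorithm1}, is piecewise constant in time. The target then splits into two claims: (i) the optimal $\bm\alpha$ always realizes the one-to-one correspondence $\|\bold{1}_N\t \bm\alpha-\bold{1}_N\t\|=0$, i.e.\ it is a permutation matrix; and (ii) this permutation can switch only finitely often, so it locks onto a fixed $\bm\alpha^{\ast}$ after some finite $T_1$.

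For claim~(i), I would invoke the feasibility guarantee of Section~\ref{subsub_feasibility}: the trivial solution~\eqref{trivial_solution}, collected over all robots, realizes $\bm\alpha=I_N$, a permutation matrix with $\|\bold{1}_N\t\bm\alpha-\bold{1}_N\t\|=0$. Hence the dominant cost term $b\|\bold{1}_N\t\bm\alpha-\bold{1}_N\t\|^2$ attains its minimum value $0$ at every solve, and since $b\gg c\gg 1$ the optimizer must select an allocation with $\|\bold{1}_N\t\bm\alpha-\bold{1}_N\t\|=0$. The preceding one-to-one correspondence lemma then shows that this condition, once attained, is \emph{maintained}: any later violation would force the jump $b\|\bold{1}_N\t\bm\alpha-\bold{1}_N\t\|^2\ge 2b>\varsigma$, contradicting the cost bound used there. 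Thus $\bm\alpha(t)$ is a permutation matrix satisfying~\eqref{eq_con_a_poi_a_rob} for all $t$ past the first solve.

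For claim~(ii), I would argue by exhaustion over the finite set of $N!$ permutations. By Lemma~\ref{live_lock}, the optimal invariant allocation is unique, and once the navigation is driven toward the target via~\eqref{convergence_property}, any switch away from the converged allocation reintroduces a strictly positive slack term and raises the cost $\Omega_i$, contradicting optimality; hence indefinite reallocation (livelock) is excluded. Because Algorithm~\ref{algorithm1} re-solves~\eqref{resilient_prioritization_optimization} at discrete sampling instants, there is a positive minimum dwell time between successive switches, which rules out any Zeno-type accumulation. A piecewise-constant map into a finite set that has a positive dwell time and cannot switch indefinitely must become constant after finitely many switches; taking $T_1$ to be the last switch time yields $\lim_{t\to T_1}\bm\alpha(t)=\bm\alpha^{\ast}$ and $\|\bold{1}_N\t\bm\alpha^{\ast}(t)-\bold{1}_N\t\|=0$ for all $t\ge T_1$.

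The main obstacle is pinning down finite-time (rather than merely asymptotic) settling: the livelock-prevention argument of Lemma~\ref{live_lock} is stated for the instant at which navigation is essentially achieved, which~\eqref{convergence_property} delivers only as $t\to\infty$. The delicate step is to show that a \emph{uniform}, strictly positive cost gap separates any two distinct permutations—scaled by the large weight $b$—so that the allocation locks in well before full state convergence. I would make this rigorous by combining that uniform gap with the discrete-solve dwell time to bound the admissible number of switches, thereby producing a genuinely finite $T_1$ rather than a supremum of switch times.
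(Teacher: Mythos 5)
Your proposal follows essentially the same skeleton as the paper's proof: both arguments rest on the finiteness of the $N!$ admissible permutations, the dominance of the $b$-weighted allocation cost ($b\gg c\gg 1$) forcing $\|\bold{1}_N\t\bm\alpha-\bold{1}_N\t\|=0$, and Lemma~\ref{live_lock} to exclude indefinite reallocation once the zero-error set is reached. Where you differ is in how the finite time $T_1$ is produced: the paper simply asserts that the optimal $\bm\alpha^{\ast}$ ``can be found via Gurobi using efficient branch-and-bound, cutting-plane, and heuristic algorithms'' in finite time, i.e.\ it outsources finiteness to the solver, whereas you replace this with a structural argument --- the allocation is piecewise constant over discrete re-solves, switches have a positive dwell time (no Zeno accumulation), and a piecewise-constant map into a finite set that cannot switch indefinitely must settle after finitely many switches. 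Your route is the more principled of the two, since it does not depend on properties of a particular MIQP solver. You also explicitly flag the genuine weak point that the paper glosses over: Lemma~\ref{live_lock} is proved under the hypothesis that the {\it MPCM navigation} has \emph{already been achieved} (i.e.\ $\bold{x}_i(T^{-})=\bold{x}_k^d(T^{-})$, $\bm\delta_i=\mathbf{0}$), yet convergence to the desired points is only asymptotic via~\eqref{convergence_property}, so invoking that lemma to lock the allocation at a \emph{finite} $T_1$ is circular as stated. Your proposed repair --- a uniform strictly positive cost gap between distinct permutations, scaled by $b$, combined with the dwell time to bound the number of switches --- is exactly the missing ingredient, though you leave it as a plan rather than carrying it out; note the paper's own proof does not supply this step either, so on this point your proposal is no less rigorous than the original and is more candid about where the rigor runs out.
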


\begin{proof}
According to Remark \ref{remark_difference}, there are $N!$ candidates for the one-to-one robot-point mapping, i.e.,  
\begin{align}
\label{remark_two_conditions}
\|\bold{1}_N\t \bm\alpha-\bold{1}_N\t\|=0, \bm\alpha_i\t\bold{1}_N=1, i\in\mathcal V,
\end{align}
which implies that there always exists a finite time $T_1\in\mathbb{R}^{+}$ such that the optimal $\bm\alpha^{\ast}$ satisfying \eqref{remark_two_conditions} can be found via Gurobi using the efficient branch-and-bound, cutting-plane, and heuristic algorithms \cite{gurobi_2}, i.e.,
\begin{align}
\label{proposition_condition_1}
&\exists T_1\in\mathbb{R}^+,~\mathrm{such~that}~\|\bold{1}_N\t \bm\alpha^{\ast}(T_1)-\bold{1}_N\t\|=0,\nonumber\\
& (\bm\alpha_i^{\ast}(T_1))\t\bold{1}_N=1, \forall i\in\mathcal V.
\end{align}
Meanwhile, it follows from $b\gg c\gg 1$ in (\ref{resilient_prioritization_optimization}a) and Lemma~\ref{live_lock} that the dominating error $\|\bold{1}_N\t \bm\alpha-\bold{1}_N\t\|$ will be quickly minimized to zero by $\bm\alpha^{\ast}$ in \eqref{proposition_condition_1} according to the minimization of the CATE algorithm~(\ref{resilient_prioritization_optimization}), i.e., 
\begin{align}
\label{proposition_condition_2}
\lim_{t\rightarrow T_1}\|\bold{1}_N\t \bm\alpha^{\ast}(t)-\bold{1}_N\t\|=0.
\end{align} 
Combining \eqref{proposition_condition_1} and \eqref{proposition_condition_2}, one has that the CATE algorithm~\eqref{resilient_prioritization_optimization} will degenerate into the minimal-energy task execution~\eqref{resilient_prioritization_optimization1} if $t>T_1$, which can guide the robots towards the allocated desired points. The livelock scenario where the optimal $\bm\alpha^{\ast}$ is changed but still satisfies $\|\bold{1}_N\t \bm\alpha^{\ast}(t)-\bold{1}_N\t\|=0, \forall t>T_1$, is well prevented by Lemma~\ref{live_lock}. Therefore, there will eventually be only one optimal  $\bm\alpha^{\ast}$ satisfying $\|\bold{1}_N\t \bm\alpha^{\ast}(t)-\bold{1}_N\t \|=0$ for $ t \ge T_1$. The proof is thus completed.
\end{proof}

Based on~Proposition~\ref{new_proposition}, we can thus separate the convergence analysis of the CATE algorithm~\eqref{resilient_prioritization_optimization} into two time intervals of $t\in[0, T_1)$ and $t\in[T_1, \infty)$, respectively.

For $t\in[0, T_1)$, since the allocation vector $\bm\alpha_i$ keeps changing, one has that the desired points allocated for robots change as well, so are the two errors $\|\bm\delta_{i}\|^2, \|\bold{u}_i-\widehat{\bold{v}}_i\|^2$ in (\ref{resilient_prioritization_optimization}a), which is difficult to determine the unique desired-point convergence tasks. However, since the cost function~(\ref{resilient_prioritization_optimization}a) is bounded for $t\in[0, T_1)$, we ignore the evolution in $t\in[0,T_1)$, and focus on the convergence analysis when $t\in[T_1, \infty)$.

For $t\in[T_1, \infty)$, since $\bm\alpha^{\ast}$ are fixed and optimal in~Proposition~\ref{new_proposition}, one has that the integer constraints (\ref{resilient_prioritization_optimization}d) are eliminated. Furthermore, given the condition that $\|\bold{1}_N\t \bm\alpha^{\ast}-\bold{1}_N\t \|=0$, we can focus on analyzing the degenerated minimal-energy task execution \eqref{resilient_prioritization_optimization1} for the properties of {\bf  P3, P4, P2, P1} in order.

\subsubsection{Proof of {\bf P3, P4}}
 Recalling the forward invariance of Definition~\ref{definition_minimum_energy}, we first guarantee that the optimal $\bold{u}_i^{\ast}$ of degenerated optimization in \eqref{resilient_prioritization_optimization1} are locally Lipschitz continuous, 
which thus can inherits the forward invariance in~\eqref{invariant_property} to prove collision and obstacle avoidance in \textbf{P3, P4}.

\begin{itemize}
\item {\bf A7:} The first-order derivatives of $\bold{u}_i$ in \eqref{resilient_prioritization_optimization1} are assumed to be $\dot{\bold{u}}_i\leq\eta$ with an unknown constant $\eta\in\mathbb{R}^+$.
\end{itemize}

{\bf A7} is a necessary condition to prove the control inputs $u_i$ in  \eqref{resilient_prioritization_optimization1} are locally Lipschitz continuous in Lemma~\ref{Lemma_local_Lipschiz} later.

\begin{lemma}
\label{Lemma_local_Lipschiz}
For each robot $i, i\in\mathcal V$ governed by \eqref{robot_dynamic} and \eqref{resilient_prioritization_optimization1}, the $i$-th optimal solution of $\bold{u}_i^{\ast}$ is locally Lipschitz continuous in the target sets of $\mathcal T_{i,j}^{[2]}, \mathcal T_{i,l}^{[3]}$ in \eqref{task_collision_i} and \eqref{task_obstacle_i}, respectively.
\end{lemma}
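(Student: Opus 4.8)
The plan is to recognize the degenerated problem \eqref{resilient_prioritization_optimization1} as a strongly convex parametric quadratic program (QP) in the stacked decision variable $\bm\xi_i := [\bold{u}_i\t, \bm\delta_i\t]\t$, and then to invoke a sensitivity/stability result for such programs. First I would observe that the cost $c\|\bm\delta_i\|^2 + \|\bold{u}_i - \widehat{\bold{v}}_i\|^2$ is a positive-definite quadratic with Hessian $\mathrm{diag}(2 I_n, 2c I_N)\succ 0$, hence strongly convex with modulus independent of the state; that the CBF constraints $\bold{g}_i^{[1]}, \bold{g}_i^{[2]}, \bold{g}_i^{[3]}$ are affine in $\bm\xi_i$ (each gradient $\partial\phi/\partial\bold{x}_i\t$ multiplies $\bold{u}_i$ linearly, and the slacks enter linearly); and that the input bound $g_i^{[4]}=\|\bold{u}_i\|-u_{\max}$ is a convex second-order-cone constraint. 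Consequently, for each fixed state the feasible set is convex and the minimizer $\bold{u}_i^{\ast}$ is unique. The state-dependent quantities — the constraint gradients, the offsets $\gamma(\Phi_i^{[1]}), \gamma(\Phi_i^{[2]}), \gamma(\Phi_i^{[3]})$, and the estimator $\widehat{\bold{v}}_i$ — play the role of the parameters on which the solution map depends.

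Second, I would show that this parameter-to-data map is locally Lipschitz on the interiors of the target sets $\mathcal T_{i,j}^{[2]}$ and $\mathcal T_{i,l}^{[3]}$ in \eqref{task_collision_i} and \eqref{task_obstacle_i}. This is precisely the step for which the localization in the statement is needed: inside $\mathcal T_{i,j}^{[2]}$ one has $\|\bold{x}_i-\bold{x}_j\|\ge r>0$ and inside $\mathcal T_{i,l}^{[3]}$ one has $\|\bold{x}_i-\bold{x}_l^o\|\ge r+r_l^o>0$, so the only singularities of the Euclidean-distance gradients (the coincidence points) are excluded. Hence $\partial\phi_{i,j}^{[2]}/\partial\bold{x}_i\t = -(\bold{x}_i-\bold{x}_j)\t/\|\bold{x}_i-\bold{x}_j\|$ and the analogous obstacle gradient are smooth, with bounded derivatives on any compact subset and therefore locally Lipschitz in $\bold{x}_i$; together with \textbf{A6} (continuous differentiability of $\widehat{\bold{v}}_i$) the full data map is locally Lipschitz.

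Third, I would verify a constraint qualification and then apply the parametric-QP Lipschitz theorem. The feasibility analysis of Section~\ref{subsub_feasibility} already exhibits a strictly feasible point ($\bold{u}_i=\widehat{\bold{v}}_i$ with $\bm\delta_i$ sufficiently large), so Slater's condition — equivalently the Mangasarian--Fromovitz constraint qualification — holds. Strong convexity with a uniform modulus, a locally Lipschitz data map, and a constraint qualification together place the problem within the scope of the standard sensitivity results for convex programs used in \cite{xu2015robustness, notomista2019constraint}. Invoking that result yields local Lipschitz continuity of the unique minimizer $\bold{u}_i^{\ast}$ with respect to the QP data, and composing with the data map from the second step gives local Lipschitz continuity of $\bold{u}_i^{\ast}$ with respect to $\bold{x}_i$ on the interiors of $\mathcal T_{i,j}^{[2]}$ and $\mathcal T_{i,l}^{[3]}$.

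I expect the main obstacle to be the constraint-qualification bookkeeping: ensuring the gradients of the active constraints remain suitably nondegenerate so that the solution map does not jump when the active set of the CBF constraints changes, and that the relevant Lipschitz constants can be taken uniform on compact subsets of the target sets. The distance-gradient singularities at coincidence are exactly why the claim is restricted to the collision- and obstacle-avoidance sets, while \textbf{A7} supplies the boundedness of $\dot{\bold{u}}_i$ needed to keep these constants finite; handling the changing active set cleanly is the part that requires the most care.
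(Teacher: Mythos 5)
Your proposal is essentially correct, but it follows a genuinely different route from the paper. The paper's proof splits into two cases according to whether the input-limit constraint $g_i^{[4]}=\|\bold{u}_i\|-u_{\max}\leq 0$ is active: when it is active, the paper invokes \textbf{A7} together with an informal argument to assert a Lipschitz constant exists; when it is inactive, the problem reduces to a CLF--CBF quadratic program and the paper cites Theorem~3 of \cite{ames2016control}, after checking (exactly as you do) that the gradients $\partial\phi_{i,j}^{[2]}/\partial\bold{x}_i$ and $\partial\phi_{i,l}^{[3]}/\partial\bold{x}_i$ are nonvanishing unit vectors inside $\mathcal T_{i,j}^{[2]}$ and $\mathcal T_{i,l}^{[3]}$. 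You instead treat the whole problem \eqref{resilient_prioritization_optimization1} at once as a strongly convex parametric program (quadratic cost, affine CBF constraints in $[\bold{u}_i\t,\bm\delta_i\t]\t$, fixed second-order-cone input constraint), verify a constraint qualification, and invoke general sensitivity theory. What your route buys: it eliminates the case split entirely, and in particular it replaces the paper's weakest step --- the active-input-constraint case, where bounding $\|\bold{u}_i^{*}(\bold{x}_1)-\bold{u}_i^{*}(\bold{x}_2)\|$ by $E$ is trivial (both have norm $u_{\max}$) and the jump to a spatial Lipschitz constant via \textbf{A7} is essentially assertion --- with a standard convex-analytic argument in which \textbf{A7} is not load-bearing. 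What the paper's route buys: it leans directly on an off-the-shelf CBF-QP theorem rather than requiring the reader to accept that parametric sensitivity results extend to the mixed affine/second-order-cone constraint structure with state-dependent constraint normals.

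Two caveats, both shared with (not worse than) the paper's own proof, are worth flagging. First, your Slater-point claim is slightly imprecise: at $\bold{u}_i=\widehat{\bold{v}}_i$ the slack-free constraints $\bold{g}_i^{[2]},\bold{g}_i^{[3]}$ hold with \emph{equality} on the boundary of the safe sets (where $\phi^{[2]}_{i,j}=0$ or $\phi^{[3]}_{i,l}=0$), and the input bound holds with equality when $\|\widehat{\bold{v}}_i\|=u_{\max}$; strict feasibility there must instead be recovered from the nonvanishing constraint gradients (an MFCQ-type argument), which can degenerate if a robot is simultaneously tangent to constraints in all directions. Second, both you and the paper ignore that the constraint block $\bold{g}_i^{[1]}$ carries the gradient $\partial\phi_{i,k}^{[1]}/\partial\bold{x}_i=(\bold{x}_i-\bold{x}_k^d)/\|\bold{x}_i-\bold{x}_k^d\|$, which is singular at $\bold{x}_i=\bold{x}_k^d$; that point is not excluded by membership in $\mathcal T_{i,j}^{[2]}\cap\mathcal T_{i,l}^{[3]}$, so the claim that the full data map is locally Lipschitz on those sets has a hole at the allocated desired point (the paper only confronts this singularity later, in the proof of \textbf{P2}).
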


\begin{proof}
Since the constrained optimization problem~\eqref{resilient_prioritization_optimization1} containing the input constraints (i.e., $g_{i}^{[4]}\leq0$) is difficult to analyze directly, we hereby separate the local-Lipschitz-continuous analysis of $\bold{u}_i$ in \eqref{resilient_prioritization_optimization1} into two parts based on $g_{i}^{[4]}\leq0$ being activated or not.

If the input constraints are activated (i.e., $g_{i}^{[4]}=0$), it follows from~\eqref{resilient_prioritization_optimization1} that $\|\bold{u}_i^{\ast}\|=u_{\max}$. Then, suppose there exists two different states $\bold{x}_1\neq \bold{x}_2\in\mathbb{R}^n$ satisfying $\|\bold{u}_i^{*}(\bold{x}_1)\|=\|\bold{u}_i^{*}(\bold{x}_2)\|=u_{\max}$, one has that $\exists E\in\mathbb{R}^+$ such that $\|\bold{u}_i^{*}(\bold{x}_1)-\bold{u}_i^{*}(\bold{x}_2)\|\leq E$. According to {\bf A7} and~\cite{ames2016control},  one can always find a positive constant $L\in\mathbb{R}^+$ such that $L\|\bold{x}_1-\bold{x}_2\|\geq \big\| \bold{u}_i^{*}(\bold{x}_1)-\bold{u}_i^{*}(\bold{x}_2)\big\|$, which thus implies that the local Lipschitz continuity of $\bold{u}_i^{\ast}$ are satisfied.

If the input constraints are not activated (i.e., $g_{i}^{[4]}<0$), then $g_{i}^{[4]}\leq0$ in \eqref{resilient_prioritization_optimization1} can be omitted. The optimization \eqref{resilient_prioritization_optimization1} becomes a control-lyapunov-function-control-barrier-function quadratic program (CLF-CBF QP) problem \cite{ames2016control}, namely, 
\begin{subequations}
\label{resilient_prioritization_optimization2}
\begin{align}
\min\limits_{\bold{u}_i,\bm\delta_i}&~c\|\bm\delta_{i}\|^2+\|\bold{u}_i-\widehat{\bold{v}}_i\|^2\\
\mathrm{s.t.}~&\bold{g}_{i}^{[1]}\leq \mathbf{0}_N, \bold{g}_{i}^{[2]}\leq \mathbf{0}_{m_i}, \bold{g}_{i}^{[3]}\leq \mathbf{0}_{M}. 
\end{align}
\end{subequations}
It follows from Eqs.~\eqref{task_collision_i} and \eqref{task_obstacle_i} that ${\partial \phi_{i,j}^{[2]}(\bold{x}_i)}/{\partial \bold{x}_i}={-(\bold{x}_i-\bold{x}_j)}/{\|\bold{x}_i-\bold{x}_j\|}\neq\mathbf{0}_2, \forall  \bold{x}_i\in\mathcal T_{i,j}^{[2]}$ and 
${\partial  \phi_{i,l}^{[3]}(\bold{x}_i)}/{\partial \bold{x}_i}$ $={-(\bold{x}_i-\bold{x}_l^o)}/{\| \bold{x}_i-\bold{x}_l^o\|}\neq\mathbf{0}_2, \forall x_i \in\mathcal T_{i,l}^{[3]}$
are both locally Lipschitz continuous. One has that $\bold{u}_i^{\ast}$ in \eqref{resilient_prioritization_optimization2} are locally Lipschitz continuous according to (Theorem 3 in \cite{ames2016control}). The proof is almost the same as \cite{ames2016control} which is thus omitted here.
Combining two cases together yields $\bold{u}_i^{\ast}$ for \eqref{resilient_prioritization_optimization1} are locally Lipschitz continuous. The proof is thus completed.
\end{proof}

\begin{lemma}
\label{Lemma_P3_P4}
Under \textbf{A3, A4}, all the robots governed by \eqref{robot_dynamic} and \eqref{resilient_prioritization_optimization1} can guarantee the collision and obstacle avoidance all along, i.e., \textbf{P3, P4} are achieved.
\end{lemma}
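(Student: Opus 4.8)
The plan is to prove that, along the degenerated program \eqref{resilient_prioritization_optimization1}, the optimal input $\bold u_i^{\ast}$ renders both safe sets forward invariant, and then to read \textbf{P3} and \textbf{P4} off directly. I would start from the initialization supplied by \textbf{A3} and \textbf{A4}: these give $\|\bold x_i(0)-\bold x_j(0)\|>r$ and $\|\bold x_i(0)-\bold x_l^o(0)\|>r+r_l^o$, equivalently $\phi_{i,j}^{[2]}(\bold x_i(0))<0$ and $\phi_{i,l}^{[3]}(\bold x_i(0))<0$, so robot $i$ begins strictly inside the collision-avoidance set $\mathcal T_{i,j}^{[2]}$ of \eqref{task_collision_i} and the obstacle-avoidance set $\mathcal T_{i,l}^{[3]}$ of \eqref{task_obstacle_i}; the strictness removes any boundary subtlety at $t=0$.

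Next I would invoke the forward-invariance mechanism of Definition \ref{definition_minimum_energy}. The constraints $\bold g_i^{[2]}\le\mathbf 0_{m_i}$ and $\bold g_i^{[3]}\le\mathbf 0_M$ in \eqref{resilient_prioritization_optimization1} are precisely the slack-free CBF inequalities \eqref{task_2_CBF_constraint} and \eqref{task_3_CBF_constraint} written with $h=-\phi$, so they are enforced pointwise by the optimizer $\bold u_i^{\ast}$. Lemma \ref{Lemma_local_Lipschiz} provides the regularity hypothesis that \eqref{invariant_property} requires, namely that $\bold u_i^{\ast}$ is locally Lipschitz continuous on $\mathcal T_{i,j}^{[2]}$ and $\mathcal T_{i,l}^{[3]}$, so the closed loop $\dot{\bold x}_i=\bold u_i^{\ast}(\bold x_i)$ admits a unique solution. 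Applying the invariance property \eqref{invariant_property} with $t_1=0$ then keeps $\bold x_i(t)$ inside both sets for every $t>0$, which is exactly $\|\bold x_i(t)-\bold x_j(t)\|\ge r$ and $\|\bold x_i(t)-\bold x_l^o(t)\|\ge r+r_l^o$, i.e. \textbf{P3} and \textbf{P4}.

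The step I expect to be delicate is that $\mathcal T_{i,j}^{[2]}$, and for moving obstacles $\mathcal T_{i,l}^{[3]}$, are \emph{time-varying} sets, since $\bold x_j(t)$ and $\bold x_l^o(t)$ drift, whereas \eqref{invariant_property} is stated for a fixed target set. Differentiating $h_{ij}=\|\bold x_i-\bold x_j\|-r$ along the true motion produces the \emph{relative} rate $\tfrac{(\bold x_i-\bold x_j)\t}{\|\bold x_i-\bold x_j\|}(\bold u_i-\bold u_j)$, while \eqref{task_2_CBF_constraint} only bounds the part involving $\bold u_i-\widehat{\bold v}_i$. I would close this gap by using the symmetry of the sensing set \eqref{neighbor_set} so that robot $j$ simultaneously enforces the mirror constraint, adding the two pairwise inequalities, and appealing to \textbf{A6} so the feed-forward terms $\widehat{\bold v}_i,\widehat{\bold v}_j$ cancel against $\bold v_d$; this leaves $\dot h_{ij}\ge-2\bar\gamma(h_{ij})$ with $\bar\gamma(\cdot):=-\gamma(-\cdot)$ still of class $\mathcal K$, and the comparison lemma then forbids $h_{ij}$ from reaching zero. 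For static obstacles $\dot{\bold x}_l^o=0$, so the single-robot statement \eqref{invariant_property} applies verbatim and no such reconciliation is needed. Reconciling the transient mismatch between $\widehat{\bold v}_i$ and the actual neighbor velocity $\bold u_j$ is the only genuinely subtle point; the remainder is a direct instantiation of the CBF forward-invariance theorem already cited in Definition \ref{definition_minimum_energy}.
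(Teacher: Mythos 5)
Your first two paragraphs are exactly the paper's proof: \textbf{A3}, \textbf{A4} place $\bold{x}_i(0)$ in $\mathcal T_{i,j}^{[2]}$ and $\mathcal T_{i,l}^{[3]}$, Lemma~\ref{Lemma_local_Lipschiz} supplies the local Lipschitz continuity of $\bold{u}_i^{\ast}$, and the forward-invariance property \eqref{invariant_property} is then invoked to keep $\bold{x}_i(t)$ in both sets for all $t\geq 0$; the paper stops there.

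Your third paragraph goes beyond the paper, and the difficulty you identify is real: the safe sets are time-varying while \eqref{invariant_property} is stated for a fixed set, and the constraints in \eqref{resilient_prioritization_optimization1} bound only the component of $\dot\phi$ produced by $\bold{u}_i-\widehat{\bold{v}}_i$, not the full relative rate. The paper silently ignores this. Unfortunately, your patch does not close the hole, on two counts. First, static obstacles are \emph{not} exempt, contrary to your claim: since \eqref{task_3_CBF_constraint} is written with $\bold{u}_i-\widehat{\bold{v}}_i$, for a fixed $\bold{x}_l^o$ one only obtains
\begin{align}
\dot\phi_{i,l}^{[3]}\leq-\gamma\big(\phi_{i,l}^{[3]}\big)+\frac{\partial\phi_{i,l}^{[3]}}{\partial\bold{x}_i\t}\widehat{\bold{v}}_i,\nonumber
\end{align}
and the leftover feedforward term is positive whenever $\widehat{\bold{v}}_i$ carries robot $i$ toward the obstacle, so \eqref{invariant_property} does not apply verbatim there; the same reconciliation you perform for robot--robot pairs is needed, and is in fact harder, because the obstacle enforces no mirror constraint that you could add. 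Second, in the pairwise case your cancellation requires $\widehat{\bold{v}}_i=\widehat{\bold{v}}_j$ pointwise in time, but \textbf{A6} gives only asymptotic agreement; summing the two mirror constraints actually yields $\dot\phi_{i,j}^{[2]}\leq-2\gamma(\phi_{i,j}^{[2]})+\|\widehat{\bold{v}}_i-\widehat{\bold{v}}_j\|$, and on the boundary, where $\gamma$ vanishes, the residual can be positive, so the comparison argument does not forbid crossing --- you flag this yourself but leave it open. Closing it would require uniform (not merely asymptotic) estimator agreement, a robustness margin in the CBF constraints that absorbs the residual, or constraints written in the true relative velocities. As it stands, the airtight part of your argument coincides with the paper's proof, and the extra rigor you attempt --- which is precisely what a fully rigorous proof of \textbf{P3}, \textbf{P4} needs --- remains incomplete, a deficiency inherited from the paper's own constraint formulation rather than introduced by you.
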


\begin{proof}
Using \textbf{A3, A4}, one has that the initial states satisfy $\phi_{i,j}^{[2]}(\bold{x}_i(0))=r-\|\bold{x}_i(0)-\bold{x}_j(0)\|\leq0$, and $\phi_{i,l}^{[3]}(\bold{x}_i(0)):=r+r_l^o-\|\bold{x}_i(0)-\bold{x}_l^o\|\leq 0$.
Combining with Eqs.~\eqref{task_collision_i} and \eqref{task_obstacle_i} gives $\bold{x}_i(0)\in \mathcal T_{i,j}^{[2]}$ and $\bold{x}_i(0)\in \mathcal T_{i,l}^{[3]}$.
Based on Lemma~\ref{Lemma_local_Lipschiz}, one has that $\bold{u}_i^{\ast}$ in \eqref{resilient_prioritization_optimization1} are locally Lipschitz continuous in the target sets of $\mathcal T_{i,j}^{[2]}, \mathcal T_{i,l}^{[3]}$. Then, it follows from the forward invariance in \eqref{invariant_property} that $\bold{x}_i(t)\in \mathcal T_{i,j}^{[2]}, \bold{x}_i(t)\in \mathcal T_{i,l}^{[3]}, \forall t\geq 0$. The proof is thus completed.
\end{proof}

\subsubsection{Proof of {\bf P2}} Analogous to the proof of Lemma~\ref{Lemma_P3_P4}, we will first show that $\bold{u}_i^{\ast}$ in \eqref{resilient_prioritization_optimization1} are locally Lipschitz continuous in two cases and then prove \textbf{P2} using the asymptotic convergence in \eqref{convergence_property}.

\begin{lemma}
\label{lemma_final_convergence}
Under \textbf{A2} and \textbf{A6}, all the robots governed by \eqref{robot_dynamic} and~\eqref{resilient_prioritization_optimization1} can finally converge to the desired points and maneuver with $\bold{v}_d$ in \eqref{desired_velocity}, i.e., \textbf{P2} is achieved.
\end{lemma}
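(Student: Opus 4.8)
The plan is to mirror the two-step structure already used for \textbf{P3, P4}: first establish that the optimal input $\bold{u}_i^{\ast}$ of the degenerate minimum-energy program \eqref{resilient_prioritization_optimization1} is locally Lipschitz continuous on the convergence task set, and then invoke the asymptotic-convergence property \eqref{convergence_property} together with \textbf{A1} to force $\bold{x}_i$ onto its allocated target. By Proposition~\ref{new_proposition}, for $t\ge T_1$ the allocation $\bm\alpha^{\ast}$ is fixed, so each robot $i$ has a unique index $k$ with $\alpha_{i,k}=1$; its convergence CBF constraint is then the strict one in \eqref{resilient_prioritization_constraint_1}, whereas the remaining $N-1$ constraints are slackened by $\varpi$ and play no role. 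Hence the only active convergence task is $\mathcal T_{i,k}^{[1]}$ with characterizing function $\phi_{i,k}^{[1]}(\bold{x}_i)=\|\bold{x}_i-\bold{x}_k^{d}\|$ from \eqref{err_point_i}.

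First I would verify local Lipschitz continuity of $\bold{u}_i^{\ast}$ on $\mathcal T_{i,k}^{[1]}$ by repeating the case split of Lemma~\ref{Lemma_local_Lipschiz}. When the input bound $g_i^{[4]}=0$ is active, \textbf{A7} again yields a Lipschitz constant; when it is inactive, \eqref{resilient_prioritization_optimization1} reduces to a CLF--CBF quadratic program whose gradient $\partial\phi_{i,k}^{[1]}/\partial\bold{x}_i=(\bold{x}_i-\bold{x}_k^{d})/\|\bold{x}_i-\bold{x}_k^{d}\|$ is locally Lipschitz and nonzero for $\bold{x}_i\neq\bold{x}_k^{d}$, so (Theorem~3 in \cite{ames2016control}) applies. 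The delicate point is that this gradient is singular exactly at the goal $\bold{x}_i=\bold{x}_k^{d}$, i.e.\ at the single-point set $\mathcal T_{i,k}^{[1]}$ itself; I would therefore argue continuity only on $\mathbb{R}^n\setminus\{\bold{x}_k^{d}\}$ and handle the terminal point separately through the velocity argument below.

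Next, with $\bold{u}_i^{\ast}$ locally Lipschitz, the asymptotic-convergence property \eqref{convergence_property} gives $\lim_{t\to\infty}\phi_{i,k}^{[1]}(\bold{x}_i(t))=0$, and by \textbf{A1} this yields $\lim_{t\to\infty}\mathrm{dist}(\bold{x}_i(t),\mathcal T_{i,k}^{[1]})=0$; since $\mathcal T_{i,k}^{[1]}$ is the single point $\bold{x}_k^{d}$, this is exactly $\lim_{t\to\infty}\{\bold{x}_i(t)-\bold{x}_k^{d}(t)\}=0$, the first half of \textbf{P2}. For the maneuvering half, once $\phi_{i,k}^{[1]}\to 0$ the strict constraint in \eqref{resilient_prioritization_constraint_1} admits $\delta_{i,k}=0$, so the cost in \eqref{resilient_prioritization_optimization1} is minimized by $\bold{u}_i^{\ast}=\widehat{\bold{v}}_i$; invoking \textbf{A6}, $\widehat{\bold{v}}_i\to\bold{v}_d$ with $\|\widehat{\bold{v}}_i\|\le u_{\max}$, and since $\|\bold{v}_d\|<u_{\max}$ the input bound $g_i^{[4]}\le0$ is not binding in the limit, whence $\lim_{t\to\infty}\dot{\bold{x}}_i=\lim_{t\to\infty}\bold{u}_i^{\ast}=\bold{v}_d$.

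The main obstacle is reconciling the convergence task with the collision- and obstacle-avoidance constraints near the formation: I must rule out that an active constraint $\bold{g}_i^{[2]}\le\mathbf{0}_{m_i}$ or $\bold{g}_i^{[3]}\le\mathbf{0}_{M}$ obstructs $\bold{x}_i$ from reaching $\bold{x}_k^{d}$. This is where \textbf{A2} enters---because the desired points are mutually separated by more than $r$ (and, by \textbf{A4} and \eqref{obstacle_boundary}, clear of obstacles), the limiting configuration $\bold{x}_i=\bold{x}_k^{d}$ satisfies $\phi_{i,j}^{[2]}<0$ and $\phi_{i,l}^{[3]}<0$ strictly, so the avoidance constraints are inactive at the target and do not conflict with minimizing $\phi_{i,k}^{[1]}$. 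Confirming that these avoidance constraints stay inactive along the terminal approach (so the unconstrained minimizer $\bold{u}_i^{\ast}=\widehat{\bold{v}}_i$ is admissible) and cleanly disposing of the boundary singularity of $\phi_{i,k}^{[1]}$ are the technically subtle points I would dwell on.
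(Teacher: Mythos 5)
Your proposal is correct and follows essentially the same route as the paper's own proof: the same case split on whether the input bound $g_i^{[4]}$ is active, the same reduction to a CLF--CBF QP with the gradient of $\phi_{i,k}^{[1]}$ non-vanishing away from the target, the same appeal to the asymptotic-convergence property \eqref{convergence_property}, and the same slack-variable argument forcing $\bold{u}_i^{\ast}=\widehat{\bold{v}}_i$ and hence, via \textbf{A6}, maneuvering with $\bold{v}_d$. If anything, you are more careful than the paper on two points it glosses over: you explicitly confine the Lipschitz argument to $\mathbb{R}^n\setminus\{\bold{x}_k^{d}\}$ because of the gradient singularity at the goal (the paper handles this implicitly by claiming continuity only for $\bold{x}_i\notin\mathcal T_{i,k}^{[1]}$, citing Theorem~11 of \cite{xu2015robustness} rather than the result of \cite{ames2016control} you invoke), and you identify where \textbf{A2} actually enters---keeping the collision- and obstacle-avoidance constraints inactive near the targets---whereas the paper lists \textbf{A2} as a hypothesis but never uses it in its proof.
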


\begin{proof}
Analogously, if the input constraints are activated (i.e., $g_{i}^{[4]}=0$), one has that $\|\bold{u}_i^{\ast}\|=\bold{u}_{\max}$, which implies that the local Lipschitz continuity of $\bold{u}_i^{\ast}$ in \eqref{resilient_prioritization_optimization1} are satisfied as shown in the proof in Lemma~\ref{Lemma_local_Lipschiz}.

If the input constraints are not activated (i.e., $g_{i}^{[4]}<0$), one has that the optimization \eqref{resilient_prioritization_optimization1} becomes the CLF-CBF QP as shown in \eqref{resilient_prioritization_optimization2}. Then, from the fact that ${\partial \phi_{i,k}^{[1]}(\bold{x}_i)}/{\partial \bold{x}_i}={(\bold{x}_i-\bold{x}_k^{d})}/{\|\bold{x}_i-\bold{x}_k^{d}\|}\neq\mathbf{0}_2, \forall \bold{x}_i\notin\mathcal T_{i,k}^{[1]},$
it follows from Theorem~11 in \cite{xu2015robustness} that $\bold{u}_i^{\ast}(\bold{x}_i)$ in \eqref{resilient_prioritization_optimization2} are locally Lipschitz continuous for  $\bold{x}_i\notin\mathcal T_{i,k}^{[1]}$. In this way, one has that $\bold{u}_i^{\ast}(\bold{x}_i)$ in \eqref{resilient_prioritization_optimization1} are locally Lipschitz continuous as well.
Using the asymptotic convergence in \eqref{convergence_property}, each robot~$i$ will finally converge to the target set $\mathcal T_{i,k}^{[1]}$, i.e., $\lim_{t\rightarrow\infty} \phi_{i, i}^{[1]}(\bold{x}_i(t))=0, \forall i\in\mathcal V$
because $\mathcal T_{i,k}^{[1]}$ only contains one desired point $x_k^d$, which further implies that 
\begin{align}
\label{conver_eq}
\lim_{t\rightarrow\infty} \Phi_{i}^{[1]}(t)=\mathbf{0}_N, \forall i\in\mathcal V.
\end{align}
Together with $\gamma(\cdot)$ defined in Definition~\ref{definition_minimum_energy}, it gives that 
\begin{align}
\label{conver_eq2}
\lim_{t\rightarrow\infty}\gamma((\Phi_{i}^{[1]}(t))=\mathbf{0}_N, \forall i\in\mathcal V.
\end{align}
Substituting Eqs.~\eqref{conver_eq} and \eqref{conver_eq2} into the constraints $\bold{g}_{i}^{[1]}\leq \mathbf{0}_N$ in \eqref{resilient_prioritization_optimization1} yields that the optimal value $\bm\delta_i^{\ast}$ can be minimized to be zeros as well, which implies that the optimal inputs of \eqref{resilient_prioritization_optimization1} will finally become $\bold{u}_i^{\ast}(\bold{x}_i)=\widehat{\bold{v}}_i$ to make the optimization always feasible. Finally, from \textbf{A6}, one has that $\lim_{t\rightarrow\infty}\bold{u}_i(t)=\bold{v}_d, \forall i\in\mathcal V$, which implies that each robot $i, i\in\mathcal V$ will maneuver with $\bold{v}_d$. The proof is thus completed.
\end{proof}

\subsubsection{Proof of {\bf P1}}
Since the path crossings $\mathcal C$ in {\bf P1} are generally defined to be the same positions of possibly different robots at possibly different historical time instances, which can only be calculated after a long period of time or once the formation maneuvering has been completed. Consequently, it is difficult to calculate and minimize the path crossings in real time for the robot group $\mathcal V$, which thus motivates the CATE algorithm~\eqref{resilient_prioritization_optimization} to indirectly and implicitly minimize possible path crossings by minimizing the desired-point allocation cost $b\|\bold{1}_N\t \bm\alpha -\bold{1}_N\t \|^2$ and the slack variables $\|\bm\delta_i\|^2$ simultaneously. Precisely, first, the allocation cost $\|\bold{1}_N\t \bm\alpha -\bold{1}_N\|=0$ and the integer constraints~(\ref{resilient_prioritization_optimization}d) ensure $N!$ choices of $\bm\alpha$ for an arbitrary one-to-one correspondence between desired points and robots. Then, the on-the-fly minimization of $\|\bm\delta_i\|^2$ further selects the most appropriate one-to-one correspondence among $N!$ choices between robots and their desired points when the obstacles occupy the passing space. In this way, the possible path crossings can be implicitly minimized.

\begin{theorem}
\label{theo_CATE_obstacle}
Under Assumptions \textbf{A1}-\textbf{A6}, all robots $\mathcal V$ governed by \eqref{robot_dynamic}, the CATE algorithm~\eqref{resilient_prioritization_optimization} can achieve the {\it MPCM navigation in obstacle environments}, i.e., \textbf{P1-P4} in Problem~\ref{label_pro}.
\end{theorem}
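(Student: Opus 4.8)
The plan is to assemble the theorem from the building blocks already established in Sections~\ref{sub_algorithm_feature} and \ref{sub_ana_P1_P4}, since each of the four properties has effectively been isolated in a preceding lemma. First I would invoke the feasibility guarantee of Section~\ref{subsub_feasibility}: the trivial solution \eqref{trivial_solution} shows that, under \textbf{A3}--\textbf{A4}, the per-robot optimization \eqref{resilient_prioritization_optimization} is feasible at every instant, so the optimal triple $\{\bm\alpha^{\ast},\bold{u}_i^{\ast},\bm\delta_i^{\ast}\}$ is well defined throughout the navigation. This legitimizes analyzing the optimal solution in the steps that follow.

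Next I would use Proposition~\ref{new_proposition} to split the time axis at the finite instant $T_1$. On $[0,T_1)$ the allocation matrix is still evolving, but the cost (\ref{resilient_prioritization_optimization}a) stays bounded, so this transient may be disregarded for the asymptotic claims. For $t\ge T_1$, Proposition~\ref{new_proposition} fixes $\bm\alpha(t)=\bm\alpha^{\ast}$, which is one-to-one because $\|\bold{1}_N\t\bm\alpha^{\ast}-\bold{1}_N\t\|=0$ together with the hard constraint (\ref{resilient_prioritization_optimization}e), and which stays invariant by Lemma~\ref{live_lock} (ruling out livelock). Hence the integer constraint drops out and the per-robot problem degenerates into the minimal-energy task execution \eqref{resilient_prioritization_optimization1}. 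On this reduced problem I would chain the established results in the order \textbf{P3}, \textbf{P4}, \textbf{P2}, \textbf{P1}. Lemma~\ref{Lemma_P3_P4}---which itself rests on the local-Lipschitz continuity of $\bold{u}_i^{\ast}$ from Lemma~\ref{Lemma_local_Lipschiz} and the forward invariance \eqref{invariant_property}---delivers \textbf{P3} and \textbf{P4} directly under \textbf{A3}--\textbf{A4}. Lemma~\ref{lemma_final_convergence} then delivers \textbf{P2} under \textbf{A2} and \textbf{A6}, yielding both $\bold{x}_i\to\bold{x}_k^d$ and $\bold{u}_i\to\bold{v}_d$ via the asymptotic-convergence property \eqref{convergence_property}. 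The roles of \textbf{A1} and \textbf{A5} should also be recorded here: \textbf{A1} is what converts $\phi_{i,k}^{[1]}\to 0$ into genuine distance convergence, while \textbf{A5} guarantees the sensing radius activates the collision CBF before a conflict can occur.

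The only property that is not a verbatim citation of a prior lemma is \textbf{P1}, and I expect this to be the main obstacle---not computationally but conceptually, because the crossing set $\mathcal C$ is spatio-temporal and cannot be evaluated or minimized instantaneously. The argument I would give is the implicit one sketched in Section~\ref{sub_ana_P1_P4}: the condition $\|\bold{1}_N\t\bm\alpha-\bold{1}_N\t\|=0$ together with (\ref{resilient_prioritization_optimization}e) admits all $N!$ one-to-one robot--point mappings, furnishing the full freedom of spatial orderings, while the simultaneous on-the-fly minimization of the slack energy $\|\bm\delta_i\|^2$ selects, among those orderings, the one whose reduced task-execution cost is smallest when obstacles occlude the passing space. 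Because a smaller slack energy corresponds to shorter detours and hence fewer geometric conflicts, minimizing $b\|\bold{1}_N\t\bm\alpha-\bold{1}_N\t\|^2+c\|\bm\delta_i\|^2$ implicitly minimizes $|\mathcal C|$. I would stress that the weight ordering $b\gg c\gg 1$ is exactly what guarantees the allocation term is resolved first---by the lemma establishing the one-point-one-robot condition \eqref{eq_con_a_poi_a_rob} and by Proposition~\ref{new_proposition}---before the slack term refines the ordering, and I would frame \textbf{P1} as holding in this implicit sense rather than as a certified minimizer of $|\mathcal C|$. Collecting \textbf{P1}--\textbf{P4} then closes the proof.
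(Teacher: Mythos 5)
Your proposal is correct and follows essentially the same route as the paper: the paper's own proof of Theorem~\ref{theo_CATE_obstacle} is a one-line appeal to the analysis in Sections~\ref{sub_algorithm_feature} and \ref{sub_ana_P1_P4}, and your write-up simply makes that assembly explicit---feasibility via \eqref{trivial_solution}, the time-split at $T_1$ via Proposition~\ref{new_proposition} and Lemma~\ref{live_lock}, then \textbf{P3}, \textbf{P4} from Lemma~\ref{Lemma_P3_P4}, \textbf{P2} from Lemma~\ref{lemma_final_convergence}, and the implicit (non-certified) argument for \textbf{P1}, exactly as the paper intends. Your additional remarks on the roles of \textbf{A1}, \textbf{A5}, and the weight ordering $b\gg c\gg 1$ are consistent with the paper and, if anything, make the citation-style proof more self-contained.
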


\begin{proof}
We draw the conclusion from the analysis in Sections~\ref{sub_algorithm_feature} and \ref{sub_ana_P1_P4} directly.
\end{proof}

\begin{remark}
For the common undesirable deadlock problem where the trajectory of the robot is directed at some point toward the center of the obstacles \cite{grover2023before}, the proposed CATE algorithm \eqref{resilient_prioritization_optimization} can conveniently deal with it by reallocating the desired points to different robots to change the direction of the robot’s trajectory. Precisely, suppose the robot $i$, allocated to the desired point $k$, gets stuck and stops in a deadlock at some point due to the obstacles. It follows from  Lemma~\ref{lemma_final_convergence} that the remaining robots will keep moving toward their allocated desired points by reducing the slack vectors $\|\bm\delta_{i}\|^2$ in the cost functions \eqref{resilient_prioritization_optimization}. However, after some time, since robot $i$ remains stuck and the slack variable $\delta_{i,k}$ remains unchanged, the cost function in \eqref{resilient_prioritization_optimization} can no longer decrease. In such a situation, since the proposed CATE algorithm \eqref{resilient_prioritization_optimization} is updated on the fly, it will iteratively find a new allocation matrix $\bm\alpha$ to assign a different desired point $l$ to robot $i$ to further decrease the cost functions in \eqref{resilient_prioritization_optimization}. Therefore, the original direction of the robot $i$ will be changed, and it is no longer toward the center of obstacles, and thus robot $i$ breaks the original symmetry and escapes the undesirable deadlock situations, which implies that robot $i$ can continue the navigation task without interruption.
\end{remark}

\begin{table*}[]
\scriptsize
\centering
\begin{threeparttable}[b]
\caption{Summary of the problem's hardness in unlabeled multi-robot motion planning works.}
\label{table_hardness}
\belowrulesep=0pt
\aboverulesep=0pt
\begin{tabular}{  >{\centering\arraybackslash}p{3.4cm}   >{\centering\arraybackslash}p{2.1cm}  >{\centering\arraybackslash}p{1.2cm} 
    >{\centering\arraybackslash}p{1.2cm} 
    >{\centering\arraybackslash}p{2.9cm} 
    >{\centering\arraybackslash}p{1.2cm} 
    >{\centering\arraybackslash}p{3.2cm} }
\toprule
Problem categrary&  Hardness \rule{0pt}{2ex} & Robot dynamics  & Obstacles & Optimization objective & Dimension & Positioning of robot, deseired points and obstacles\\
\midrule
Unlabelled GMP\cite{solovey2016hardness,brocken2020multi} & PSPACE-hard \cite{hopcroft1984complexity} & \ding{55}\rule{0pt}{2.5ex}    &Static &  \ding{55}  & 2D  &  No requirement  \\
Unlabelled MPL-F\cite{yu2012distance,adler2015efficient,banyassady2022unlabeled} & P  & \ding{55}\rule{0pt}{2.5ex}  &  \ding{55}  & Shortest path length   & 2D    & Well separation   \\
Unlabelled MPL-O\cite{solovey2015motion} & P &  \ding{55}\rule{0pt}{2.5ex} & Static   &  Path-length minimization  & 2D    & Well separation  \\
Problem~\ref{label_pro} & Undetermined &  \ding{51}\rule{0pt}{2.5ex} &  Static/moving   &  Path-crossing minimization  &  2D/3D   & Assumptions {\bf A2}-{\bf A4} \\
 \midrule  
\end{tabular}
GMP: General motion planning, MPL-F: Minimum-path-length motion planning in free space, MPL-O: Minimum-path-length motion planning in obstacle environments.
   \end{threeparttable}
\end{table*}

 \begin{figure*}[!htb]
\centering
\includegraphics[width=16.2cm]{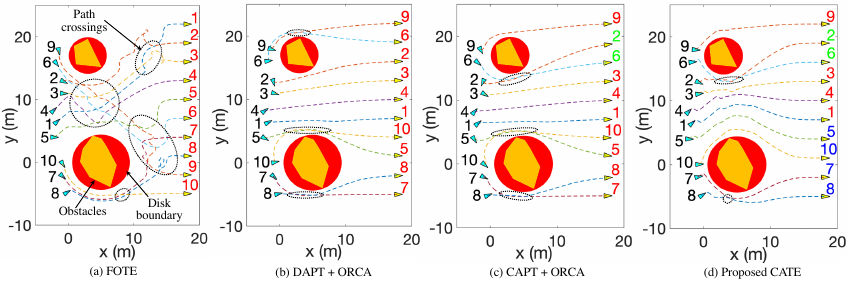}
\caption{ \textbf{First simulation: Column formation in obstacle environments.} 
(a) Trajectories of ten robots forming a column formation in the presence of two obstacles, using fixed-ordering task execution (FOTE)~\cite{notomista2021resilient}. (b) Trajectories of ten robots from the same initial setting to a column formation, using DAPT \cite{alonso2012image} + ORCA. (c) Trajectories of ten robots from the same initial setting to a column formation, using CAPT \cite{turpin2014capt} + ORCA. (d) Trajectories of ten robots from the same initial setting to a column formation, using the proposed CATE.  The blue and yellow triangles and the dashed lines denote the initial, final positions, and trajectories of the robots, respectively. The orange irregular polygons, red circles, and dashed ellipses are the obstacles, disk boundaries, and region of path crossings, respectively. Note that the green labels in subfigure (c) represent the switching of the desired-point assignments compared with the original red labels in subfigure (b). The blue labels in subfigure (d) denote the differences of the desired-point assignments compared with subfigure~(c).}
\label{sota_benchmark}
\end{figure*}

\subsection{Complexity Discussion of Problem~\ref{label_pro}}
Due to the obstacles occupying the passing space between robots and their desired points, and the requirements for adapting to dynamic environments, it is not feasible to assign fixed costs for minimizing path-crossing points for each robot like the classic 1-1 matching problem  \cite{turpin2014capt,sabattini2017optimized}. To address this, we introduce the allocation cost $\|\bold{1}_N\t \bm\alpha -\bold{1}_N\t \|$ together with the constraint in (\ref{resilient_prioritization_optimization}e) that simply restricts the solution space to the $N!$ possible one-to-one mappings, rather than computing the optimal assignment directly.  Subsequently, the additional costs of slack variable $\|\bm\delta_i\|^2$ further select the most appropriate assignment. Although the cost $\|\bold{1}_N\t \bm\alpha -\bold{1}_N\t \|$ is influenced by the allocation of other robots, the common objective of the one-to-one correspondence in \eqref{eq_con_a_rob_a_poi} and \eqref{eq_con_a_poi_a_rob} remains unaffected. Moreover, this cost also offers flexibility and adaptability in matching, particularly in obstacles and dynamic environments. 
However, in contrast to~\cite{turpin2014capt,sabattini2017optimized}, this cost design makes it difficult to determine whether the matching problem is efficiently solvable. Currently, the CATE algorithm \eqref{resilient_prioritization_optimization} is formulated to be the MIQP framework that is solved using the Gurobi optimization tool \cite{gurobi}. Furthermore, as shown by the shortest convergence time of our algorithm in Table~\ref{table_EM}, the matching problem is solved with favorable performance. The theoretical analysis of whether the matching in our problem is optimal or not will be a future direction.

\begin{table*}[]
\scriptsize
\centering
\begin{threeparttable}[b]
\caption{\textbf{Second simulation: Statistical comparison of the ``Arrow formation" using different methods} in 16 different robot-obstacle group settings.}
\label{table_EM}
\belowrulesep=0pt
\aboverulesep=0pt
\begin{tabular}{  >{\centering\arraybackslash}p{1.6cm}   >{\centering\arraybackslash}p{2.35cm}  >{\centering\arraybackslash}p{1.2cm} 
    >{\centering\arraybackslash}p{1.2cm} 
    >{\centering\arraybackslash}p{1.2cm} 
    >{\centering\arraybackslash}p{1.2cm} 
    >{\centering\arraybackslash}p{1.2cm} 
    >{\centering\arraybackslash}p{1.2cm} 
    >{\centering\arraybackslash}p{1.2cm}
     >{\centering\arraybackslash}p{1.2cm} }
\toprule
 &  Metrics \rule{0pt}{2ex} & \multicolumn{4}{c}{Success rates (\%)} & \multicolumn{4}{c}{Convergence time (s):  Mean (Standard Deviation)}   \\
\midrule
\rule{0pt}{2.5ex} Robots  & \diagbox{Methods}{Obstacles} &   4  &  5    &   6 &  7    &   4  &  5   &   6  &  7       \\
\cmidrule(r){3-6}  \cmidrule(r){7-10} 
 \multirow{4}{*}{5} & FOTE & 100\rule{0pt}{2.5ex}    &60   &  100  & 90  &  13.70(3.1) &  12.43(2.3)    & 13.48(3.0) &  12.62(2.7)  \\
 & DAPT+ORCA  & 90\rule{0pt}{2.5ex}  & 100   & 100  & 90    & 22.53(3.8)  & 23.88(2.8)  & 20.12(3.3) & 22.03(3.5)   \\
 & CAPT+ORCA &  80\rule{0pt}{2.5ex} & 100   &  100  & 100    & 25.18(0.5)  & 25.28(0.6)  & 25.34(0.2) & 24.32(0.7)    \\
 & Ours(CATE) &  \textbf{100}\rule{0pt}{2.5ex} &  \textbf{100}   &  \textbf{100}  &  \textbf{100}   & \textbf{9.44(1.1)} &    \textbf{10.19(2.1)}   &  \textbf{10.08(1.1)}  &    \textbf{10.90(2.3)}  \\
 \midrule  
   
 \multirow{4}{*}{7} 
 & FOTE & 100\rule{0pt}{2.5ex}    & 100    & 90    &  70  & 11.10(2.2)   &  14.12(2.8)    & 13.94(2.9)  & 14.86(3.1)   \\
 & DAPT+ORCA  & 100\rule{0pt}{2.5ex}  &  100   & 80   & 70    &    23.66(2.7)    &  24.29(2.9)    & 23.29(2.9)    & 22.63(2.8)    \\
 & CAPT+ORCA &  90\rule{0pt}{2.5ex} &  100     & 80   & 70   & 25.27(0.2)    &  25.24(0.3)     &  25.27(0.5)   &  25.22(0.2)  \\
 & Ours(CATE) &  \textbf{100}\rule{0pt}{2.5ex} &   \textbf{100}      &  \textbf{100}  &  \textbf{100}    &    \textbf{9.41(0.5)}    &  \textbf{9.81(0.6)}   &  \textbf{11.44(3.0)}  &    \textbf{9.85(0.6)}  \\
 \midrule
 
 \multirow{4}{*}{9} 
 & FOTE & 100   &  100   &  100  & 100   &  15.01(2.4) & 13.16(3.1) &  14.27(2.5)  & 14.80(2.5)\rule{0pt}{2.5ex}  \\
 & DAPT+ORCA  & 70\rule{0pt}{2.5ex}  &  70  &  60  &  60  & 25.28(0.2) &  24.66(2.1) & 25.36(0.1) & 23.73(3.2) \\
 & CAPT+ORCA &  70\rule{0pt}{2.5ex} & 70    &  40   &  70   & 25.11(0.1) &  25.14(0.2)  & 25.09(0.2) & 25.17(0.2)  \\
 & Ours(CATE) &  \textbf{100}\rule{0pt}{2.5ex} &   \textbf{100}    &  \textbf{100}   &   \textbf{100}   &  \textbf{10.36(2.1)}  &  \textbf{9.60(0.9)}  &    \textbf{9.34(0.2)} &   \textbf{9.43}(0.5) \\
 \midrule 
 
 \multirow{4}{*}{11} 
 & FOTE &100    &  100    & 100   & 100   &  14.00(2.5)  & 14.94(2.7)  &  13.28(2.9) & 15.34(2.5)\rule{0pt}{2.5ex} \\
 & DAPT+ORCA  &60\rule{0pt}{2.5ex} &  90    & 80   & 30    & 25.28(0.5) &  24.65(2.1) & 25.3(0.08) & 18.96(8.3) \\
 & CAPT+ORCA & 100\rule{0pt}{2.5ex}  &  90   &  60  & 50    & 25.14(0.2) &  25.24(0.3) & 25.12(0.3) & 25.39(0.5) \\
 & Ours(CATE) &  \textbf{100}   &   \textbf{100}   &    \textbf{100}  & \textbf{100}    &  \textbf{9.91(1.01)}  &  \textbf{9.75(0.55)} &   \textbf{10.63(2.5)} &  \textbf{11.00(3.1)}\rule{0pt}{2.5ex}\\
 \midrule 
\end{tabular}
For each method, we conduct $16\times10=160$ trials of different robot-obstacle groups. 
1) {\it Success rate}: the ratio of the successful {\it MPCM} navigation in 10 trials of each robot-obstacle group. 
2) {\it Convergence time} $T_{r}:=\inf\{t~\big|~\| \bold{x}_i(t)-\bold{x}_{s[i]}^d(t)\|\leq\varrho, \forall i\in\mathcal V\}$ with  $\varrho=0.2$, which is the time when the distance between each robot and its desired point is small than~$\varrho$.
   \end{threeparttable}
\end{table*}

According to the property {\bf P2} in Definition~\ref{definition_MPCM}, Problem~\ref{label_pro} falls within the category of unlabeled multi-robot motion planning. Based on the summary and comparison of the problem hardness in Table~\ref{table_hardness}, it is challenging to determine the hardness of Problem~\ref{label_pro}. \textbf{(i)} On one hand, if we ignore the robot dynamics, the optimization objective of path-crossing minimization, the well-separation condition of robots, desired points, and obstacles, and only consider static obstacles and 2D environments, there exists a polynomial-time reduction from
the special case in \cite{solovey2016hardness} to the simplified version of Problem~\ref{label_pro}, proving that the simplified Problem~\ref{label_pro} is also P-SPACE hard.
%
Unfortunately, there is no existing work that proves the theoretical hardness of unlabeled multi-robot motion planning with robot dynamics, although practical solutions have been proposed using efficient sample-based algorithms \cite{le2021multi}. If any of the ignored elements are reintroduced, the hardness of Problem~\ref{label_pro} may become even harder than the conjectured P-SPACE hardness. \textbf{(ii)} On the other hand, if the radius of the robot is selected to be 4/r, Assumptions {\bf A2-A4} in Problem~\ref{label_pro} are similar to the well-separation condition in \cite{solovey2015motion}, with the slight distinction in the initial relative positions between robots and desired points. This similarity may reduce the hardness of Problem~\ref{label_pro} to be solvable in polynomial time. However, as shown in Table~\ref{table_hardness}, Problem~\ref{label_pro} still involves additional constraints of the robot dynamics, moving obstacles, different optimization objectives, and 3D environments. These factors make it challenging to find a provable and optimal solution that runs in polynomial time. Although we provide an MIQP solution (i.e., the CATE algorithm \eqref{resilient_prioritization_optimization}), this approach cannot be used to determine the hardness of Problem~\ref{label_pro} in theory.
\textbf{(iii)} Finally, due to the absence of a suitable reduction framework, determining the hardness of Problem~\ref{label_pro} under the aforementioned additional constraints is a non-trivial task. This challenge 
represents a promising direction for future research in its own right.

\begin{figure}[!htb]
\centering
\includegraphics[width=6.5cm]{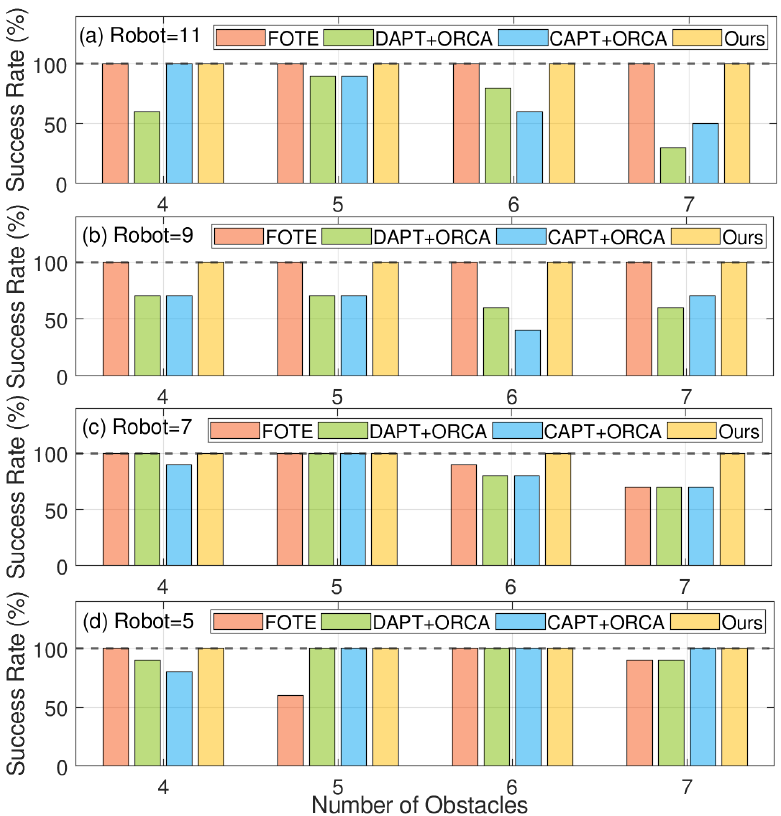}
\caption{Bar figure comparison of the success rates for three baseline methods and the proposed CATE algorithm \eqref{resilient_prioritization_optimization} in 16 different robot-obstacle groups.} 
\label{success_rate}
\end{figure}

\section{Simulations and Experiments}
\label{sec_verification}
In this section, we perform simulations to validate the effectiveness, adaptability and feasibility of the CATE algorithm \eqref{resilient_prioritization_optimization} in obstacle environments, and then conduct AMRs experiments to demonstrate its efficacy.


\begin{figure}[!htb]
\centering
\includegraphics[width=6.55cm]{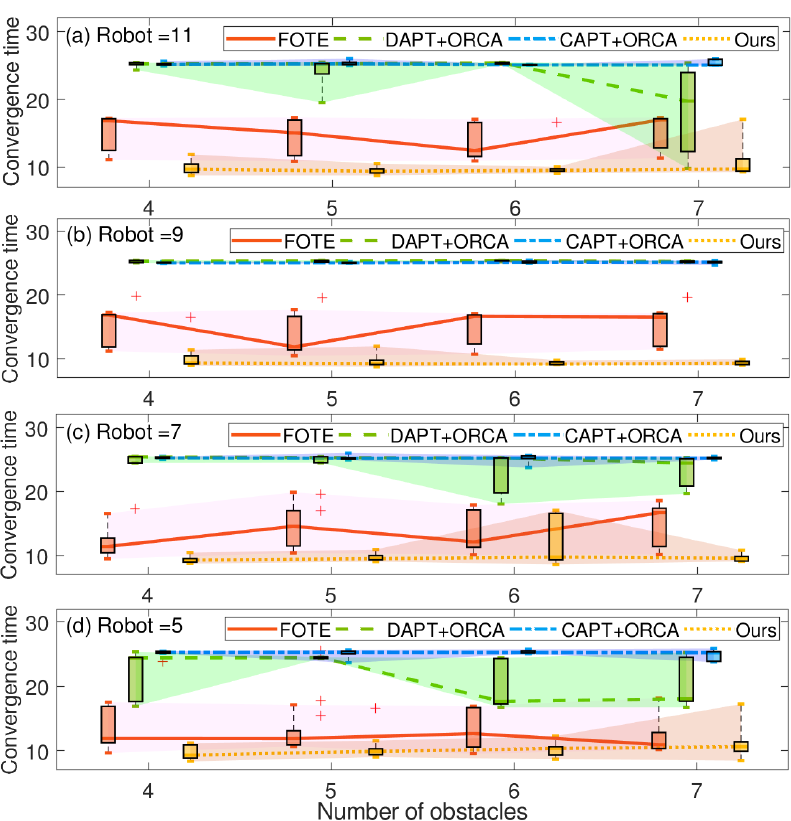}
\caption{Boxplot comparison of the convergence times for three baseline methods and the proposed CATE algorithm \eqref{resilient_prioritization_optimization} in 16 different robot-obstacle groups. }
\label{convergence_time}
\end{figure}

\subsection{Accommodating the CATE to Unicycle Dynamics}
Recalling the single-integrator dynamics ~\eqref{robot_dynamic}, the input $u_i$ can be regarded as high-level guidance commands for arbitrary robots of specific dynamics as long as the low-level velocity tracking is regulated in time. Therefore, we first accommodate the CATE signals~\eqref{resilient_prioritization_optimization} to a practical unicycle model, which is described by the common input-output-state feedback linearization to control mobile robots \cite{khalil2002nonlinear,kim1999tracking,tnunay2017distributed},
\begin{align}
\label{unicycle}
\dot{x}_{i,1}=&v_i\cos\theta_i,~\dot{x}_{i,2}=v_i\sin\theta_i,~\dot{\theta}_i=u_{\theta_i},\nonumber\\
(\dot{x}_{i,3}=&u_{i,z}~\mbox{if robots are in 3D}), i\in\mathcal V,
\end{align}
where $\bold{x}_i=[x_{i,1}, x_{i,2}, (x_{i,3})]\t\in\mathbb{R}^{3}$ is the position, $\theta_i\in \mathbb{R}$ is the heading angle in X-Y plane, $v_i\in\mathbb{R}, u_{\theta_i}\in\mathbb{R}, u_{i,z}\in\mathbb{R}$ are the linear velocity, angle velocity, and climbing velocity, respectively. Then, it follows from the near-identity diffeomorphism \cite{glotfelter2019hybrid} that $\bold{u}_i^{\ast}$ in CATE algorithm \eqref{resilient_prioritization_optimization} can be transformed to the desired velocities $\{v_i, u_{\theta_i}, u_{i,3}\}$ in~\eqref{unicycle} as shown below
\begin{align}
\label{unicycle_transformation}
&v_{i}=u_{i,1}^{\ast}\cos\theta_i +u_{i,2}^{\ast}\sin\theta_i, u_{\theta_i}=-\frac{u_{i,1}^{\ast}}{l}\sin\theta_i +\frac{u_{i,2}^{\ast}}{l} \cos\theta_i, \nonumber\\
&(u_{i,z}=u_{i,3}^{\ast}~\mbox{if robots are in 3D}), i\in\mathcal V,
\end{align}
with $\bold{u}_{i}^{\ast}=[u_{i, 1}^{\ast}, u_{i, 2}^{\ast}, u_{i, 3}^{\ast}]\t$ and $l\in\mathbb{R}^+$ being an arbitrary small constant.

\begin{table*}[]
\scriptsize
\centering
\begin{threeparttable}[b]
\caption{\textbf{Second simulation:} Quantitative evaluation metrics of the multi-robot navigation efficiency with 16 different robot-obstacle group settings.}
\label{table_EM2}
\belowrulesep=0pt
\aboverulesep=0pt
\begin{tabular}{  >{\centering\arraybackslash}p{1.6cm}   >{\centering\arraybackslash}p{2.35cm}  >{\centering\arraybackslash}p{1.2cm} 
    >{\centering\arraybackslash}p{1.2cm} 
    >{\centering\arraybackslash}p{1.2cm} 
    >{\centering\arraybackslash}p{1.2cm} 
    >{\centering\arraybackslash}p{1.2cm} 
    >{\centering\arraybackslash}p{1.2cm} 
    >{\centering\arraybackslash}p{1.2cm}
     >{\centering\arraybackslash}p{1.2cm} }
\toprule
 &  Metrics \rule{0pt}{2ex} & \multicolumn{4}{c}{Path crossings:  Mean (Standard Deviation)} & \multicolumn{4}{c}{Trajectory length (m):  Mean (Standard Deviation)}   \\
\midrule
\rule{0pt}{2.5ex} Robots  & \diagbox{Methods}{Obstacles} &   4 & 5    &   6  &  7    &   4  &  5   &   6  &  7       \\
\cmidrule(r){3-4} \cmidrule(r){5-6} \cmidrule(r){7-8} \cmidrule(r){9-10}
 \multirow{4}{*}{5} & FOTE & 9.2(2.9)\rule{0pt}{2.5ex}    &6.5(2.5)  &  6.2(3.92) & 5.89(2.62) & 129.4(21.2)   & 92.68(5.6)   &  89.07(15.5)     &  91.28(13.8)  \\
 & DAPT+ORCA  & 2.67(1.2)\rule{0pt}{2.5ex}  &2.9(1.70)   & 2.5(1.50)  & 2.78(1.03)   &  87.5(8.2)  & 87.21(5.1)    & 87.09(5.6)  & 85.46(3.4)   \\
 & CAPT+ORCA &  \textbf{ 2.13(0.9)}\rule{0pt}{2.5ex} & \textbf{1.5(0.95)} & \textbf{1.7(0.90)}  & \textbf{ 2.4(1.69)}  &   85.01(4.5) & 84.46(6.0)      & 84.83(4.6)  & 83.78(3.1)    \\
 & Ours(CATE) &  2.9(1.8)\rule{0pt}{2.5ex} &  2.3(2.00)  & 3.6(1.62) & 2.8(1.83) &   \textbf{76.15}(11.4) &   \textbf{76.90}(7.2)    &   \textbf{78.92}(8.2)     &   \textbf{81.95}(13.2)  \\
 \midrule  
   
 \multirow{4}{*}{7} 
 & FOTE & 14.3(8.9)\rule{0pt}{2.5ex}    & 14(4.47) & 12.78(3.6) & 12.43(3.8)  &  127.0(16.1)  &  142.0(23.5)    &  144.7(20.7)    & 137.0(10.3)   \\
 & DAPT+ORCA  & 6.4(2.9)\rule{0pt}{2.5ex}  &   5.5(1.50)   & 4.86(1.6) & 6.67(6.0) &  119.77(3.2)  & 120.52(5.6)   &  116.92(5.5)    & 122.63(3.1)    \\
 & CAPT+ORCA &  \textbf{ 4(2)}\rule{0pt}{2.5ex} &  \textbf{ 3.5(1.50)}  & \textbf{ 4.63(1.3)} & \textbf{ 4.57(2.1)}    &  116.57(4.1)  & 118.11(5.2)    &  118.15(5.1)     &  114.20(4.4)  \\
 & Ours(CATE) &  7.7(2.1)\rule{0pt}{2.5ex} &   6.2(3.0)   & 8.4(4.5)   & 6.4(1.7) &   \textbf{101.48}(8.6) &  \textbf{108.1}(10.1)     &  \textbf{108.7}(12.1)    &   \textbf{106.94}(4.2)    \\
 \midrule
 
 \multirow{4}{*}{9} 
 & FOTE & 26.5(11.1)   &  30.7(10.8)  & 28.5(14.5) & 26.7(5.6) &  182.8(17.7)   & 183.9(25.7)     & 189.1(22.9)    & 194.0(19.0)\rule{0pt}{2.5ex}  \\
 & DAPT+ORCA  & 9.86(3.6)\rule{0pt}{2.5ex}  &  12.14(3.3)  & 14.67(1.9) & 12.33(7.1) &   160.74(5.6)   &  164.4(6.9)   &  164.4(5.8)    & 164.85(5.2)   \\
 & CAPT+ORCA &  9.43(2.4)\rule{0pt}{2.5ex} & 8.71(3.20)   & 8.25(3.8) & \textbf{7.57(2.5)}  &  153.92(7.6)    &  153.46(5.5)    & 151.81(7.7)    & 156.64(5.1)   \\
 & Ours(CATE) & \textbf{8.4(3.3)}\rule{0pt}{2.5ex} &  \textbf{8.7(2.00) }  &\textbf{ 7.2(3.2)} &   8.4(3.14) &   \textbf{136.6}(12.5) &  \textbf{130.6}(11.3)     &  \textbf{132.28}(6.6)  & \textbf{139.32}(6.8)  \\
 \midrule 
 
 \multirow{4}{*}{11} 
 & FOTE &36.4(10.8)    &  40.1(8.2)   & 39.7(6.26)  & 38(14.59)  &  232.9(13.9)  & 231.6(17.1)   & 232.3(25.2)    &  251.69(8.7)\rule{0pt}{2.5ex} \\
 & DAPT+ORCA  &15.5(4.5)\rule{0pt}{2.5ex} & 15.67(4.4)  & 11.25(3.6)  & 12(0.86) & 201.50(6.3)  & 201.64(6.3)      & 199.82(3.8)   & 197.03(7.0)  \\
 & CAPT+ORCA & \textbf{ 9.1(2.81)}\rule{0pt}{2.5ex}  &  \textbf{ 10.89(2.9)} &   \textbf{11(3.65)}  & \textbf{10(2.19)} & 181.96(7.3)  & 189.2(10.3)     &  186.03(3.2)   & 186.37(5.2)    \\
 & Ours(CATE) &  10.7(3.4)   &   11.1(3.0)  & 11.2(2.89)  & 13.5(7.95) &   \textbf{164.89}(9.8) & \textbf{162.58}(7.3)     & \textbf{168.4}(10.5)  & \textbf{176.0}(14.6)\rule{0pt}{2.5ex}\\
 \midrule 
\end{tabular}
1) {\it Path crossings} are the number of crossing points among different paths during the convergence time $T_r$. 2) {\it Trajectory length} is the total length of all robots' paths during the convergence time $T_r$. Note that the path crossings and trajectory lengths are only calculated if the {\it MPCM} navigation is successfully achieved.
   \end{threeparttable}
\end{table*}
\begin{figure}[!htb]
\centering
\includegraphics[width=7.2cm]{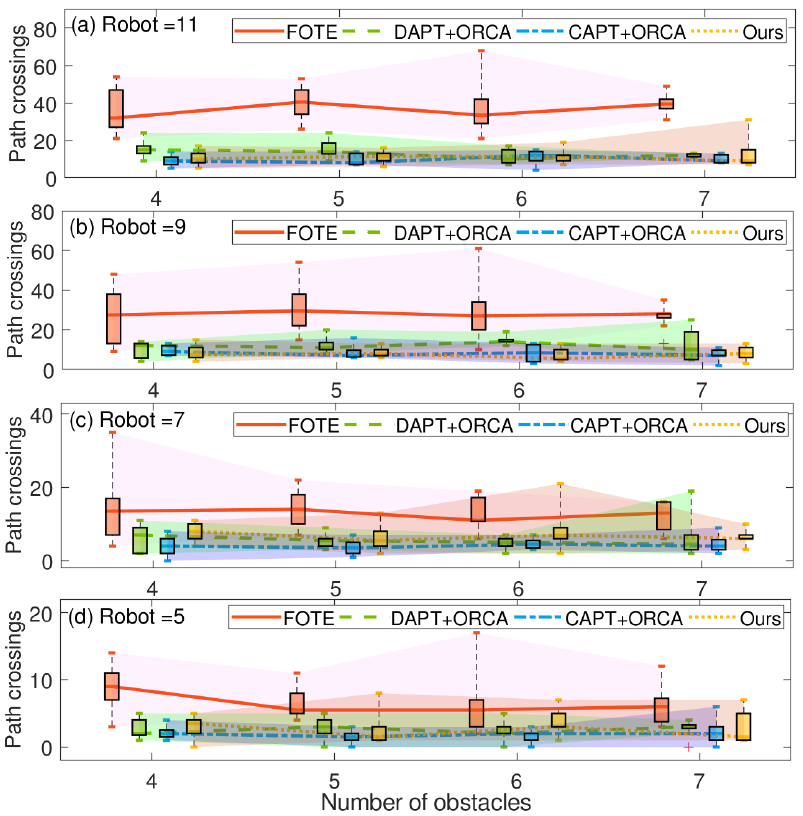}
\caption{Boxplot comparison of the path crossings for three baseline methods and the proposed CATE algorithm \eqref{resilient_prioritization_optimization} in 16 different robot-obstacle groups. } 
\label{path_crossing}
\end{figure}

\begin{figure}[!htb]
\centering
\includegraphics[width=7.25cm]{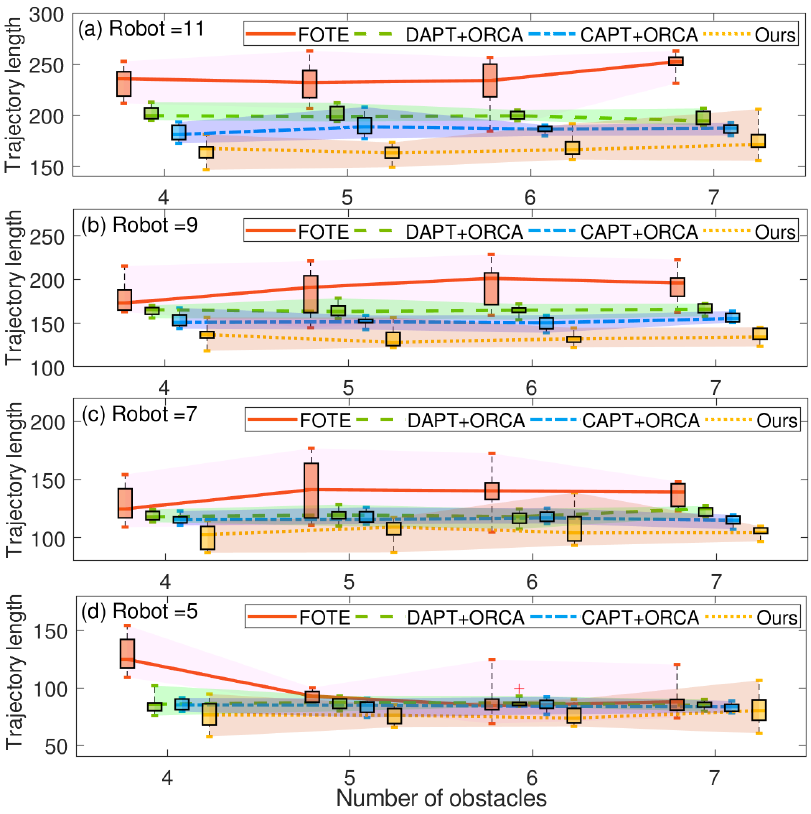}
\caption{Boxplot comparison of the trajectory length for three baseline methods and the proposed CATE algorithm \eqref{resilient_prioritization_optimization} in 16 different robot-obstacle groups. } 
\label{trajectory_length}
\end{figure}

 \begin{figure*}[!htb]
\centering
\includegraphics[width=16cm]{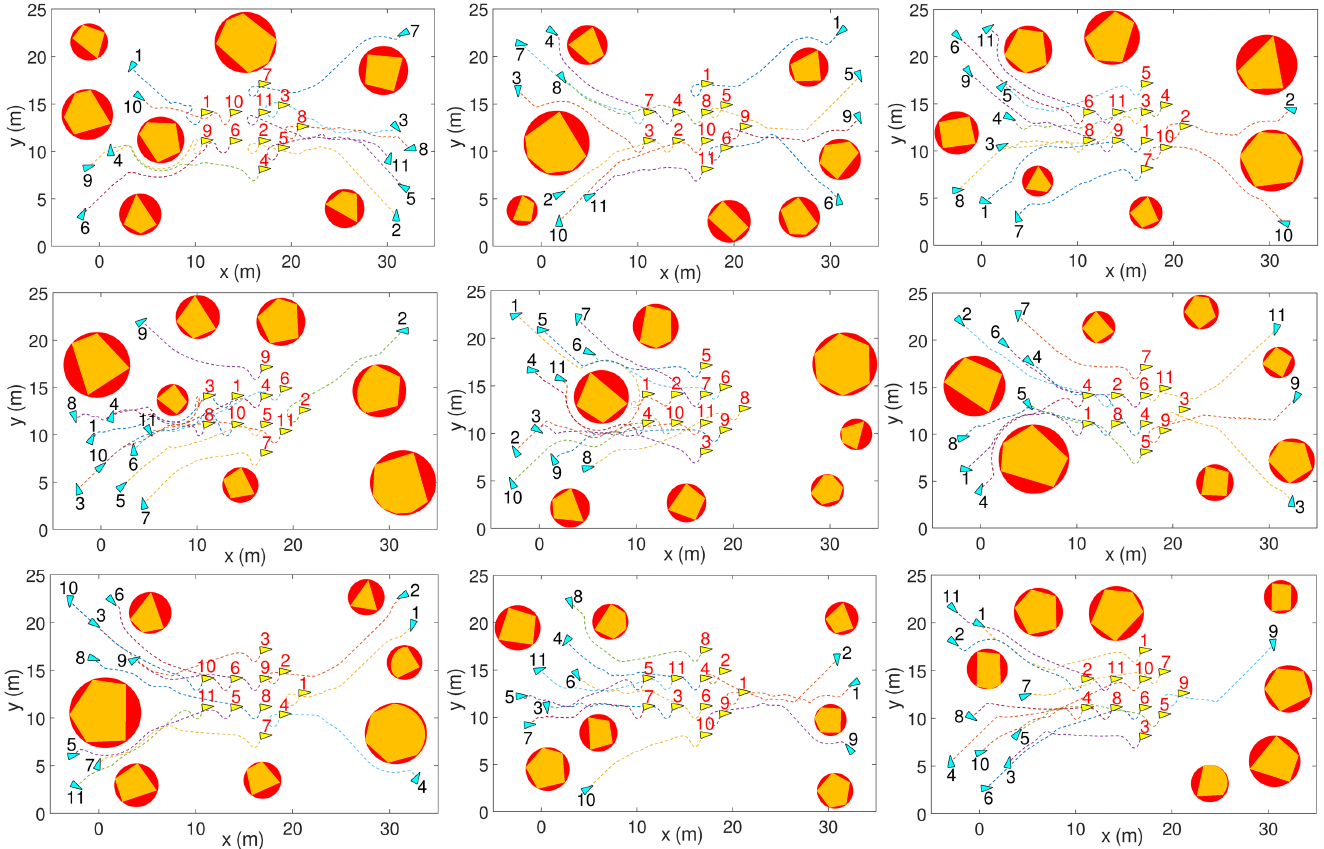}
\caption{Nine illustrative examples of the ``Arrow" formation using the proposed CATE algorithm~\eqref{resilient_prioritization_optimization} in the \textbf{second simulation}.
In the detailed nine trials, the numbers of robots and obstacles are set to be the same, i.e., $N=11$ and $M=7$, and the size of the obstacle circle is randomly selected in the region of $r\in[1.7, 4]$. The positions of the robots and obstacles are randomly selected within the space of $\{x\in[-5,~35], y\in[0,~25]\}$, but do not overlap each other or in the space of $\{x\in[10,~25], y\in[7,~18]\}$.
 The abbreviations, blue and yellow triangles, red circles, and dashed colored lines have the same meanings as those in Fig.~\ref{sota_benchmark}.}
\label{example_3}
\end{figure*}

\subsection{Baseline Methods}
\label{base_line}
To comparatively evaluate the performance of the proposed CATE algorithm~\eqref{resilient_prioritization_optimization}, we use the following state-of-the-art baselines to conduct the same simulations and experiments.

\subsubsection{FOTE~\cite{notomista2021resilient}} a state-of-the-art general navigation method that can be used to form the navigation in the obstacle environments with fixed-ordering sequences.

\subsubsection{DAPT\cite{alonso2012image} + ORCA~\cite{van2008reciprocal}} the DAPT algorithm \cite{alonso2012image} is a decoupled method (see Fig.~\ref{previous_PCM}) to generate non-intersecting paths in obstacle-free {\it MPCM navigation} mission, which cannot be adapted to compare with the proposed CATE algorithm~\eqref{resilient_prioritization_optimization} in obstacle environments directly. However, we add the ORCA method \cite{van2008reciprocal} by regarding obstacles as static robots to avoid collision avoidance. 

\subsubsection{CAPT\cite{turpin2014capt} + ORCA~\cite{van2008reciprocal}} the CAPT algorithm \cite{turpin2014capt} is a concurrent allocation and planning method (see Fig.~\ref{previous_PCM}) which can generate non-intersecting paths more efficiently in obstacle-free {\it MPCM navigation} mission. However,  such kind of method cannot be adapted to compare with the proposed CATE algorithm \eqref{resilient_prioritization_optimization} in obstacle environments as well. Therefore, we also add the ORCA method \cite{van2008reciprocal}  by regarding obstacles as static robots to avoid collision avoidance.


\subsection{Simulations}
\label{algo_simulations}
For all simulations, we choose the same parameters for convenience. Precisely, $l$ in \eqref{unicycle_transformation} is set to be $l=0.5$, the input limit of each robot is set to be $u_{\max}=3$, i.e., $\|\bold{u}_i\|\leq3$. Using \textbf{A5}, the sensing and collision radii are set to be $R=4$ and $r=1$, respectively. The weights $b$ and $c$ in~(\ref{resilient_prioritization_optimization}a) are set to be $b=10^5, c=10^2$, respectively, and the penalty parameter $\varpi$ in~(\ref{resilient_prioritization_optimization}b) is set to be $\varpi=1000$. As for the distributed estimator for $v_d$ in~\eqref{resilient_prioritization_optimization}, one can design $\widehat{\bold{v}}_i, i\in\mathcal V,$ according to~\cite{hong2006tracking} with a cyclic connected topology, which fulfills \textbf{A6}. 

In the first simulation (see Fig.~\ref{sota_benchmark}), to show the effectiveness of the CATE algorithm \eqref{resilient_prioritization_optimization}, we compare it with another three baseline methods in Section~\ref{base_line}. Precisely, we consider ten robots achieving the column formation with two circular obstacles, which are located at $\bold{x}_1^{o}=[5,0]\t, \bold{x}_2^o=[3,17]\t$ satisfying $\phi_{i,l}^{[3]}(\bm\sigma_i, \bold{x}_l^o):=1+r_l^o-\|\bm\sigma_i-\bold{x}_l^o\|\leq0, l=1,2$ and $r_1^o=4.5, r_2^o=3$. The desired points are set according to \textbf{A2}. Figs.~\ref{sota_benchmark}~(a)-(d) demonstrate the temporal evolution of robots from the same initial states (blue triangles) fulfilling~\textbf{A3, A4} to the successful column formation (yellow triangles) by using four different methods. It is observed in Figs.~\ref{sota_benchmark} (b)-(d) that the latter three methods can reduce path crossings (i.e., dashed-ellipse regions) by the flexible-ordering column coordination.
However, Fig.~\ref{sota_benchmark} (d) exists the least path crossings (i.e., dashed-ellipse regions) than Figs.~\ref{sota_benchmark} (a)-(c), which implies that the proposed CATE algorithm~\eqref{resilient_prioritization_optimization} endows higher efficiency than the other three methods in the obstacle environments.

In the second simulations (see Tables~\ref{table_EM}-\ref{table_EM2} and Figs.~\ref{success_rate}-\ref{failure}), to better compare the performance with alternative approaches in Section~\ref{base_line}, we conduct extensive experiments of different robot-obstacle combinations and analyze the aggregated results. Precisely, as shown in Table~\ref{table_EM}, we select 16 groups of different numbers of robots and obstacles (i.e., robots: $N=5, 7, 9, 11$, obstacles $M=4,5,6,7$), and then conduct 10 trials for each group. In each trial, the size of the obstacle circle is randomly selected in the range of $r\in[1.7, 4]$. The positions of the robots and obstacles are randomly chosen within the area of $\{x\in[-5,~35], y\in[0,~25]\}$, but do not overlap other robots, obstacles, or in the center space of $\{x\in[10,~25], y\in[7,~18]\}$.

\begin{figure*}[!htb]
\centering
\includegraphics[width=16cm]{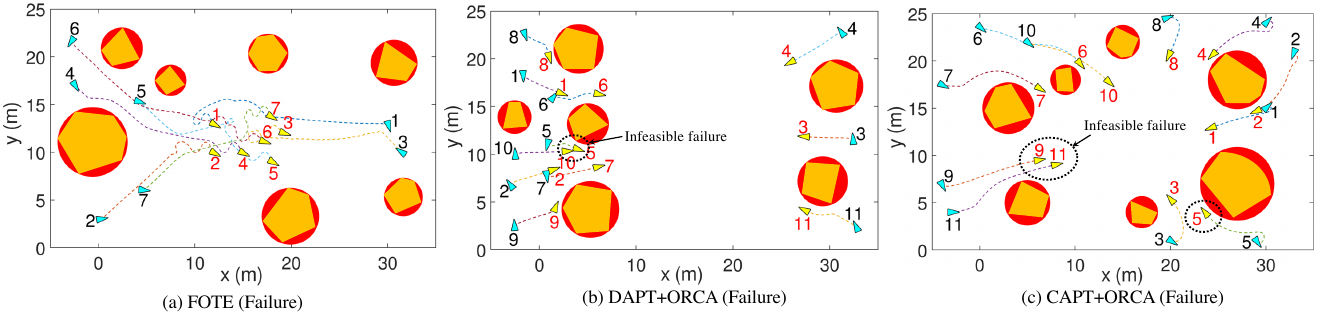}
\caption{ Three failure examples using the previous three different baseline methods in the \textbf{second simulation}. (a) The deadlock failure in the $7$-robot-$7$-obstacle scenario when using the FOTE method. (b) The infeasible failure in the $9$-robot-$6$-obstacle scenario when using the DAPT+ORCA method.  (c) The infeasible failure in the $11$-robot-$7$-obstacle scenario when using the CAPT+ORCA method.}
\label{failure}
\end{figure*}

\begin{figure*}[!htb]
\centering
\includegraphics[width=16cm]{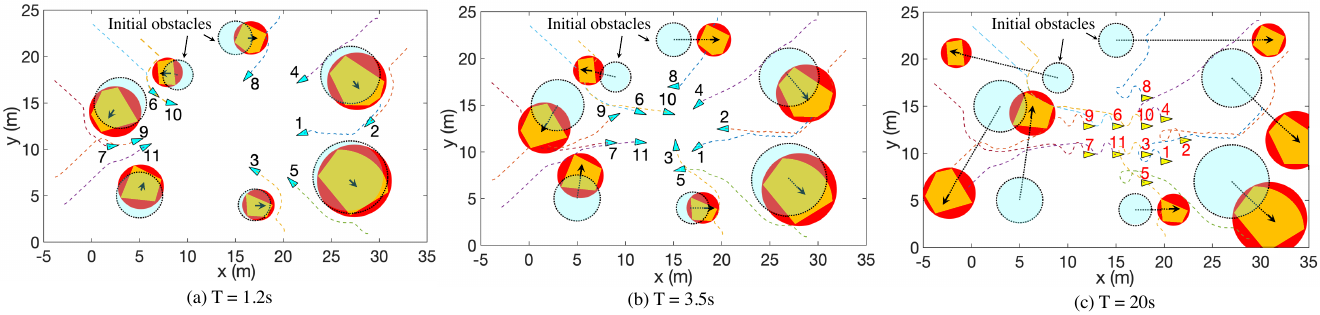}
\caption{\textbf{Third simulation: Effectiveness of the CATE algorithm \eqref{resilient_prioritization_optimization} in environments with moving obstacles.} (a) T=$1.2$s. (b) T=$3.5$s. (c) T=$20$s. The light blue circles represent the initial positions of the obstacles. The abbreviation, blue and yellow triangles, red circles, and dashed colored lines have the same meanings as those in Fig.~\ref{sota_benchmark}.}
\label{moving_obstacle}
\end{figure*}

 \begin{figure}[!htb]
\centering
\includegraphics[width=8cm]{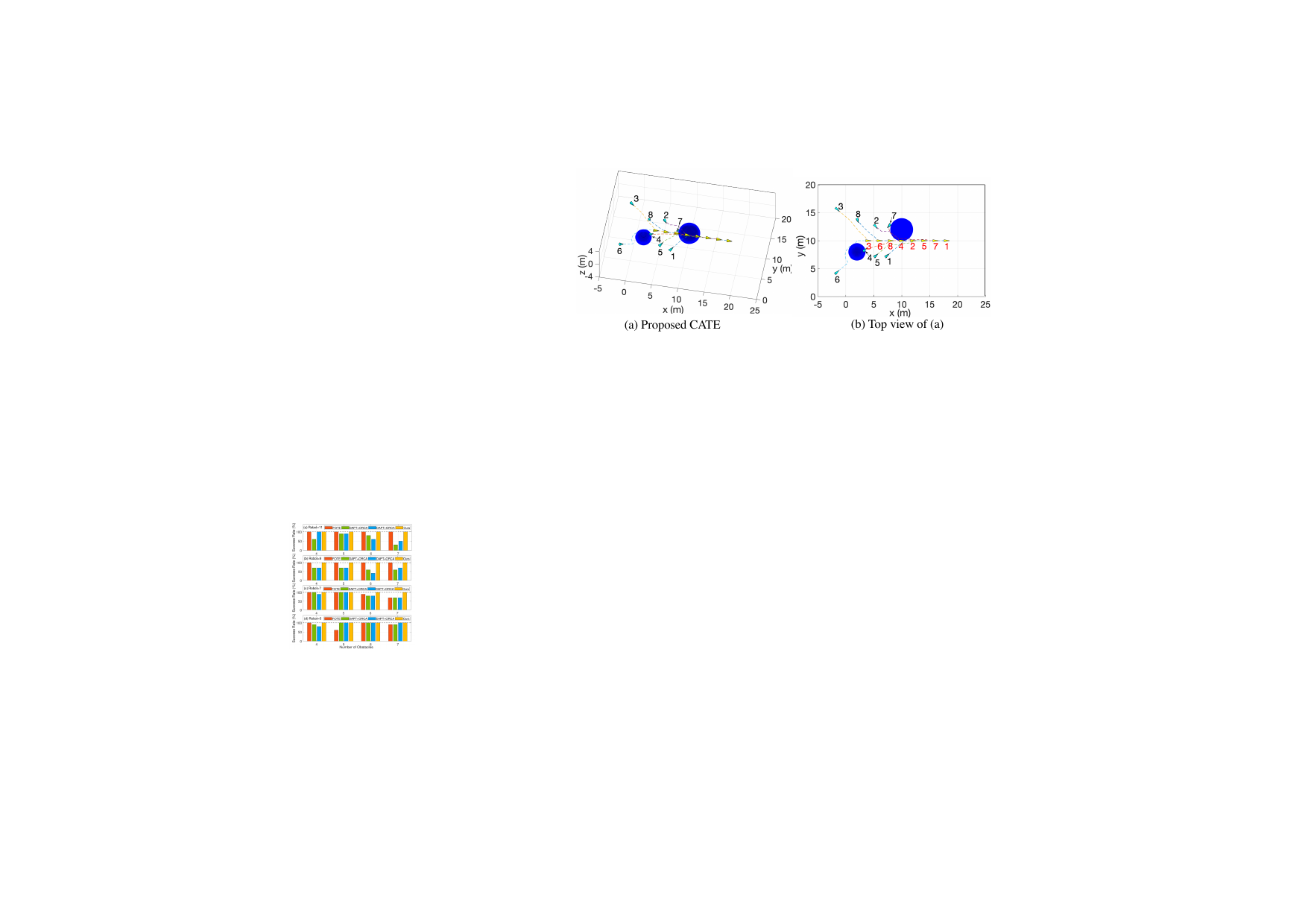}
\caption{\textbf{Fourth simulation: Adaptability of the CATE algorithm \eqref{resilient_prioritization_optimization} in 3D.} The blue and yellow triangles, and black and red labels have the same meanings as those in Fig.~\ref{sota_benchmark}. The dark blue balls denote the 3D obstacles. }
\label{rec_obstacle}
\end{figure}

 \begin{figure}[!htb]
\centering
\includegraphics[width=7.0cm]{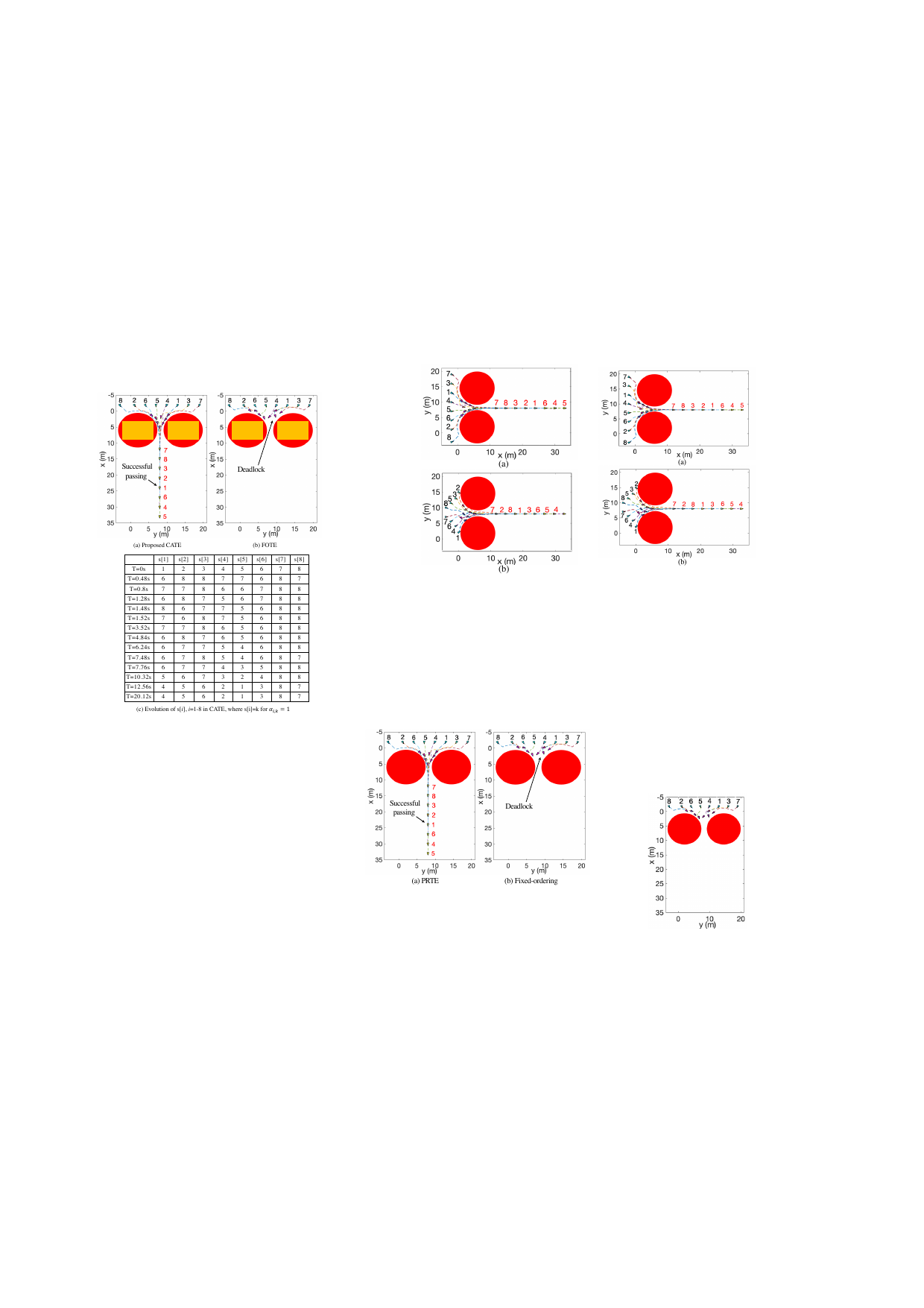}
\caption{\textbf{Fifth simulation: Feasibility of the CATE algorithm when passing through a narrow gap}. Trajectories comparison between the CATE algorithm~\eqref{resilient_prioritization_optimization} (see the successful passing in subfigure (a)) and the fixed-ordering methods (see the deadlock and failure in subfigure (b)), respectively. (c) The evolution of the reallocated ordering sequences $s[i], i\in\mathbb{Z}_1^8,$ for robot $i, i\in\mathcal V,$ satisfying $s[i]=k, \alpha_{i,k}=1$ in Fig.~\ref{in_out} (a). 
(The blue, magenta, and yellow triangles denote the initial, gap-entrance, and final positions of robots, respectively, and the red disks are the obstacles.)
}
\label{in_out}
\end{figure}

As shown on the left side of Table~\ref{table_EM} and Fig.~\ref{success_rate}, the success rates of the proposed CATE algorithm~\eqref{resilient_prioritization_optimization} are $100\%$ all along, which verifies the feasibility advantages of the algorithm in the obstacle environments. The right side of Table~\ref{table_EM} and~Fig.~\ref{convergence_time} illustrates the mean (SD) and boxplots of the convergence time of the four approaches, where the convergence time of the proposed CATE algorithm~\eqref{resilient_prioritization_optimization} is the smallest among the ones in three baseline benchmarks. To quantitatively evaluate the multi-robot navigation efficiency, we further employ two metrics for path crossings and trajectory lengths in Table~\ref{table_EM2}. It is observed on the left side of Table~\ref{table_EM2} that if the number of obstacles and robots increases, the proposed CATE algorithm~\eqref{resilient_prioritization_optimization} not only guarantees fewer path crossings than the three baseline methods but also maintains $100\%$ success rates all along. As shown on the right side of Table~\ref{table_EM2}, the proposed CATE algorithm~\eqref{resilient_prioritization_optimization} only requires the smallest trajectory length than the ones in three baseline methods, which thus indicates the highest efficiency of the CATE algorithm as well.
Moreover, Figs.~\ref{path_crossing} and \ref{trajectory_length} present the detailed boxplots of the path-crossing and trajectory-length metrics across four approaches, demonstrating that the CATE algorithm~\eqref{resilient_prioritization_optimization} achieves the highest performance.

Since the initial positions of robots and obstacles are randomly selected in each trial for each algorithm, one has that these positions are different at each trial. Meanwhile, as shown in Table~\ref{table_EM}, there are considerable differences in the success rates between the CAPT+ORCA and the CATE algorithms \eqref{resilient_prioritization_optimization}, whereas we only count the path crossings in Table~\ref{table_EM2} when the {\it MPCM navigation} is successfully achieved. Therefore, although the CAPT+ORCA algorithm achieves fewer path crossings than the proposed CATE algorithm \eqref{resilient_prioritization_optimization} in Table~\ref{table_EM2}, it is not suitable to compare such path crossings directly. Here, we provide a detailed explanation below.
 \begin{figure}[!htb]
\centering
\includegraphics[width=7.5cm]{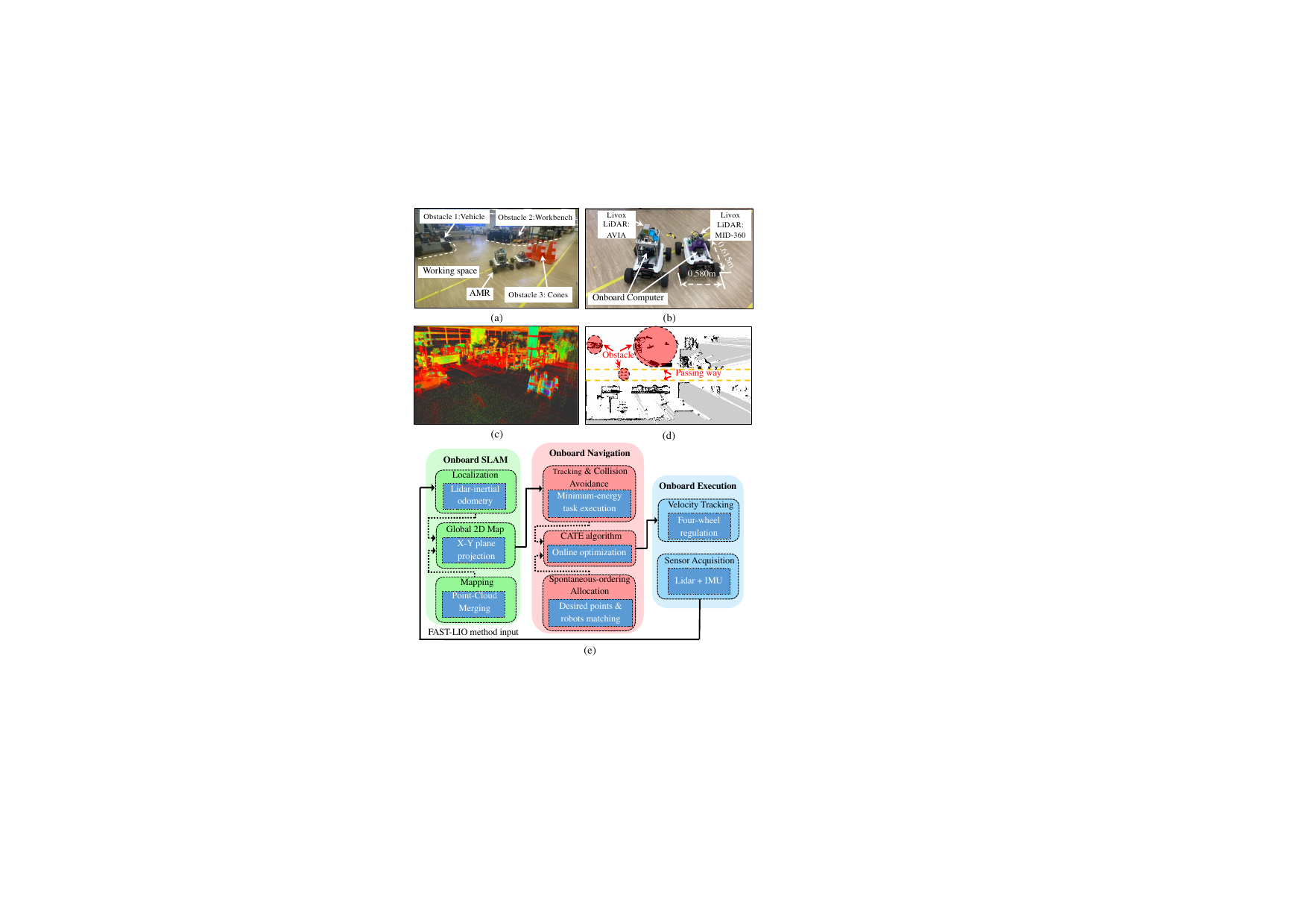}
\caption{ (a) Multi-AMR working space consisting of three different obstacles in the Robotic Research Center (RRC). (b) Two AMRs SCOUTMINI in the experiments and components. (c) 3D point map of the working space in (a) built by a popular LiDAR SLAM framework: Fast-LIO. (d) 2D octomap from the x-y plane projection of the original 3D octomap \cite{hornung2013octomap}. (e) Structure of the multi-AMR onboard navigation system, which consists of onboard SLAM, onboard navigation, and onboard execution, respectively.
}
\label{3D_Map}
\end{figure}

(i) When the number of robots and obstacles is relatively small (i.e., $N=5, 7$ and $M=4, 5$), the random selection of their positions results in a high probability that obstacles will not be positioned between the robots and the desired points. Then, such random placement causes the environment to degenerate into an almost obstacle-free one, which is more suitable for the CAPT+ORCA algorithm (this also explains why the success rate of the CAPT+ORCA algorithms is low since it is effective in simplistic environments). (ii) When the numbers of robots and obstacles become relatively large (i.e., $N=9, 11$ and $M=6, 7$), although the CAPT+ORCA algorithm achieves relatively few path crossings, its success rate is quite low (only 40\%-70\%, as indicated in Table~\ref{table_EM}). In contrast, the proposed CATE algorithm~\eqref{resilient_prioritization_optimization} not only guarantees a 100\% success rate (see Table~\ref{table_EM}) but also maintains relatively low path crossings. However, there is a tradeoff between the success rate and the number of path crossings. To achieve a 100\% success rate, the CATE algorithm~\ref{resilient_prioritization_optimization} needs to successfully deal with all complicated environments where obstacles effectively obstruct the motion from the initial positions to the desired positions of robots. 
This results in slightly more path crossings than the CAPT+ORCA algorithm. It's worth mentioning that the CATE algorithm~\eqref{resilient_prioritization_optimization} significantly outperforms the other three algorithms in terms of convergence time and trajectory length in Tables~\ref{table_EM} and \ref{table_EM2}. Therefore, the superiority of the CATE algorithm \eqref{resilient_prioritization_optimization} is verified as well.

\begin{figure*}[!htb]
\centering
\includegraphics[width=15.0cm]{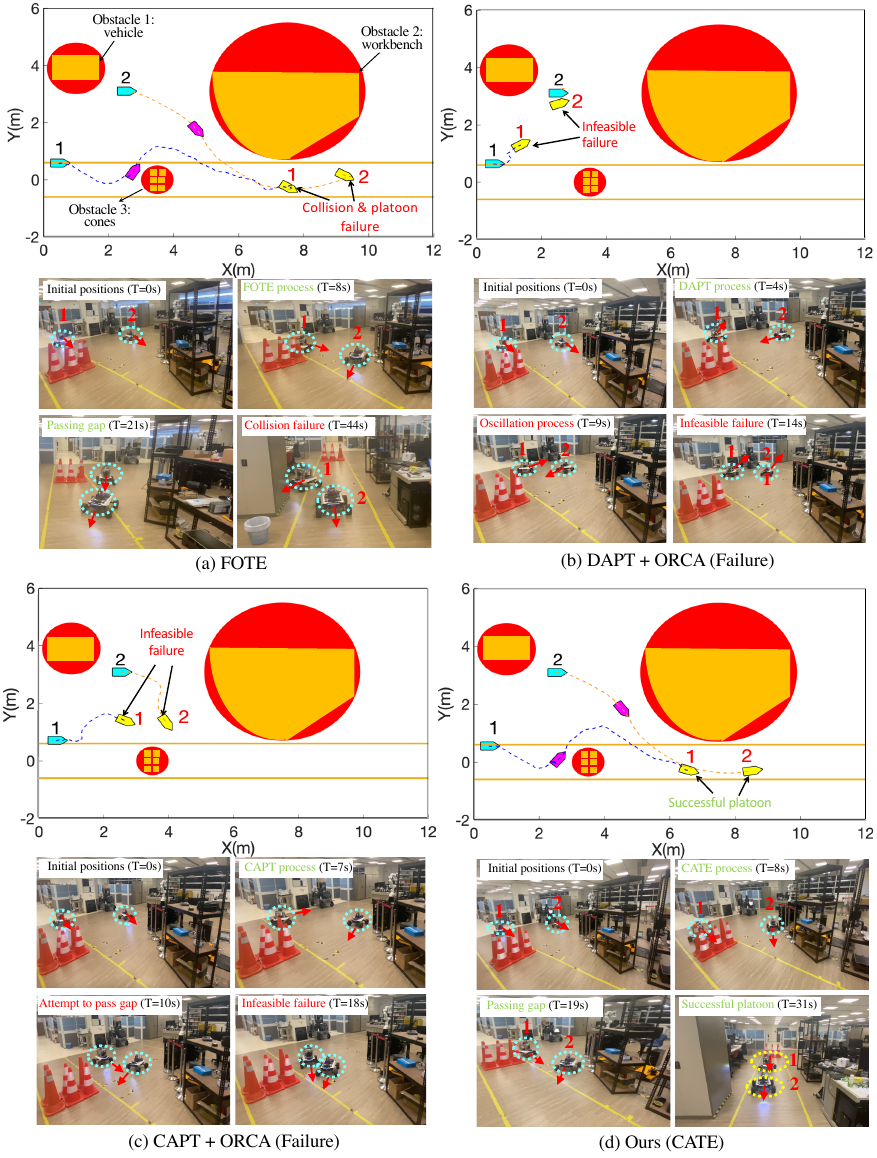}
\caption{\textbf{First experiment: Comparing platoon formations forming and snapshots using different methods in Robotic Research Center, NTU.} (a) FOTE. (b) DAPT + ORCA. (c) CAPT + ORCA. (d) Proposed CATE. The abbreviations, blue, magenta, and yellow triangles, red circles, dashed colored lines have the same meanings as those in Figs.~\ref{sota_benchmark} and \ref{in_out}.
}
\label{experiments_benchmark}
\end{figure*}

Additionally, Fig.~\ref{example_3} illustrates snapshots of nine examples of eleven robots governed by the CATE algorithm~\eqref{resilient_prioritization_optimization} from random initial positions (yellow triangles) to the ``arrow" formation (blue triangles)  in seven-obstacle environments with random size and initial positions. It is observed that the ``arrow formation" is achieved by robots with nine different ordering sequences, which verifies the flexibility to different environments. It is still worth mentioning that three failure examples using three different baseline methods are illustrated in Fig.~\ref{failure}, where the detailed reasons of failure are elaborated in Remark~\ref{failure_reason} later. Therefore, the performance of the CATE algorithm~\eqref{resilient_prioritization_optimization} outperforms the three baselines.

In the third simulation (see Fig.~\ref{moving_obstacle}), to show the effectiveness of the CATE algorithm (\eqref{resilient_prioritization_optimization} to deal with dynamic obstacles, we conduct additional simulations of eleven-robot {\it MPCM} arrow navigation with seven dynamic obstacles. It is observed in Fig.~\ref{moving_obstacle} that eleven robots from initial positions (blue triangles) can still form an arrow formation (yellow triangles) with arbitrary ordering sequences even if the seven obstacles are moving at different velocities. Therefore, the extension to dynamic obstacles is verified as well.


In the fourth simulation (see Fig.~\ref{rec_obstacle}), to show the adaptability of the CATE algorithm \eqref{resilient_prioritization_optimization} in higher-dimensional Euclidean space, we also consider eight robots forming a platoon in 3D with two ball obstacles.
The initial positions of robots and desired points are set satisfying \textbf{A2, A3, A4}.  It is observed in Fig.~\ref{rec_obstacle} (a)-(b) that the platoon is formed with flexible orderings, which implicitly reduce the path crossings.

In the fifth simulation (see Fig.~\ref{in_out}), to show the feasibility of the CATE algorithm \eqref{resilient_prioritization_optimization} for possible deadlocks, we compare our algorithm with the fixed-ordering strategy in a special case of eight robots passing through a narrow gap. In particular, we use two big disk obstacles centered at $\bold{x}_1^o=[6, 2]\t, \bold{x}_2^o=[6,14]\t$ respectively with the same radius $r_1^o=r_2^o=5.5$ to occupy most of the passing space and leave a narrow gap deliberately. The initial positions of robots and desired points are set satisfying \textbf{A2, A3, A4}. 
As shown in Fig.~\ref{in_out}~(a), even though eight robots governed by the CATE algorithm \eqref{resilient_prioritization_optimization} are congestive at the entrance of the narrow gap (magenta vehicles), they can get rid of the congestion and finally, form a platoon (yellow vehicles)
by flexibly reallocating their desired points with distinct ordering sequences, which verifies the strong feasibility of the CATE algorithm \eqref{resilient_prioritization_optimization}. However, one observes in Fig.~\ref{in_out} (b) that eight robots governed by the fixed-ordering methods fail to pass the gap and are trapped in deadlocks (magenta vehicles). Moreover, Fig.~\ref{in_out} (c) illustrates the evolution of the reallocated ordering sequences $s[i], i\in\mathbb{Z}_1^8,$ for robot $i, i\in\mathcal V,$ satisfying $s[i]=k, \alpha_{i,k}=1$,  which converges to $s[1]=4, s[2]=5, s[3]=6, s[4]=2, s[5]=1, s[6]=3, s[7]=8, s[8]=7$ such that $\lim_{t\rightarrow\infty} \bold{x}_i(t)-\bold{x}_{s[i]}^d(t)=0$. Note that the final subscripts $s[i], i\in\mathbb{Z}_1^8$ are consistent with ones in Fig.~\ref{in_out} (a).

\begin{figure}[!htb]
\centering
\includegraphics[width=6.0cm]{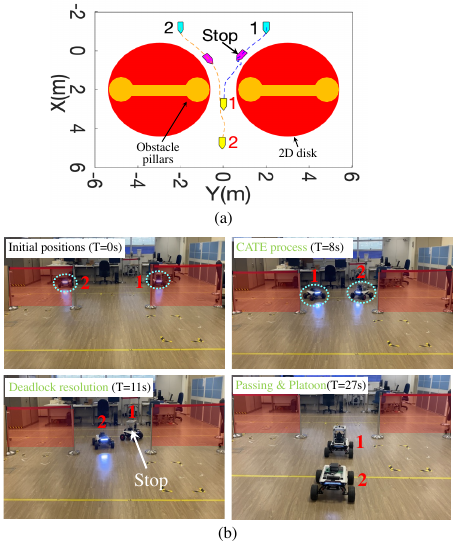}
\caption{\textbf{Second experiment: Mimic of the narrow-gap passing in Fig.~\ref{in_out}.}
(a) Trajectories of two AMRs passing through a narrow gap between two big disk obstacles, where the symbols have the same meaning as in Fig.~\ref{in_out}.
(b) Experimental snapshots where AMR $1$ stops at the entrance of the narrow gap to let AMR $2$ pass first at $ t=11$ s. The stainless steel pillars and red translucent rectangles mimic the virtual obstacles on both sides of the gap. 
}
\label{exp_inout}
\end{figure}

\subsection{Experiments with Two AMRs}
In this subsection, we employ two AMRs (SCOUTMINI\footnote{SCOUTMINI: \href{https://global.agilex.ai/products/scout-mini}{https://global.agilex.ai/products/scout-mini}}) to validate Theorem~\ref{theo_CATE_obstacle}. As shown in Fig.~\ref{3D_Map} (a), we establish a multi-AMR workspace, which consists of three different obstacles of a static vehicle, a workbench, and six cones occupying the way. As shown in Fig.~\ref{3D_Map} (b), each AMR is $0.615$m in length and $0.58$m in width and equipped with an onboard computer: NVIDIA Jetson Xavier NX3\footnote{NVIDIA Jetson Xavier NX3: \href{https://www.nvidia.com/en-us/autonomous-machines/embedded-systems/jetson-xavier-nx/}{https://www.nvidia.com/en-us/autonomous-machines/embedded-systems/jetson-xavier-nx/}} and a Livox Lidar: MID-360\footnote{Livox Lidar MID-360: \href{https://www.livoxtech.com/mid-360}{https://www.livoxtech.com/mid-360}} or AVIA\footnote{Livox Lidar AVIA: \href{https://www.livoxtech.com/avia}{https://www.livoxtech.com/avia}}. However, since the AMR is not equipped with any localization equipment (such as an on-board GPS or off-board motion capture cameras), we utilize the Lidar with the popular FAST-LIO SLAM algorithm~\cite{xu2022fast} to build a 3D point map (see Fig.~\ref{3D_Map} (c)) and localize itself. Then, we get a 2D map from the projection of the octomap using octotree \cite{hornung2013octomap} (i.e., red circles in Fig.~\ref{3D_Map} (d)).  Fig.~\ref{3D_Map} (e) illustrates the detailed multi-AMR navigation system.  
Due to the limited workspace, the velocity limit in~\eqref{robot_dynamic} is set to be $u_{\max}=0.2$m/s, where 
the sensing and collision radii are selected to be $R=1.5$m and $r=1$m by \textbf{A5}. The weights $b, c$ in~\eqref{resilient_prioritization_optimization} are set to be $b=10000, c=100$, where the penalty parameter is set to be $\varpi=1000$. 
The design of the distributed estimators $\widehat{v}_i$ for $v_d$ in~\eqref{resilient_prioritization_optimization} is similar to Section~\ref{algo_simulations} \cite{hong2006tracking}.

In the first experiment (see Fig.~\ref{experiments_benchmark}), we set the initial positions of AMRs to be $\bold{x}_1=[0.5, 0.6]\t$m, $\bold{x}_2=[2.5,3.1]\t$m which satisfy \textbf{A3-A4}. Due to the limited working space, it follows from \textbf{A2} that the initial positions of the desired points are set to be $\bold{x}_1^{d}(0)=[5.0, 0.0]\t$m, $\bold{x}_2^d(0)=[8.0, 0.0]\t$m with a constant velocity $\bold{v}_d=[0.1, 0]\t$m/s. Fig.~\ref{experiments_benchmark} illustrates the trajectories and snapshots of the four experimental cases of the {\it MPCM} platoon using four different techniques. In particular, it is observed in Figs.~\ref{experiments_benchmark} (b) (d) that two AMRs governed by DAPT+ORCA and CAPT+ORCA from the same initial positions (blue vessls) will cause infeasible failure due to the occupation of obstacles, which is similar to the simulation results in Fig.~\ref{failure}. Since there is yet no method to achieve {\it MPCM navigation} in obstacle environments directly, the main reason for the failure is that two separate optimization methods, namely DAPT (CAPT) and ORCA may cause conflicts in their solution spaces, resulting in no solution. 
For more details, please refer to Remark~\ref{failure_reason}. As for Fig.~\ref{experiments_benchmark}~(a), although two AMRs governed by FOTE can temporarily avoid obstacles in the beginning, they collide with objects again. Comparing Figs.~\ref{experiments_benchmark} (a) and (d) together, it is observed that two AMRs governed by the proposed CATE algorithm~\eqref{resilient_prioritization_optimization} can successfully form the platoon, which thus show the superiority of  CATE algorithm~\eqref{resilient_prioritization_optimization} in the obstacle environments.

In the second experiment (see Fig.~\ref{exp_inout}), to demonstrate the feasibility of the CATE algorithm \eqref{resilient_prioritization_optimization} in practice, we mimic the fourth simulations of Fig.~\ref{in_out} by positioning two disk obstacles to establish a narrow gap for two AMRs. Precisely,  the centers and radii of obstacles are set to be $\bold{x}_1^o=[2, -3]\t$m, $\bold{x}_2^o=[2, 3]\t$m, $r_1^o=2.25$m, $r_2^o=2.25$m, respectively.
Initially, it follows from \textbf{A3, A4} that two AMRs are placed at symmetric positions, i.e., $\bold{x}_1(0)=[-1.2, 2]\t$m, $\bold{x}_2(0)=[-1.2, -2]\t$m, which deliberately make two AMRs trapped in a deadlock. Fig.~\ref{exp_inout}~(a) illustrates that two AMRs pass through the narrow gap, where AMR $1$ stops at the entrance to let AMR $2$ pass first. Then, the deadlock is eliminated and a successful platoon is formed. Moreover, to illustrate the narrow-gap passing experiments, Fig.~\ref{exp_inout} (b) depicts four experimental snapshots, where AMR $1$ stops at the gap at $t=11$s and two AMRs form a platoon with an ordering sequence $\{2,1\}$ at $t=27$s. The feasibility of the CATE algorithm~\eqref{resilient_prioritization_optimization} is validated.

\begin{remark}
\label{failure_reason}
For the failure behaviors of the previous three baseline methods in obstacle environments, the main reasons are given below. (i) For the FOTE algorithm, despite that the success rate is relatively high in Fig.~\ref{success_rate}, some failures may still occur due to deadlocks between the attraction from predefined robot-point pairs and the repulsion from robot-robot avoidance, where the illustrative failure example is given in Fig.~\ref{failure} (a).
(ii) For the DAPT+ORCA method,  according to Section~\ref{base_line}, unfortunately, there is yet no method to achieve the {\it MPCM navigation} in an obstacle environment directly, which thus motivates the combination of the two separated optimization methods together, namely, DAPT+ORCA. Here, DAPT accounts for the optimization of decoupled robot-point assignment and robot-robot avoidance to generate non-intersecting paths in obstacle-free environments, whereas ORCA accounts for the optimization of robot-obstacle avoidance. However, the simple combination of such two kinds of optimization may cause conflicts in their solution spaces, resulting in no solution, such as Figs.~\ref{failure} (b) and \eqref{experiments_benchmark} (b), respectively. 
(iii) Analogous to the DAPT+ORCA method, for the CAPT+ORCA method, where the CAPT optimization accounts for the concurrent allocation and planning to generate non-intersecting paths in obstacle-free environments, and the ORCA accounts for additional robot-obstacle avoidance. However, such two kinds of optimization may also result in infeasible solutions due to the optimization conflicts, as shown in Figs.~\ref{failure} (c) and \eqref{experiments_benchmark} (c), respectively.
\end{remark}

\section{conclusion}
\label{sec_conclusion}

\subsection{Summary}
In this paper, we have analyzed the core challenges to achieve the {\it MPCM navigation in obstacle environments} and summarized the potential infeasibility and low computational efficiency. Based on these features, we have proposed the CATE algorithm that guarantees feasible solutions for an online-constrained optimization, which can effectively reduce path crossings and trajectory length in obstacle environments. We have conducted extensive statistical simulations and real AMR experiments with comparisons against state-of-the-art baseline methods to validate its effectiveness,  adaptability, scalability, and feasibility in obstacle environments.

\subsection{Limitations and Future Work}
\label{subsec_limit_future}
Despite our effort and exploration for the {\it MPCM navigation in obstacle environments}, our work still has some limitations, which require further investigation. Particularly, (i)  The proposed CATE algorithm is essentially an MIQP solution, where the computational cost increases exponentially with the number of robots, making it unsuitable for large-scale scenarios. Future work will consider the fully distributed algorithm and include a theoretical analysis.
 (ii) This paper only considers the simplest smooth circle to cover the irregular obstacles for the usage of CBF constraints, which may sacrifice additional space. Future work will consider more efficient yet non-smooth obstacle boundaries, such as polygons. (iii) This paper only considers the simple integrator dynamics for robots, which are not practical in real applications. Future work will further investigate the complicated underactuated robot dynamics. (iv) This work only considers the scenario of static and moving obstacles at known velocities. Future work will consider the complex {\it MPCM navigation} with obstacles of random velocities.
(iv) The proposed CATE algorithm cannot be utilized to directly determine the hardness of Problem~\ref{label_pro} under the additional constraints of robot dynamics, moving obstacles, varying optimization objectives, and 3D environments. Future work will investigate the formal analysis of the problem's hardness that accounts for these additional constraints.

\section*{Acknowledgement}
We would like to thank Prof. Kiril Solovey from Israel Institute of Technology for the valuable discussions on the hardness of the problem in unlabeled multi-robot motion planning.

\bibliographystyle{IEEEtran}
\bibliography{IEEEabrv,ref}

\begin{thebibliography}{10}
\providecommand{\url}[1]{#1}
\csname url@samestyle\endcsname
\providecommand{\newblock}{\relax}
\providecommand{\bibinfo}[2]{#2}
\providecommand{\BIBentrySTDinterwordspacing}{\spaceskip=0pt\relax}
\providecommand{\BIBentryALTinterwordstretchfactor}{4}
\providecommand{\BIBentryALTinterwordspacing}{\spaceskip=\fontdimen2\font plus
\BIBentryALTinterwordstretchfactor\fontdimen3\font minus
  \fontdimen4\font\relax}
\providecommand{\BIBforeignlanguage}[2]{{%
\expandafter\ifx\csname l@#1\endcsname\relax
\typeout{** WARNING: IEEEtran.bst: No hyphenation pattern has been}%
\typeout{** loaded for the language `#1'. Using the pattern for}%
\typeout{** the default language instead.}%
\else
\language=\csname l@#1\endcsname
\fi
#2}}
\providecommand{\BIBdecl}{\relax}
\BIBdecl

\bibitem{yao2021distributed}
W.~Yao, H.~G. de~Marina, Z.~Sun, and M.~Cao, ``Distributed coordinated path
  following using guiding vector fields,'' in \emph{Proceeding of International
  Conference on Robotics and Automation (ICRA)}, 2021, pp. 10\,030--10\,037.

\bibitem{hu2020multiple}
B.-B. Hu, H.-T. Zhang, and J.~Wang, ``Multiple-target surrounding and collision
  avoidance with second-order nonlinear multiagent systems,'' \emph{IEEE
  Transactions on Industrial Electronics}, vol.~68, no.~8, pp. 7454--7463,
  2020.

\bibitem{hang2021cooperative}
P.~Hang, C.~Lv, C.~Huang, Y.~Xing, and Z.~Hu, ``Cooperative decision making of
  connected automated vehicles at multi-lane merging zone: A coalitional game
  approach,'' \emph{IEEE Transactions on Intelligent Transportation Systems},
  vol.~23, no.~4, pp. 3829--3841, 2021.

\bibitem{tanner2005towards}
H.~G. Tanner and A.~Kumar, ``Towards decentralization of multi-robot navigation
  functions,'' in \emph{Proceedings of the International Conference on Robotics
  and Automation (ICRA)}, 2005, pp. 4132--4137.

\bibitem{fan2020distributed}
T.~Fan, P.~Long, W.~Liu, and J.~Pan, ``Distributed multi-robot collision
  avoidance via deep reinforcement learning for navigation in complex
  scenarios,'' \emph{The International Journal of Robotics Research}, vol.~39,
  no.~7, pp. 856--892, 2020.

\bibitem{hu2021distributed2}
B.-B. Hu, Z.~Chen, and H.-T. Zhang, ``Distributed moving target fencing in a
  regular polygon formation,'' \emph{IEEE Transactions on Control of Network
  Systems}, vol.~9, no.~1, pp. 210--218, 2022.

\bibitem{yao2022guiding}
W.~Yao, H.~G. de~Marina, Z.~Sun, and M.~Cao, ``Guiding vector fields for the
  distributed motion coordination of mobile robots,'' \emph{IEEE Transactions
  on Robotics}, vol.~39, no.~2, pp. 1119--1135, 2023.

\bibitem{alonso2017multi}
J.~Alonso-Mora, S.~Baker, and D.~Rus, ``Multi-robot formation control and
  object transport in dynamic environments via constrained optimization,''
  \emph{The International Journal of Robotics Research}, vol.~36, no.~9, pp.
  1000--1021, 2017.

\bibitem{zhou2022swarm}
X.~Zhou, X.~Wen, Z.~Wang, Y.~Gao, H.~Li, Q.~Wang, T.~Yang, H.~Lu, Y.~Cao, C.~Xu
  \emph{et~al.}, ``Swarm of micro flying robots in the wild,'' \emph{Science
  Robotics}, vol.~7, no.~66, p. eabm5954, 2022.

\bibitem{li2021lifelong}
J.~Li, A.~Tinka, S.~Kiesel, J.~W. Durham, T.~S. Kumar, and S.~Koenig,
  ``Lifelong multi-agent path finding in large-scale warehouses,'' in
  \emph{Proceedings of the AAAI Conference on Artificial Intelligence},
  vol.~35, no.~13, 2021, pp. 11\,272--11\,281.

\bibitem{chen2021integrated}
Z.~Chen, J.~Alonso-Mora, X.~Bai, D.~D. Harabor, and P.~J. Stuckey, ``Integrated
  task assignment and path planning for capacitated multi-agent pickup and
  delivery,'' \emph{IEEE Robotics and Automation Letters}, vol.~6, no.~3, pp.
  5816--5823, 2021.

\bibitem{sun2023mean}
G.~Sun, R.~Zhou, Z.~Ma, Y.~Li, R.~Gro{\ss}, Z.~Chen, and S.~Zhao, ``Mean-shift
  exploration in shape assembly of robot swarms,'' \emph{Nature
  Communications}, vol.~14, no.~1, p. 3476, 2023.

\bibitem{alonso2012image}
J.~Alonso-Mora, A.~Breitenmoser, M.~Rufli, R.~Siegwart, and P.~Beardsley,
  ``Image and animation display with multiple mobile robots,'' \emph{The
  International Journal of Robotics Research}, vol.~31, no.~6, pp. 753--773,
  2012.

\bibitem{lindsey2012construction}
Q.~Lindsey, D.~Mellinger, and V.~Kumar, ``Construction with quadrotor teams,''
  \emph{Autonomous Robots}, vol.~33, no.~3, pp. 323--336, 2012.

\bibitem{turpin2014capt}
M.~Turpin, N.~Michael, and V.~Kumar, ``Capt: Concurrent assignment and planning
  of trajectories for multiple robots,'' \emph{The International Journal of
  Robotics Research}, vol.~33, no.~1, pp. 98--112, 2014.

\bibitem{sabattini2017optimized}
L.~Sabattini, V.~Digani, C.~Secchi, and C.~Fantuzzi, ``Optimized simultaneous
  conflict-free task assignment and path planning for multi-agv systems,'' in
  \emph{Proceeding of International Conference on Intelligent Robots and
  Systems (IROS)}, 2017, pp. 1083--1088.

\bibitem{sakurama2020multi}
K.~Sakurama and H.-S. Ahn, ``Multi-agent coordination over local indexes via
  clique-based distributed assignment,'' \emph{Automatica}, vol. 112, p.
  108670, 2020.

\bibitem{quan2023robust}
L.~Quan, L.~Yin, T.~Zhang, M.~Wang, R.~Wang, S.~Zhong, X.~Zhou, Y.~Cao, C.~Xu,
  and F.~Gao, ``Robust and efficient trajectory planning for formation flight
  in dense environments,'' \emph{IEEE Transactions on Robotics}, in press, doi:
  10.1109/TRO.2023.3301295, 2023.

\bibitem{wang2020shape}
H.~Wang and M.~Rubenstein, ``Shape formation in homogeneous swarms using local
  task swapping,'' \emph{IEEE Transactions on Robotics}, vol.~36, no.~3, pp.
  597--612, 2020.

\bibitem{liu2021integrated}
Z.~Liu, H.~Wei, H.~Wang, H.~Li, and H.~Wang, ``Integrated task allocation and
  path coordination for large-scale robot networks with uncertainties,''
  \emph{IEEE Transactions on Automation Science and Engineering}, vol.~19,
  no.~4, pp. 2750--2761, 2021.

\bibitem{rubenstein2014programmable}
M.~Rubenstein, A.~Cornejo, and R.~Nagpal, ``Programmable self-assembly in a
  thousand-robot swarm,'' \emph{Science}, vol. 345, no. 6198, pp. 795--799,
  2014.

\bibitem{slavkov2018morphogenesis}
I.~Slavkov, D.~Carrillo-Zapata, N.~Carranza, X.~Diego, F.~Jansson, J.~Kaandorp,
  S.~Hauert, and J.~Sharpe, ``Morphogenesis in robot swarms,'' \emph{Science
  Robotics}, vol.~3, no.~25, p. eaau9178, 2018.

\bibitem{reyes2014flocking}
L.~A.~V. Reyes and H.~G. Tanner, ``Flocking, formation control, and path
  following for a group of mobile robots,'' \emph{IEEE Transactions on Control
  Systems Technology}, vol.~23, no.~4, pp. 1268--1282, 2014.

\bibitem{hu2021bearing}
B.-B. Hu and H.-T. Zhang, ``Bearing-only motional target-surrounding control
  for multiple unmanned surface vessels,'' \emph{IEEE Transactions on
  Industrial Electronics}, vol.~69, no.~4, pp. 3988--3997, 2021.

\bibitem{hu2023cooperative}
B.-B. Hu, H.-T. Zhang, and Y.~Shi, ``Cooperative label-free moving target
  fencing for second-order multi-agent systems with rigid formation,''
  \emph{Automatica}, vol. 148, p. 110788, 2023.

\bibitem{han2020cooperative}
R.~Han, S.~Chen, and Q.~Hao, ``Cooperative multi-robot navigation in dynamic
  environment with deep reinforcement learning,'' in \emph{Proceeding of
  International Conference on Robotics and Automation (ICRA)}, 2020, pp.
  448--454.

\bibitem{huspontaneous2023}
B.-B. Hu, H.-T. Zhang, W.~Yao, J.~Ding, and M.~Cao, ``Spontaneous-ordering
  platoon control for multi-robot path navigation using guiding vector
  fields,'' \emph{IEEE Transactions on Robotics}, vol.~39, no.~4, pp.
  2654--2668, 2023.

\bibitem{hu2024coordinated}
B.-B. Hu, H.-T. Zhang, W.~Yao, Z.~Sun, and M.~Cao, ``Coordinated guiding vector
  field design for ordering-flexible multi-robot surface navigation,''
  \emph{IEEE Transactions on Automatic Control}, vol.~69, no.~7, pp.
  4805--4812, 2024.

\bibitem{notomista2019constraint}
G.~Notomista and M.~Egerstedt, ``Constraint-driven coordinated control of
  multi-robot systems,'' in \emph{Proceeding of American Control Conference
  (ACC)}, 2019, pp. 1990--1996.

\bibitem{notomista2021resilient}
G.~Notomista, S.~Mayya, Y.~Emam, C.~Kroninger, A.~Bohannon, S.~Hutchinson, and
  M.~Egerstedt, ``A resilient and energy-aware task allocation framework for
  heterogeneous multirobot systems,'' \emph{IEEE Transactions on Robotics},
  vol.~38, no.~1, pp. 159--179, 2021.

\bibitem{hu2021distributed1}
B.-B. Hu, H.-T. Zhang, B.~Liu, H.~Meng, and G.~Chen, ``Distributed surrounding
  control of multiple unmanned surface vessels with varying interconnection
  topologies,'' \emph{IEEE Transactions on Control Systems Technology},
  vol.~30, no.~1, pp. 400--407, 2021.

\bibitem{chen2019cooperative}
Z.~Chen, ``A cooperative target-fencing protocol of multiple vehicles,''
  \emph{Automatica}, vol. 107, pp. 591--594, 2019.

\bibitem{hu2024fsoc}
B.-B. Hu, W.~Yao, H.~Wei, and C.~Lv, ``Fsoc: Flexible subgrouping and ordering
  of multi-robot coordination for convoying multiple scattered targets,''
  \emph{IEEE Transactions on Control of Network Systems}, in press, doi:
  10.1109/TCNS.2024.3393643, 2024.

\bibitem{wang2017safety}
L.~Wang, A.~D. Ames, and M.~Egerstedt, ``Safety barrier certificates for
  collisions-free multirobot systems,'' \emph{IEEE Transactions on Robotics},
  vol.~33, no.~3, pp. 661--674, 2017.

\bibitem{hu2024ordering}
B.-B. Hu, Y.~Zhou, H.~Wei, Y.~Wang, and C.~Lv, ``Ordering-flexible multi-robot
  coordination for moving target convoying using long-term task execution,''
  \emph{Automatica}, vol. 163, p. 111558, 2024.

\bibitem{notomista2019optimal}
G.~Notomista, S.~Mayya, S.~Hutchinson, and M.~Egerstedt, ``An optimal task
  allocation strategy for heterogeneous multi-robot systems,'' in
  \emph{Proceeding of European Control Conference (ECC)}, 2019, pp. 2071--2076.

\bibitem{ames2016control}
A.~D. Ames, X.~Xu, J.~W. Grizzle, and P.~Tabuada, ``Control barrier function
  based quadratic programs for safety critical systems,'' \emph{IEEE
  Transactions on Automatic Control}, vol.~62, no.~8, pp. 3861--3876, 2016.

\bibitem{xu2015robustness}
X.~Xu, P.~Tabuada, J.~W. Grizzle, and A.~D. Ames, ``Robustness of control
  barrier functions for safety critical control,'' \emph{IFAC-PapersOnLine},
  vol.~48, no.~27, pp. 54--61, 2015.

\bibitem{emam2021data}
Y.~Emam, G.~Notomista, P.~Glotfelter, and M.~Egerstedt, ``Data-driven adaptive
  task allocation for heterogeneous multi-robot teams using robust control
  barrier functions,'' in \emph{Proceeding of IEEE International Conference on
  Robotics and Automation (ICRA)}, 2021, pp. 9124--9130.

\bibitem{yao2021singularity}
W.~Yao, H.~G. de~Marina, B.~Lin, and M.~Cao, ``Singularity-free guiding vector
  field for robot navigation,'' \emph{IEEE Transactions on Robotics}, vol.~37,
  no.~4, pp. 1206--1221, 2021.

\bibitem{hong2006tracking}
Y.~Hong, J.~Hu, and L.~Gao, ``Tracking control for multi-agent consensus with
  an active leader and variable topology,'' \emph{Automatica}, vol.~42, no.~7,
  pp. 1177--1182, 2006.

\bibitem{ben2001lectures}
A.~Ben-Tal and A.~Nemirovski, \emph{Lectures on modern convex optimization:
  analysis, algorithms, and engineering applications}.\hskip 1em plus 0.5em
  minus 0.4em\relax SIAM, 2001.

\bibitem{gurobi}
\BIBentryALTinterwordspacing
{Gurobi Optimization, LLC}, ``{Gurobi Optimizer Reference Manual},'' 2023.
  [Online]. Available: \url{https://www.gurobi.com}
\BIBentrySTDinterwordspacing

\bibitem{mahulea2017robot}
C.~Mahulea and M.~Kloetzer, ``Robot planning based on boolean specifications
  using petri net models,'' \emph{IEEE Transactions on Automatic Control},
  vol.~63, no.~7, pp. 2218--2225, 2017.

\bibitem{alonso2019distributed}
J.~Alonso-Mora, E.~Montijano, T.~N{\"a}geli, O.~Hilliges, M.~Schwager, and
  D.~Rus, ``Distributed multi-robot formation control in dynamic
  environments,'' \emph{Autonomous Robots}, vol.~43, pp. 1079--1100, 2019.

\bibitem{hu2021decentralized}
J.~Hu, P.~Bhowmick, I.~Jang, F.~Arvin, and A.~Lanzon, ``A decentralized cluster
  formation containment framework for multirobot systems,'' \emph{IEEE
  Transactions on Robotics}, vol.~37, no.~6, pp. 1936--1955, 2021.

\bibitem{ma2017overview}
H.~Ma, W.~H{\"o}nig, L.~Cohen, T.~Uras, H.~Xu, T.~S. Kumar, N.~Ayanian, and
  S.~Koenig, ``Overview: A hierarchical framework for plan generation and
  execution in multirobot systems,'' \emph{IEEE Intelligent Systems}, vol.~32,
  no.~6, pp. 6--12, 2017.

\bibitem{gurobi_2}
\BIBentryALTinterwordspacing
{Gurobi Optimization, LLC}, ``{Mixed Integer Programming Basics},'' 2023.
  [Online]. Available:
  \url{https://www.gurobi.com/resources/mixed-integer-programming-mip-a-primer-on-the-basics/}
\BIBentrySTDinterwordspacing

\bibitem{grover2023before}
J.~Grover, C.~Liu, and K.~Sycara, ``The before, during, and after of
  multi-robot deadlock,'' \emph{The International Journal of Robotics
  Research}, vol.~42, no.~6, pp. 317--336, 2023.

\bibitem{solovey2016hardness}
K.~Solovey and D.~Halperin, ``On the hardness of unlabeled multi-robot motion
  planning,'' \emph{The International Journal of Robotics Research}, vol.~35,
  no.~14, pp. 1750--1759, 2016.

\bibitem{brocken2020multi}
T.~Brocken, G.~W. van~der Heijden, I.~Kostitsyna, L.~E. Lo-Wong, and R.~J.
  Surtel, ``Multi-robot motion planning of k-colored discs is pspace-hard,'' in
  \emph{Proceedings of International Conference on Fun with Algorithms}, 2020.

\bibitem{hopcroft1984complexity}
J.~E. Hopcroft, J.~T. Schwartz, and M.~Sharir, ``On the complexity of motion
  planning for multiple independent objects; pspace-hardness of the"
  warehouseman's problem",'' \emph{The international journal of robotics
  research}, vol.~3, no.~4, pp. 76--88, 1984.

\bibitem{yu2012distance}
J.~Yu and M.~LaValle, ``Distance optimal formation control on graphs with a
  tight convergence time guarantee,'' in \emph{Proceedings of Conference on
  Decision and Control (CDC)}.\hskip 1em plus 0.5em minus 0.4em\relax IEEE,
  2012, pp. 4023--4028.

\bibitem{adler2015efficient}
A.~Adler, M.~de~Berg, D.~Halperin, and K.~Solovey, ``Efficient multi-robot
  motion planning for inlabeled discs in simple polygons,'' \emph{IEEE
  Transactions on Automation Science and Engineering}, vol.~12, no.~4, pp.
  1309--1317, 2015.

\bibitem{banyassady2022unlabeled}
B.~Banyassady, M.~de~Berg, K.~Bringmann, K.~Buchin, H.~Fernau, D.~Halperin,
  I.~Kostitsyna, Y.~Okamoto, and S.~Slot, ``Unlabeled multi-robot motion
  planning with tighter separation bounds,'' in \emph{Proceedings of
  International Symposium on Computational Geometry}.\hskip 1em plus 0.5em
  minus 0.4em\relax Schloss Dagstuhl, 2022, pp. 1--16.

\bibitem{solovey2015motion}
K.~Solovey, J.~Yu, O.~Zamir, and D.~Halperin, ``Motion planning for unlabeled
  discs with optimality guarantees,'' in \emph{Proceedings of Robotics: Science
  and Systems Conference (RSS)}.\hskip 1em plus 0.5em minus 0.4em\relax MIT
  Press Journals, 2015.

\bibitem{le2021multi}
D.~Le and E.~Plaku, ``Multi-robot motion planning with unlabeled goals for
  mobile robots with differential constraints,'' in \emph{Proceeding of
  International Conference on Robotics and Automation (ICRA)}, 2021, pp.
  7950--7956.

\bibitem{khalil2002nonlinear}
H.~K. Khalil, \emph{Nonlinear Systems}.\hskip 1em plus 0.5em minus 0.4em\relax
  Upper Saddle River, 2002.

\bibitem{kim1999tracking}
D.-H. Kim and J.-H. Oh, ``Tracking control of a two-wheeled mobile robot using
  input--output linearization,'' \emph{Control Engineering Practice}, vol.~7,
  no.~3, pp. 369--373, 1999.

\bibitem{tnunay2017distributed}
H.~Tnunay, Z.~Li, C.~Wang, and Z.~Ding, ``Distributed collision-free coverage
  control of mobile robots with consensus-based approach,'' in \emph{Proceeding
  of International Conference on Control \& Automation (ICCA)}, 2017, pp.
  678--683.

\bibitem{glotfelter2019hybrid}
P.~Glotfelter, I.~Buckley, and M.~Egerstedt, ``Hybrid nonsmooth barrier
  functions with applications to provably safe and composable collision
  avoidance for robotic systems,'' \emph{IEEE Robotics and Automation Letters},
  vol.~4, no.~2, pp. 1303--1310, 2019.

\bibitem{van2008reciprocal}
J.~Van~den Berg, M.~Lin, and D.~Manocha, ``Reciprocal velocity obstacles for
  real-time multi-agent navigation,'' in \emph{Proceeding of International
  Conference on Robotics and Automation (ICRA)}, 2008, pp. 1928--1935.

\bibitem{hornung2013octomap}
A.~Hornung, K.~M. Wurm, M.~Bennewitz, C.~Stachniss, and W.~Burgard, ``Octomap:
  An efficient probabilistic 3d mapping framework based on octrees,''
  \emph{Autonomous robots}, vol.~34, pp. 189--206, 2013.

\bibitem{xu2022fast}
W.~Xu, Y.~Cai, D.~He, J.~Lin, and F.~Zhang, ``Fast-lio2: Fast direct
  lidar-inertial odometry,'' \emph{IEEE Transactions on Robotics}, vol.~38,
  no.~4, pp. 2053--2073, 2022.

\end{thebibliography}

\end{document}